\documentclass{article}

\usepackage[accepted]{icml2026}

\usepackage{basic}
\usepackage{math}

\usepackage[inline, shortlabels]{enumitem}
\setlist[itemize]{nosep, leftmargin=4mm}

\usepackage[titles]{tocloft}
\newcommand{\Probability}{\mathbb{P}}
\newcommand{\Coupling}{\Pi}

\newcommand{\Values}{V}
\newcommand{\States}{S}
\newcommand{\Actions}{A}
\newcommand{\Observations}{O}
\newcommand{\Representations}{Z}

\newcommand{\AgentStates}{M}
\newcommand{\OptionIndices}{I}

\newcommand{\transition}{\mathrm{t}}
\newcommand{\observation}{\mathrm{o}}

\newcommand{\policy}{\pi}
\newcommand{\encoder}{\phi}
\newcommand{\update}{\mathbin{\triangleright}}

\newcommand{\Bellman}{\mathcal{B}}

\newcommand{\lift}{\lambda}
\DeclareMathOperator*{\aggregator}{\square}
\DeclareMathOperator*{\barycenter}{\odot}
\newcommand{\combinator}{\mathbin{\oplus}}

\newcommand{\pullback}[1]{{#1}^{*}}
\newcommand{\pushforward}[1]{{#1}_{*}}

\newcommand{\closure}{T}

\newcommand{\Moore}{\mathrm{Moore}}
\newcommand{\Markov}{\mathrm{Markov}}

\theoremstyle{plain}
\newtheorem{theorem}{Theorem}[section]
\newtheorem{proposition}[theorem]{Proposition}
\newtheorem{lemma}[theorem]{Lemma}
\newtheorem{corollary}[theorem]{Corollary}
\theoremstyle{definition}
\newtheorem{definition}[theorem]{Definition}

\newtheorem{example}[theorem]{Example}
\theoremstyle{remark}

\crefname{definition}{Def.}{Defs.}

\usepackage[tikz]{mdframed}
\mdfdefinestyle{shade}{
  innerbottommargin=2pt,
  innerrightmargin=2pt,
  innerleftmargin=2pt,
  backgroundcolor=black!3,
  linecolor=black!0,
}
\surroundwithmdframed[style=shade]{definition}
\surroundwithmdframed[style=shade]{example}

\icmltitlerunning{Compositional Behavioral Semantics}

\begin{document}

\twocolumn[
\icmltitle{Compositional Behavioral Semantics\\ for State Abstraction in Reinforcement Learning}

\begin{icmlauthorlist}
\icmlauthor{Yivan Zhang}{utokyo,aip}
\icmlauthor{Ziyan "Ray" Luo}{mila,mcgill}
\icmlauthor{Manuel Baltieri}{araya,usussex}
\end{icmlauthorlist}

\icmlaffiliation{utokyo}{The University of Tokyo}
\icmlaffiliation{aip}{RIKEN AIP}
\icmlaffiliation{mila}{Mila - Quebec Artificial Intelligence Institute}
\icmlaffiliation{mcgill}{McGill University}
\icmlaffiliation{araya}{Araya Inc., Tokyo, Japan}
\icmlaffiliation{usussex}{University of Sussex}

\icmlcorrespondingauthor{Yivan Zhang}{yivan.zhang@k.u-tokyo.ac.jp}
\icmlcorrespondingauthor{Ziyan "Ray" Luo}{ziyan.luo@mail.mcgill.ca}

\icmlkeywords{Reinforcement Learning, Abstraction, Coalgebra}

\vskip 0.3in
]

\printAffiliationsAndNotice{}


\setlength{\abovedisplayskip}{\lineskip + 3pt}
\setlength{\belowdisplayskip}{\lineskip + 3pt}

\begin{abstract}
State abstraction plays a key role in scaling reinforcement learning to complex but structured systems.
In studying such systems, a wide range of behavioral structures have been studied in reinforcement learning, including value functions, invariants, bisimulation relations, and behavioral metrics.
However, a general principle for determining what structures are provably preserved under state abstraction is still lacking.
In this paper, we present a unified framework for defining and analyzing behavioral structures in reinforcement learning.
Our framework provides a compositional way to specify behavioral semantics based on local, one-step descriptions of system dynamics.
Using this framework, we establish results showing how behavioral structures can be safely transferred between abstract and concrete systems.
We further show how to construct quantitative metrics from logical behavioral semantics with soundness guarantees.
Together, these results provide a principled foundation for reasoning about behaviors under state abstraction in reinforcement learning and offer reusable definition and proof principles for a broad class of behavioral structures in reinforcement learning.

\end{abstract}

\section{Introduction}
\emph{Reinforcement learning} (RL) \citep{sutton1998introduction} has been applied to increasingly complex domains such as robotics and finance \citep{kober2013reinforcement, tang2025deep, moody1998performance, liu2024dynamic}, where large or high-dimensional state spaces of environments make learning and planning directly over raw states computationally intractable.
\emph{State abstraction} \citep{abel2022theory} manages this complexity by mapping concrete states to abstract states while preserving decision-relevant behavioral properties \citep{mohan2024structure, echchahed2025survey}.


State abstraction has been studied under a variety of forms, differing in how states are grouped and what aspects of behavior are preserved.
In modern RL, state abstraction is often realized by learning representations that compress high-dimensional observations into latent states for control and prediction \citep{echchahed2025survey}.
A prominent family is based on \emph{bisimulation equivalences and metrics} \citep{dean1997model, givan2003equivalence, ferns2004metrics, ferns2011bisimulation}, originating in the analysis of concurrent systems and system semantics \citep{milner1989communication} and later adapted to RL in many variants \citep{taylor2008bounding, castro2020scalable, zhang2021learning, castro2021mico, tao2025generalized}.
A closely related approach trains encoders to preserve rewards and one-step dynamics, as in \emph{model-irrelevance abstraction} \citep{li2006towards}, \emph{self-predictive abstraction} \citep{schwarzer2021dataefficient, ni2024bridging, voelcker2024when}, among others \citep{franccois2019combined, gelada2019deepmdp}.
Weaker notions, such as \emph{value equivalence} \citep{givan2003equivalence, li2006towards, grimm2020value}, only identify states that yield similar values.
However, in practice, state abstraction methods often combine multiple objectives, making it difficult to isolate individual abstraction requirements \citep{ni2024bridging, luo2025understanding}.


Despite their differences, these state abstraction criteria can be understood by asking which aspects of an interactive system's behavior are meant to be preserved.
This perspective shifts the focus from defining abstractions themselves to understanding their behavioral consequences.
The preserved properties may be unary or binary, logical or quantitative, and exact or approximate.
To study them uniformly, we need precise notions of \emph{behavioral structure} and principled ways to compare behaviors across states and systems.
As a salient example, the \emph{value function} can be viewed as a unary, quantitative behavioral measure that summarizes expected returns over time \citep{sutton1998introduction}.
On the logical side, safe RL uses temporal logic specifications \citep{alshiekh2018safe} and various other constraint formulations \citep{hsu2021safety, yu2022reachability, wachi2024survey} to define unary behavioral requirements, while the bisimulation relation \citep{dean1997model} is a canonical example of a binary behavioral structure.
On the quantitative side, safety value functions are unary behavioral quantities for measuring risk \citep{wachi2024survey}, while various behavioral metrics are binary structures central to state abstraction \citep{luo2025understanding}.
Despite their shared goal of characterizing behavior via dynamics, these notions remain fragmented and come with limited guarantees for how they transfer under state abstraction.


This fragmentation also manifests in a recurring pattern in RL theory, where even mild variants of existing behavioral structures are analyzed from scratch \citep{taylor2008bounding, castro2020scalable, zhang2021learning}, frequently following the same proof template \citep{ferns2004metrics}.
This pattern suggests the existence of shared definition and proof principles, but the RL literature lacks a unified formal toolkit that makes these arguments reusable across behavioral structures.
Our goal is to make such shared principles explicit by providing a compositional framework for defining behavioral structures and analyzing their transfer through state abstraction.



In this work, we present a compositional framework for defining and analyzing behavioral structures in RL through the lens of \emph{coalgebras} \citep{rutten2000universal, jacobs2016introduction}, which provide a uniform language for modeling transition systems and their behaviors.
The main contributions are threefold:
\begin{itemize}
\item \textbf{A compositional definition of behavioral structures.}
We introduce a uniform definition of behavioral structures that subsumes invariants, value functions, and bisimulation relations or metrics as instances of a shared compositional construction, replacing many case-specific definitions with a common formal recipe.

\item \textbf{General transfer guarantees for state abstraction.}
We model state abstraction as \emph{coalgebra homomorphisms} and prove that behavioral structures transport soundly along such abstractions via the pullback and pushforward operations between the concrete and abstract systems.

\item \textbf{A general bridge from logical to quantitative semantics.}
Under an explicit algebraic compatibility condition, we show how quantitative metrics can be constructed from logical behavioral semantics systematically.
\end{itemize}

Concretely, \cref{sec:systems} introduces a coalgebraic framework for \emph{behavioral systems}, specifying the systems considered in this work.
\cref{sec:semantics} develops the core technical tools consisting of \emph{bundles}, \emph{liftings}, and the \emph{pullback} and \emph{pushforward} operations, which we use to formally define and analyze \emph{behavioral structures}.
\cref{sec:abstraction} instantiates the abstract framework with some existing semantics in RL, illustrating its generality and expressiveness.
A rigorous description of the theory is deferred to the appendix.

\newpage

\section{Behavioral Systems}
\label{sec:systems}
In this section, we introduce a coalgebraic framework for modeling behavioral systems.
We formalize state transition systems as \emph{coalgebras} in \cref{ssec:state-transition-system}, define state abstraction as a \emph{homomorphism} between systems of the same type in \cref{ssec:homomorphism}, and formulate policy-dependent transition as a \emph{natural transformation} between systems of different types in \cref{ssec:transformation}.
We will use two system types throughout: stochastic Moore machines for environments, and hidden Markov models for the policy-conditioned systems obtained by composing an environment with a policy.


\subsection{State transition system}
\label{ssec:state-transition-system}

An RL \emph{environment} is usually modeled as a Markov decision process (MDP) \citep{puterman1994markov} or a partially observable MDP (POMDP) \citep{kaelbling1998planning} with potentially infinite spaces of \emph{states} $\States$, \emph{actions} $\Actions$, and \emph{observations} $\Observations$, and a \emph{transition} map $\transition: \States \times \Actions \to \Probability{\States}$,\footnote{%
For a set $X$, $\Probability{X}$ denotes the set of probability distributions with finite support on $X$, and a stochastic map $p: X \to \Probability{Y}$ can be equivalently viewed as a function $X \times Y \to [0, 1]$ that maps each pair of input and output to the probability of the output given the input.
We also use the usual probability notation $p(y | x) \in \Probability{Y}$ to denote the output of a map $p: X \to \Probability{Y}$ at the input $x \in X$.
}
an \emph{observation} map $\observation: \States \to \Observations$, and an optional notion of initialization and termination.
The observation space $\Observations$ can be the space $\States \times \R$ of the states themselves and scalar rewards for an MDP.
An RL \emph{agent} interacts with the environment by choosing actions based on observations to maximize an optimality criterion, e.g., the discounted sum of rewards \citep{sutton1998introduction}.

In this paper, we work with the same ingredients, but with an explicit \emph{input-output interface} between the environment and agent, which allows us to analyze the one-step behaviors of each component separately and then combine them to understand the overall system \emph{compositionally}.
This focus on explicit interface specification and composition aligns with recent research on agents and RL \citep{abel2023definition, mohan2024structure, abel2025plasticity, rosas2025ai, boyd2025monoliths}.

A core concept in the coalgebraic framework is that of a \emph{functor} \citep{maclane1978categories}, which is informally defined as follows (see \cref{ssec:category-theory} for a rigorous definition):

\begin{definition}[Functor]
A \emph{functor} $F$ is a map assigning to each set $X$ a set $FX$ and to each map $f: X \to Y$ a map $Ff: FX \to FY$, preserving identity and composition.
\end{definition}

A functor is a \say{type constructor} that defines the system type.
Given a functor $F$, we define state transition systems as \emph{$F$-coalgebras} \citep{rutten2000universal, jacobs2016introduction}:

\begin{definition}[System]
A \emph{state transition system} is an \emph{$F$-coalgebra}, i.e., a set $X$ with a map $t_X: X \to FX$.
\end{definition}

An $F$-coalgebra $t_X$ describes a specific one-step behavior of type $F$ on the state space $X$.
Following \citet{rutten2000universal}, we use \emph{coalgebra} and \emph{system} synonymously.

Next, we introduce two concrete examples commonly used in RL to instantiate the above definitions.


First, we focus solely on the environment and model its dynamics as a \emph{stochastic Moore machine} \citep{rutten2000universal, silva2013generalizing}.\footnote{%
We choose \say{stochastic transition and deterministic action-independent observation} for simplicity.
More complex systems such as \say{stochastic and correlated transition and observation with termination} can be modeled coalgebraically in a similar way \citep{silva2013generalizing, jacobs2016introduction}.
See \cref{ssec:algebra-coalgebra} for details.
}
Informally, a Moore machine is a state machine whose current state emits an output while the input determines the next-state transition.
In our RL setting, the input is an action and the output is an observation.
We can specify the system type via a functor and define a transition system as follows:

\begin{example}[Stochastic Moore machine functor]
\label{ex:moore-functor}
The functor $F_\Moore$ for \emph{stochastic Moore machines} is given by
\begin{align}
F_\Moore X
\defeq{}&
(\Probability{X})^\Actions \times \Observations,
\\
F_\Moore f
\defeq{}&
(\Probability{f})^\Actions \times \id_\Observations,
\end{align}
where $\Probability{X}$ is the set of probability distributions with finite support on $X$, and $\Probability{f}: \Probability{X} \to \Probability{Y}$ is the pushforward of distributions on $X$ through $f$;
$(\Probability{X})^\Actions$ is the set of maps from $\Actions$ to $\Probability{X}$, and $(\Probability{f})^\Actions: (\Probability{X})^\Actions \to (\Probability{Y})^\Actions$ is the postcomposition of maps from $\Actions$ to $\Probability{X}$ with $\Probability{f}$;
$\id_\Observations$ is the identity map.
In other words, $F_\Moore X$ has elements $(p: \Actions \to \Probability{X}, o \in \Observations)$;
$F_\Moore f: F_\Moore X \to F_\Moore Y$ maps $(p, o)$ to $(\pushforward{f}p, o)$, where $\pushforward{f}p: \Actions \xto{p} \Probability{X} \xto{\Probability{f}} \Probability{Y}$ maps each action $a \in \Actions$ to the pushforward $\pushforward{f}p(a) \in \Probability{Y}$ of the distribution $p(a) \in \Probability{X}$ through $f$.
\end{example}

\begin{example}[Stochastic Moore machine]
\label{ex:moore}
A \emph{stochastic Moore machine} is a system on a set $X$ of type $F_\Moore$ in \cref{ex:moore-functor}:
\begin{equation}
\setlength{\arraycolsep}{1pt}
\begin{array}{c c c ccc}
  \angles{\transition_X, \observation_X}:
& X
& \to
& (\Probability{X})^\Actions
& \times
& \Observations
\\
& x
& \mapsto
& \big( \anon{a}{\transition_X(x' | x, a)}
& ,
& \observation_X(x) \big)
\end{array}
\end{equation}
where $\transition_X: X \to (\Probability{X})^\Actions$ is the \emph{transition} map, mapping each state to a map from actions to next state distributions, and $\observation_X: X \to \Observations$ is the \emph{observation} map, mapping each state to an observation.\footnotemark
\end{example}

\footnotetext{%
For two functions $f: C \to A$ and $g: C \to B$, their \emph{pairing} $\angles{f, g}: C \to A \times B$ is the unique function that applies these two functions to the same input, mapping an input $c \in C$ to a pair $(f(c), g(c)) \in A \times B$ of outputs.

Note that the transition map $\transition_X: X \to (\Probability{X})^\Actions$ is isomorphic to a binary map $\transition_X: X \times \Actions \to \Probability{X}$ via \emph{currying}, which, with slight abuse of notation, we also denote by the same symbol.
}

The functor $F_\Moore$ captures the \emph{input-output interface} of an RL environment, where the input is an action from $\Actions$ and the output is an observation from $\Observations$ (cf.~\citet{abel2025plasticity}).
It also captures the \emph{dependency structure} between the next state distribution and the current state and action, while the observation only depends on the current state.
It does not include an optimality criterion (e.g., the discount factor for reward aggregation), which we believe should be an agent-side design, nor the episode initialization and termination, which appear in some formulations of MDPs or POMDPs \citep{puterman1994markov, kaelbling1998planning}.


Next, we consider a \emph{hidden Markov model} \citep{rabiner1989tutorial}, a system without inputs:

\begin{example}[Hidden Markov model functor]
\label{ex:hmm-functor}
The functor $F_\Markov$ for \emph{hidden Markov models} is given by
\begin{align}
F_\Markov X &\defeq \Probability{X} \times \Observations,
\\
F_\Markov f &\defeq \Probability{f} \times \id_\Observations.
\end{align}
\end{example}

Similar to \cref{ex:moore}, a hidden Markov model is a system of type $F_\Markov$ in \cref{ex:hmm-functor}, except that the transition map $\transition_X: X \to \Probability{X}$ does not depend on actions.
We omit the explicit definition here for brevity.
In RL, composing an environment modeled as a stochastic Moore machine with an agent results in a closed-loop system parameterized by the policy, which can be modeled as a hidden Markov model.
We discuss this composition in \cref{ssec:transformation}.


\subsection{Homomorphism between systems of the same type}
\label{ssec:homomorphism}

A central and deliberately strong notion of state abstraction is to preserve the dynamics of the original system in the abstract system.
We show that this requirement can be formalized as a \emph{coalgebra homomorphism} \citep{rutten2000universal, jacobs2016introduction} between two systems of the same type:

\begin{definition}[Coalgebra homomorphism]
\label{def:coalgebra-homomorphism}
Given two $F$-coalgebras $t_X: X \to FX$ and $t_Y: Y \to FY$, a map $f: X \to Y$ is an \emph{$F$-coalgebra homomorphism} if the following diagram commutes:
\begin{subequations}
\begin{equation}
\label{eq:coalgebra-homomorphism-diagram}
\begin{tikzcd}
X
\arrow[r, "f"]
\arrow[d, "t_X"']
&
Y
\arrow[d, "t_Y"]
\\
FX
\arrow[r, "Ff"]
&
FY
\end{tikzcd}
\end{equation}
which means the following equation holds:
\begin{equation}
\label{eq:coalgebra-homomorphism-morphism}
t_Y \compL f = Ff \compL t_X.
\end{equation}
\end{subequations}
\end{definition}

A homomorphism is a dynamic-preserving map between two systems without changing their input-output interfaces.
For stochastic Moore machines in \cref{ex:moore}, \cref{def:coalgebra-homomorphism} is instantiated as follows:
\begin{subequations}
\label{eq:moore-homomorphism}
\begin{equation}
\label{eq:moore-homomorphism-diagram}
\begin{tikzcd}[column sep=6em]
X
\arrow[r, "f"]
\arrow[d, "\angles{\transition_X, \observation_X}"']
&
Y
\arrow[d, "\angles{\transition_Y, \observation_Y}"]
\\
(\Probability{X})^\Actions \times \Observations
\arrow[r, "(\Probability{f})^\Actions \times \id_\Observations"]
&
(\Probability{Y})^\Actions \times \Observations
\end{tikzcd}
\end{equation}
\begin{equation}
\label{eq:moore-homomorphism-morphism}
\angles{\transition_Y, \observation_Y} \compL f
=
\parens*{(\Probability{f})^\Actions \times \id_\Observations} \compL \angles{\transition_X, \observation_X},
\end{equation}
which is further equivalent to the following two equations:\footnote{The domains of equality predicates are explicitly subscripted.}
\stepcounter{equation}
\begin{equation}
\tag{\theequation .i}
\label{eq:moore-homomorphism-transition}
\forall x \in X.\;
\forall a \in \Actions.\;
\transition_Y(f(x), a)
=_{\Probability Y}
\pushforward{f}\transition_X(x, a),
\end{equation}
\begin{equation}
\tag{\theequation .ii}
\label{eq:moore-homomorphism-observation}
\forall x \in X.\;
\observation_Y(f(x))
=_\Observations
\observation_X(x).
\end{equation}
\end{subequations}
We can spell out \cref{eq:moore-homomorphism-transition} using pushforward as follows:
\begin{equation}
\tag{\ref*{eq:moore-homomorphism-transition}$'$}
\forall y' \in Y.\;
\transition_Y(y' | f(x), a)
=
\sum_{x' \in f\inv(y')}
\transition_X(x' | x, a),
\end{equation}
where $f\inv(y') \defeq \set{x' \in X \given f(x') = y'} \subseteq X$ is the inverse image (fiber) of $y' \in Y$ through $f: X \to Y$.


The homomorphism requirements in \cref{eq:moore-homomorphism} have been used for state abstraction in prior works under various other names \citep{li2006towards, franccois2019combined, schwarzer2021dataefficient, subramanian2022approximate}.
Recently, \citet{ni2024bridging, voelcker2024when, luo2025understanding} formalized and analyzed these requirements theoretically and empirically.
Specifically, \cref{eq:moore-homomorphism-transition} is the \emph{next latent state distribution prediction} condition \citep[ZP]{ni2024bridging}, while \cref{eq:moore-homomorphism-observation} is the \emph{reward prediction} condition \citep[RP]{ni2024bridging} when the observation maps are reward functions.
Under the coalgebraic framework, \cref{eq:moore-homomorphism} thus unifies these two properties as a single homomorphism condition.
Such a form of state abstraction can be defined as follows:

\begin{definition}[Homomorphic abstraction]
\label{def:abstraction}
Given a system $\States \to F\States$ on a state space $\States$, an encoder $\encoder: \States \to \Representations$ to a representation space $\Representations$ \emph{abstracts the system} if there exists an abstract system $\Representations \to F\Representations$ on the representation space $\Representations$ such that $\encoder$ is a homomorphism as in \cref{def:coalgebra-homomorphism}.
\end{definition}

Note that this dynamic-preservation is a strong requirement and is not satisfied by all abstractions.
Frequently, we may only want to preserve certain structures or properties of the original system, such as value functions, behavioral metrics, or optimal policies \citep{li2006towards, grimm2020value, luo2025understanding}.
We discuss such behavioral structures and how they transfer through homomorphisms in \cref{sec:semantics}.

A relevant yet different notion is the \emph{MDP homomorphism} \citep{ravindran2001symmetries, ravindran2002model} in \emph{state-action abstraction}, which involves an interface change.
We discuss how our framework can accommodate this notion in \cref{sec:discussion}.


\begin{figure}
\centering
\begin{tikzpicture}[x=1.2cm, y=0.8cm]
\node (Si) at (0, 4) {$\States$};
\node (Sj) at (5, 4) {$\States$};
\node (O) at (2, 2) {$\Observations$};
\node (A) at (3, 2) {$\Actions$};
\node (O') at (2, 0) {$\Observations$};

\node (copyS) [diagonal] at (O |- Si) {};
\node (copyO) [diagonal] at (O |-, 1) {};
\node (observation) [morphism] at (O |-, 3) {$\observation$};
\node (transition) [morphism] at (3, |- Si) {$\transition$};
\node (policy) [morphism] at (3, 1) {$\policy$};

\draw [->] (Si) -- (copyS);
\draw [->] (copyS) -- (transition);
\draw [->] (transition) -- (Sj);

\draw [->] (copyS) -- (observation);
\draw [->] (observation) -- (O);
\draw [->] (O) -- (copyO);
\draw [->] (copyO) -- (O');

\draw [->] (copyO) -- (policy);
\draw [->] (policy) -- (A);
\draw [->] (A) -- (transition);

\draw [dotted] (1, |- O) -- (O) -- (A) -- (4, |- A) -- (4, 4.5) -- (1, 4.5) -- (1, |- O);
\draw [dotted] (0.5, |- O') -- (O') -- (4.5, |- O') -- (4.5, 5) -- (0.5, 5) -- (0.5, |- O');

\node [anchor=south] at (2.5, 4.5) {\small stochastic Moore machine};
\node [anchor=south] at (2.5, 5) {\small hidden Markov model};
\end{tikzpicture}
\caption[Policy-dependent transition]{%
Policy-dependent transition obtained by closing the loop between the observation and action spaces via a policy $\policy: \Observations \to \Actions$.
$\transition: \States \times \Actions \to \Probability{\States}$ is the stochastic transition map of the environment, $\observation: \States \to \Observations$ is the observation map, and dots {\tikz \node[diagonal] {};} represent copying.
}
\label{fig:policy-dependent-transition}
\end{figure}


\subsection{Transformation between systems of different types}
\label{ssec:transformation}

In RL, we often need to relate systems of different types, for example, when an agent interacts with an environment to form a closed-loop system, or when the environment restricts the available actions or transforms its outputs.
We therefore turn to \emph{natural transformations} as a principled way to compare and connect such heterogeneous systems \citep[Chapter~15]{rutten2000universal}:

\begin{definition}[Natural transformation]
\label{def:natural-transformation}
Let $F$ and $G$ be two functors.
A \emph{natural transformation} $\alpha: F \nat G$ is a family of maps $\alpha_X: FX \to GX$ such that for every map $f: X \to Y$, the following diagram commutes:
\begin{subequations}
\begin{equation}
\label{eq:natural-transformation-diagram}
\begin{tikzcd}
FX
\arrow[r, "Ff"]
\arrow[d, "\alpha_X"']
&
FY
\arrow[d, "\alpha_Y"]
\\
GX
\arrow[r, "Gf"]
&
GY
\end{tikzcd}
\end{equation}
which means the following equation holds:
\begin{equation}
\label{eq:natural-transformation-morphism}
\alpha_Y \compL Ff = Gf \compL \alpha_X.
\end{equation}
\end{subequations}
\end{definition}

An important fact is that we can use a natural transformation $\alpha: F \nat G$ to transform an $F$-coalgebra $t_X: X \to FX$ into a $G$-coalgebra $\alpha_X \compL t_X: X \xto{t_X} FX \xto{\alpha_X} GX$ on the same state space $X$, and this transformation preserves homomorphisms.
This transformation can be seen as the composition of the diagrams in \cref{eq:coalgebra-homomorphism-diagram,eq:natural-transformation-diagram}.

A key example is the transformation from an environment dynamic (modeled as a stochastic Moore machine) to an environment-agent interaction (modeled as a hidden Markov model):

\begin{example}[Policy-dependent transition as a natural transformation]
\label{ex:policy-dependent-transition}
Let $\policy: \Observations \to \Actions$ be a policy.
We can define a natural transformation $\alpha^\policy: F_\Moore \nat F_\Markov$ from $F_\Moore$ in \cref{ex:moore-functor} to $F_\Markov$ in \cref{ex:hmm-functor}:
\begin{equation}
\setlength{\arraycolsep}{3pt}
\begin{array}{cccc}
\alpha^\policy_X:
& F_\Moore X
& \to
& F_\Markov X
\\
&
\begin{pmatrix}
p: \Actions \to \Probability{X}\\
o \in \Observations
\end{pmatrix}
& \mapsto
&
\begin{pmatrix}
p(x | \policy(o)) \in \Probability{X}\\
o \in \Observations
\end{pmatrix}
\end{array}
\end{equation}
We can verify that $\alpha^\policy$ satisfies the naturality condition in \cref{def:natural-transformation}.
An illustration is shown in \cref{fig:policy-dependent-transition}.
\end{example}

This construction models the closed-loop system induced by an environment and an agent following a stationary policy, which is useful for describing policy-dependent bisimulation \citep{castro2020scalable} and value equivalence \citep{li2006towards, grimm2020value}.

\section{Behavioral Semantics}
\label{sec:semantics}
Having formalized how state transition systems generate behavior, 
we next develop a framework for specifying and reasoning about structures on the state space of such systems that determine how behavior is interpreted.
In \cref{ssec:bundle,ssec:lifting,ssec:pullback,ssec:pushforward}, we present the notions of \emph{bundle}, \emph{lifting}, \emph{pullback}, and \emph{pushforward}, which form the technical foundation for formally defining \emph{behavioral structures}.
In \cref{ssec:theory}, we present theoretical results regarding behavioral structure transfer to justify our framework.
We use safety predicates, value functions, and bisimulation relations and metrics as running examples in this section.


\subsection{Bundle}
\label{ssec:bundle}

First, we introduce the concept of a \emph{bundle}, which subsumes structures such as predicates, quantities, relations, distances, and, more generally, $n$-ary structures over the state space:

\begin{definition}[Bundle]
Given an arity $n \in \N$, an \emph{$n$-ary bundle} over a set $\Values$ is a set $X$ with a map $h_X: X^n \to \Values$.
\end{definition}

In this paper, we mainly focus on unary bundles ($n = 1$), binary bundles ($n = 2$), logical definitions ($\Values = \truth$), and quantitative measurements ($\Values = \quant$ or $\R$), with examples summarized in \cref{tab:bundle}.
We equip $\Values$ with a preorder $\preceq$ to compare bundles, e.g., $\lF \to \lT$ for $\truth$ and $\geq$ for $\quant$ or $\R$.
Note that not all structures are compatible with behavioral systems.
We will discuss which structures qualify as \emph{behavioral structures} in \cref{ssec:pullback}.


\begin{table}
\centering
\caption{Examples of $n$-ary bundles $X^n \to \Values$}
\label{tab:bundle}
\vspace{-1ex}
\begin{tabular}{@{} lll @{}}
\toprule
& $n = 1$ & $n = 2$
\\
\midrule
truth values & predicate & relation
\\
$\Values = \truth$ & $p: X \to \truth$ & $r: X \times X \to \truth$
\\
\midrule
real values & quantity & distance
\\
$\Values = \quant$ & $q: X \to \quant$ & $d: X \times X \to \quant$
\\
\bottomrule
\end{tabular}
\end{table}

\begin{table}
\centering
\caption{Examples of combinators, aggregators, and barycenters}
\label{tab:operator}
\vspace{-1ex}
\begin{tabular}{@{} l r @{~} l @{}}
\toprule
\textbf{combinator} & $\combinator$ & $: \Values \times \Values \to \Values$
\\
conjunction & $\lcon$ & $: \truth \times \truth \to \truth$
\\
addition & $+$ & $: \quant \times \quant \to \quant$
\\
\midrule
\textbf{aggregator} & $\aggregator_X$ & $: \Values^X \to \Values$
\\
universal quantifier & $\forall_X$ & $: \truth^X \to \truth$
\\
summation & $\sum_X$ & $: \quant^X \to \quant$
\\
\midrule
\textbf{barycenter} & $\barycenter$ & $: \Probability{\Values} \to \Values$
\\
almost surely & $\forall$ & $: \Probability\truth \to \truth$
\\
expectation & $\E$ & $: \Probability\quant \to \quant$
\\
\bottomrule
\end{tabular}
\end{table}


\subsection{Lifting of bundles}
\label{ssec:lifting}

Given a state space $X$ and a one-step behavior space $FX$ specified by a functor $F$, we want to systematically construct a structure on $FX$ from a structure on $X$.
This is where \emph{lifting} comes in: \emph{defining by induction} on the structure of $F$ \citep{kurz2001coalgebras, pattinson2003coalgebraic, jacobs2016introduction}.
Specifically, a \emph{bundle lifting} $\lift_X$ is a map from a bundle $h_X: X^n \to \Values$ to a bundle $h_{FX}: (FX)^n \to \Values$.
We defer the rigorous definition to \cref{sec:lifting} and only present some concrete procedures for the functors in \cref{ex:moore-functor,ex:hmm-functor} here.
We give examples for $n = 1$ and $n = 2$ separately, while keeping the codomain $\Values$ generic.
We will use the operators in \cref{tab:operator}, which correspond to the three components of the system (product, exponentiation, and probability) in \cref{ex:moore-functor,ex:hmm-functor}.


\paragraph{Unary bundle lifting}

We begin with the relatively simple case of $n = 1$:

\begin{example}[Unary bundle lifting for $F_\Markov$]
\label{ex:unary-bundle-lifting}
Given the operators in \cref{tab:operator} and a unary bundle $h_\Observations: \Observations \to \Values$ on observations, a \emph{unary bundle lifting} $\lift_X$ for the functor $F_\Markov$ in \cref{ex:hmm-functor} is given by
\begin{equation}
h_{F_\Markov X}(p, o)
\defeq
\parens*{\barycenter \pushforward{h_X}p} \combinator h_\Observations(o).
\end{equation}
Here, $\pushforward{h_X}p \in \Probability{\Values}$ is the usual pushforward of a state distribution $p \in \Probability{X}$ through the map $h_X: X \to \Values$.
\end{example}

\cref{ex:safety-property,ex:reward-aggregation,ex:successor-feature} instantiate the pattern of \cref{ex:unary-bundle-lifting} in the RL context for logical, real-valued, and vector-valued bundles, respectively:


\begin{example}[Safety property]
\label{ex:safety-property}
Let $\Values = \truth$, and let $h_\Observations: \Observations \to \truth$ be a safety predicate on observations.
The following lifting maps a predicate $h_X: X \to \truth$ to an observation-gated safety property:
\begin{equation}
h_{F_\Markov X}(p, o)
\defeq
\parens*{\forall_{x \sim p} h_X(x)} \lcon h_\Observations(o),
\end{equation}
which means that the observation $o \in \Observations$ must be safe ($h_\Observations(o)$), \uline{and} ($\combinator = \lcon$) the predicate $h_X$ holds \uline{almost surely} ($\barycenter = \forall$) over the state distribution $p \in \Probability{X}$.
\end{example}

This corresponds to a one-step local form of almost-sure safety or reach-avoid constraints in safe RL \citep{alshiekh2018safe, hsu2021safety, yu2022reachability, wachi2024survey}.\looseness-1


\begin{example}[Reward aggregation]
\label{ex:reward-aggregation}
Let $\Values = \R$, and let $h_\Observations: \Observations \to \R$ be a reward function on observations.
The following lifting maps a quantity $h_X: X \to \R$ to a reward aggregation scheme:
\begin{equation}
h_{F_\Markov X}(p, o)
\defeq
\gamma \E_{x \sim p} h_X(x) + h_\Observations(o),
\end{equation}
where $\gamma \in [0, 1]$ is a discount factor.
\end{example}

The reward aggregation scheme in \cref{ex:reward-aggregation} does not rely on $h_\Observations: \Observations \to \R$ being a task reward.
If $h_\Observations$ is instead an arbitrary scalar \emph{cumulant}, such as a sensor reading, an event indicator, or a component of a feature vector, the same lifting specifies the aggregation scheme underlying a \emph{general value function} \citep{sutton2011horde}.


\begin{example}[Successor feature]
\label{ex:successor-feature}
Let $h_\Observations: \Observations \xto{\phi} \Phi \xto{w} \R$ be a scalar cumulant that factors through a vector space of features $\Phi = \R^d$, where $\phi$ is a feature map and $w$ is a linear map.
Taking $\Values = \Phi$ gives the vector-valued aggregation scheme for a vector-valued bundle $h^\Phi_X: X \to \Phi$:
\begin{equation}
h^\Phi_{F_\Markov X}(p, o)
\defeq
\gamma \E_{x \sim p} h^\Phi_X(x) + \phi(o).
\end{equation}
For a real-valued bundle $h_X = w \compL h^\Phi_X$, linearity gives $w(h^\Phi_{F_\Markov X}(p, o)) = h_{F_\Markov X}(p, o)$, where $h_{F_\Markov X}$ is the real-valued aggregation in \cref{ex:reward-aggregation}.
\end{example}

This is the one-step vector aggregation scheme behind the idea of \emph{successor features} \citep{dayan1993improving, barreto2017successor, machado2023temporal, wiltzer2024distributional}.


These examples demonstrate that \emph{choosing the operators amounts to choosing the semantics} of the lifted bundle.
The bundle arity, codomain, and these design choices together determine the behavioral structure under study.
For example, the almost-sure operator in \cref{ex:safety-property} defines a more stringent semantics.%
\footnote{%
For $\Values = \truth$, an order-preserving barycenter operator $\barycenter: \Probability{\truth} \to \truth$ maps $(p, 1 - p)$ to $p \geq \theta$ or $p > \theta$ for some threshold $\theta \in [0, 1]$, e.g., $p = 1$ for the almost-sure operator and $p > 0$ for the positive-probability operator.
}
The addition in \cref{ex:reward-aggregation} can be replaced with other combinators such as maximum, yielding different reward semantics.


\paragraph{Binary bundle lifting}

When $n = 2$, the design space of bundle liftings is richer.
A common choice is the following:

\begin{example}[Binary bundle lifting for $F_\Moore$]
\label{ex:binary-bundle-lifting}
Given the operators in \cref{tab:operator} and a binary bundle $h_\Observations: \Observations \times \Observations \to \Values$ on observations, a \emph{binary bundle lifting} $\lift_X$ for the functor $F_\Moore$ in \cref{ex:moore-functor} is defined inductively as follows:
\begin{align}
\hspace{-1pt}
h_{FX}((p, o), (p', o'))
&\defeq
h_{(\Probability{X})^\Actions}(p, p') \combinator h_\Observations(o, o'),
\\
h_{(\Probability{X})^\Actions}(p, p')
&\defeq
\aggregator_{a \in \Actions}
h_{\Probability{X}}(p(a), p'(a)),
\\
h_{\Probability{X}}(\mu, \mu')
&\defeq
\label{eq:join-coupling}
\join_{\pi \in \Coupling_X(\mu, \mu')} \barycenter \pushforward{h_X}\pi,
\end{align}
where $\Coupling_X(\mu, \mu') \subset \Probability{(X \times X)}$ is the set of all \emph{couplings} of $\mu, \mu' \in \Probability{X}$, and $\join$ is the \emph{join} operator for $(\Values, \preceq)$, i.e., $\exists$ for $\truth$ and $\inf$ for $\quant$.
\end{example}

It is noteworthy that the lifting operation does not depend on a specific system (i.e., a coalgebra), but solely on the system type (i.e., the functor).
Next, we introduce the \emph{pullback} and \emph{pushforward} operations, which allow us to \emph{transfer bundles} between different spaces along maps.


\subsection{Pullback of bundles}
\label{ssec:pullback}

The pullback operation transfers bundles \emph{contravariantly} from a target space $Y$ to a source space $X$ along a map $f: X \to Y$:

\begin{definition}[Pullback]
Given a map $f: X \to Y$, the \emph{pullback} of a bundle $h_Y: Y^n \to \Values$ along $f$ is the bundle $\pullback{f}h_Y: X^n \to \Values \defeq h_Y \compL f^n$ given by \emph{precomposition}.
\end{definition}


\paragraph{Pullback along a system}

In \cref{ssec:lifting}, we have defined a bundle lifting $\lift_X$ that constructs a bundle $h_{FX}$ on $FX$ from a bundle $h_X$ on $X$.
Given a system $t_X: X \to FX$, it is natural to pull the lifted bundle $h_{FX}$ back along $t_X$ to obtain a new bundle on $X$, so that we can compare it with the original bundle $h_X$ on the same space:

\begin{definition}[Closure operator]
\label{def:closure-operator}
Given an $F$-coalgebra $t_X: X \to FX$ and a bundle lifting $\lift_X$, a \emph{closure operator} is defined as the composition $\closure_X \defeq \pullback{t_X} \compL \lift_X$.
In other words, for any bundle $h_X: X^n \to \Values$ and $x_1, \ldots, x_n \in X$, $\closure_X(h_X)(x_{1:n}) \defeq \lift_X(h_X)(t_X(x_1), \ldots, t_X(x_n))$.\footnotemark
\end{definition}

\footnotetext{$x_{1:n}$ is shorthand for the tuple $(x_1, \ldots, x_n)$.}

We call a bundle $h_X$ a \emph{behavioral structure}, as in the sense of system behavior specification \citep{reichel1981behavioural, jacobs2016introduction}, if it is a post-fixed point or a fixed point of a closure operator $\closure_X$:

\begin{definition}[Behavioral structure]
A \emph{behavioral structure} $h_X$ is a \emph{post-fixed point} of the closure operator $\closure_X$, i.e., $h_X \preceq \closure_X(h_X)$ or a \emph{fixed point}, i.e., $h_X = \closure_X(h_X)$.
\end{definition}

This pattern is ubiquitous in RL.
For example, fixing a policy $\policy$, the closure operator for the reward aggregation lifting in \cref{ex:reward-aggregation} is exactly the \emph{Bellman operator} $\Bellman$ on quantities $v: X \to \R$ \citep{sutton1998introduction}:
\begin{equation}
\Bellman(v)(x)
\defeq
\gamma \E_{x' \sim \transition^\policy_X(x)} v(x') + h_\Observations(\observation_X(x)),
\end{equation}
where $\angles{\transition^\policy_X, \observation_X}: X \to \Probability{X} \times \Observations$ is the policy-dependent transition as in \cref{ex:policy-dependent-transition}, and a fixed point of $\Bellman$ is the state value function $v^\policy$.
In state abstraction, other examples include \emph{bisimulation relations} \citep{dean1997model, givan2003equivalence} and \emph{bisimulation metrics} \citep{ferns2004metrics, ferns2011bisimulation}, which we will spell out in \cref{sec:abstraction}.


\paragraph{Pullback along an encoder}

Another use of pullback is to relate bundles on the representation space $\Representations$ with those on the concrete state space $\States$ along an encoder $\encoder: \States \to \Representations$.
For example, \citet[Definition~1]{luo2025understanding} conceptually unified and analyzed several state representation learning objectives by requiring the encoder to be an isometry, i.e., $d_\States = \pullback{\encoder} d_\Representations$, where $d_\States$ is a behavioral metric on $\States$ and $d_\Representations$ is a chosen metric on $\Representations$, typically reflecting its geometry.
This allows us to verify whether $d_\Representations$ encodes the desired behaviors.
As pullback is simply precomposition, we can also pull back other structures.
For example, given a policy $\policy_\Representations: \Representations \to \Actions$ on the representation space $\Representations$, we can pull it back along an encoder $\encoder: \States \to \Representations$ to obtain a policy $\pullback{\encoder}\policy_\Representations: \States \to \Actions$ on the concrete state space $\States$.


\subsection{Pushforward of bundles}
\label{ssec:pushforward}

On the other hand, the pushforward operation constructs bundles \emph{covariantly} from a source space $X$ to a target space $Y$ along a map $f: X \to Y$:

\begin{definition}[Pushforward]
Given a map $f: X \to Y$, the \emph{pushforward} of a bundle $h_X: X^n \to \Values$ along $f$ is the bundle $\pushforward{f}h_X: Y^n \to \Values$ given by
\begin{equation}
\pushforward{f}h_X(y_{1:n})
\defeq
\join_{x_i \in f\inv(y_i)} h_X(x_{1:n}),
\end{equation}
where $f\inv(y_i) \defeq \set{x_i \in X \given f(x_i) = y_i} \subseteq X$ is the inverse image (fiber) of $y_i \in Y$ through $f: X \to Y$, and $\join$ is the join operator for $(\Values, \preceq)$.
We assume that this join exists for the relevant fibers, including the empty-fiber case via the corresponding empty join.
\end{definition}

In this work, the main use case is the pushforward along an encoder $\encoder: \States \to \Representations$: given a bundle $h_\States:\States^n \to \Values$ on the concrete state space $\States$, the pushforward $\pushforward{\encoder}h_\States:\Representations^n \to \Values$ produces a well-defined bundle on the representation space $\Representations$ by aggregating $h_\States$ over all concrete realizations of a representation in a conservative manner.



\subsection{Behavioral structure transfer}
\label{ssec:theory}

We now have all the necessary components: behavioral systems, maps between systems, behavioral structures, and operations for transferring structures.
Finally, we present our theoretical results regarding behavioral structure transfer to show how these components fit together.


\paragraph{Transfer of behavioral structures along homomorphisms}

First, we present the main theorems that justify the use of pullback and pushforward for transferring behavioral structures along homomorphisms.
These theorems give general conditions under which familiar behavioral structures, including safety predicates, value functions, and bisimulation relations and metrics, can be pulled back or pushed forward across abstraction maps.
Our results are based on the following lemma:

\begin{lemma}
\label{lem:pullback-pushforward-closure}
Given two systems $(X, t_X)$ and $(Y, t_Y)$, two liftings $\lift_X$ and $\lift_Y$ constructed with the same operators, and the induced closure operators $\closure_X$ and $\closure_Y$, if a map $f: X \to Y$ is a homomorphism, then
\begin{align}
\closure_X \compL \pullback{f} & \preceq \pullback{f} \compL \closure_Y,
\\
\pushforward{f}\compL \closure_X & \preceq \closure_Y \compL \pushforward{f}.
\end{align}
\end{lemma}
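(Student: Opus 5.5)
The plan is to reduce both inequalities to a single property of the lifting, namely its behaviour with respect to pullback and pushforward along $Ff$. Then I will paste that property together with the homomorphism square \cref{eq:coalgebra-homomorphism-morphism}.

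Concretely, I will first establish two \emph{lifting lemmas}, for any map $f: X \to Y$ (not necessarily a homomorphism) and any bundles $h_Y: Y^n \to \Values$, $h_X: X^n \to \Values$:
\begin{align}
\lift_X(\pullback{f} h_Y) &\preceq \pullback{(Ff)}\lift_Y(h_Y), \label{eq:plan-lax-pullback}\\
\lift_X(h_X) &\preceq \pullback{(Ff)}\lift_Y(\pushforward{f} h_X). \label{eq:plan-lax-pushforward}
\end{align}
These are proved by induction on the structure of $F$ (product, exponential, distribution), using the inductive definition of the liftings, e.g.\ in \cref{ex:unary-bundle-lifting,ex:binary-bundle-lifting}. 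Two assumptions on the operators are needed: monotonicity of $\combinator$, $\aggregator$, $\barycenter$, and naturality of $\barycenter$ with respect to pushforward of distributions ($\barycenter \pushforward{(h\compL f)}\mu = \barycenter \pushforward{h}(\pushforward{f}\mu)$, which is just functoriality of $\Probability$).

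For the coupling clause \cref{eq:join-coupling}, I will use that $\pushforward{(f\times f)}$ maps $\Coupling_X(\mu,\mu')$ into $\Coupling_Y(\pushforward f\mu,\pushforward f\mu')$. Hence the join over the smaller image set is $\preceq$ the join over all couplings on $Y$, which gives \eqref{eq:plan-lax-pullback}. For \eqref{eq:plan-lax-pushforward}, I use $h_X \preceq \pullback{f}\pushforward{f}h_X$ (the unit of the adjunction $\pushforward f \dashv \pullback f$, immediate from the definition of the join over fibers), then monotonicity of $\lift_X$ and \eqref{eq:plan-lax-pullback}.

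With these in hand, the first inequality is a diagram chase. For $h_Y$,
\[
\closure_X(\pullback f h_Y) = \pullback{t_X}\lift_X(\pullback f h_Y) \preceq \pullback{t_X}\pullback{(Ff)}\lift_Y(h_Y) = \pullback{(Ff\compL t_X)}\lift_Y(h_Y).
\]
This equals $\pullback{(t_Y\compL f)}\lift_Y(h_Y) = \pullback f \closure_Y(h_Y)$ by the homomorphism equation. Here I use that pullback is contravariantly functorial and monotone.

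For the second inequality, I will use the adjunction: $\pushforward f g \preceq k$ iff $g \preceq \pullback f k$, valid since $\pushforward f$ is a join over fibers. So it suffices to show $\closure_X(h_X) \preceq \pullback f\closure_Y(\pushforward f h_X)$. By \eqref{eq:plan-lax-pushforward} and the same chase, this is $\pullback{t_X}\lift_X(h_X) \preceq \pullback{t_X}\pullback{(Ff)}\lift_Y(\pushforward f h_X) = \pullback f\pullback{t_Y}\lift_Y(\pushforward f h_X)$, as required.

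The main obstacle is the inductive lifting lemma, and in particular making precise what ``constructed with the same operators'' must guarantee. I will make monotonicity of all operators an explicit hypothesis. I expect equality to fail in \eqref{eq:plan-lax-pullback} at the coupling step, since not every coupling on $Y$ lifts to $X$; this is why only $\preceq$ is claimed. For the exponent and product clauses, the aggregator is indexed by $\Actions$, which is unchanged by $Ff$, so those steps are literal equalities.
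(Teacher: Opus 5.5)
Your proof is correct. For the first inequality it matches the paper's route: the paper also pastes the lax square $\lift_X \compL \pullback{f} \preceq \pullback{(Ff)} \compL \lift_Y$ (\cref{lem:pullback-lifting}) with the homomorphism identity $\pullback{t_X} \compL \pullback{(Ff)} = \pullback{f} \compL \pullback{t_Y}$ (\cref{lem:pullback-pullback}). The only difference there is that you derive the lax square by structural induction on $F$, with monotonicity of $\combinator$, $\aggregator$, $\barycenter$ as explicit hypotheses. The paper instead builds it into the definition of a bundle lifting (\cref{def:bundle-lifting}) and supplies the coupling case via \cref{prop:coupling-oplax}.

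For the second inequality your route genuinely differs. The paper pastes two further lax squares, $\pushforward{f} \compL \pullback{t_X} \preceq \pullback{t_Y} \compL \pushforward{(Ff)}$ and $\pushforward{(Ff)} \compL \lift_X \preceq \lift_Y \compL \pushforward{f}$ (\cref{lem:pushforward-pullback,lem:pushforward-lifting}), each obtained by transposing along $\pushforward{(Ff)} \dashv \pullback{(Ff)}$. You transpose once along $\pushforward{f} \dashv \pullback{f}$ and reuse the pullback chase. In effect you get the second inequality as the mate of the first: $\closure_X(h_X) \preceq \closure_X(\pullback{f}\pushforward{f}h_X) \preceq \pullback{f}\closure_Y(\pushforward{f}h_X)$. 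Your intermediate inequality $\lift_X(h_X) \preceq \pullback{(Ff)}\lift_Y(\pushforward{f}h_X)$ is precisely the transpose of \cref{lem:pushforward-lifting}.

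Your version is more economical: it never forms $\pushforward{(Ff)}$, so it needs only the fiber joins that already define $\pushforward{f}h_X$. It does not need joins over fibers of $(Ff)^n$ in $(FX)^n$. Those are automatic for complete lattices such as $\truth$ and $\quant$, but are an extra existence assumption for a general preordered $\Values$. The paper's version is more modular: it yields two standalone lax squares that mirror the pullback side.

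One side remark of yours is inaccurate, though harmless. Equality does not fail at the coupling step for finitely supported distributions. Any $\rho \in \Coupling_Y(\pushforward{f}\mu, \pushforward{f}\mu')$ is the image under $\pushforward{(f \times f)}$ of the coupling $\pi(x, x') = \rho(f(x), f(x'))\,\mu(x)\,\mu'(x') / \big(\pushforward{f}\mu(f(x))\,\pushforward{f}\mu'(f(x'))\big)$, set to $0$ off the supports. So your first lifting inequality is actually an equality in that setting; only $\preceq$ is needed anyway.
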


This lemma shows that the liftings and closure operators are compatible with homomorphisms \emph{by construction}.
Based on this lemma, we can prove the following behavioral structure transfer and construction theorems:

\begin{restatable}[Safe verification]{theorem}{SafeVerification}
\label{thm:safe-verification}
If $\encoder: (\States, t_\States) \to (\Representations, t_\Representations)$ is a surjective homomorphism, then the pullback $\pullback{\encoder}$ reflects post-fixed points:
for any bundle $h_\Representations: \Representations^n \to \Values$,
\begin{equation}
(h_\Representations \preceq \closure_\Representations(h_\Representations))
\lcim
(\pullback{\encoder}h_\Representations \preceq \closure_\States(\pullback{\encoder}h_\Representations)).
\end{equation}
\end{restatable}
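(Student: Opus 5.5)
The plan is to show that, for a homomorphism $\encoder$, the closure operators commute with pullback \emph{exactly}, i.e.\ $\closure_\States \compL \pullback{\encoder} = \pullback{\encoder} \compL \closure_\Representations$. Given this, the implication in the statement follows at once because pullback is monotone. If $h_\Representations \preceq \closure_\Representations(h_\Representations)$ pointwise, then precomposing with $\encoder^n$ gives
\[
\pullback{\encoder}h_\Representations \preceq \pullback{\encoder}\closure_\Representations(h_\Representations) = \closure_\States(\pullback{\encoder}h_\Representations).
\]
Surjectivity then yields the converse, which justifies the word ``reflects''. Every tuple $z_{1:n}$ is of the form $\encoder^n(s_{1:n})$, so the pointwise inequality $\pullback{\encoder}h_\Representations \preceq \pullback{\encoder}\closure_\Representations(h_\Representations)$ on $\States^n$ transfers back to $h_\Representations \preceq \closure_\Representations(h_\Representations)$ on $\Representations^n$. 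I will state and prove both directions.

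\cref{lem:pullback-pushforward-closure} already supplies one inequality, $\closure_\States \compL \pullback{\encoder} \preceq \pullback{\encoder} \compL \closure_\Representations$, but the forward implication needs the opposite inequality, so I will establish the equality directly. Unfolding \cref{def:closure-operator} gives $\closure_\States(\pullback{\encoder}h)(s_{1:n}) = \lift_\States(h \compL \encoder^n)(t_\States s_1, \ldots, t_\States s_n)$. The homomorphism equation $t_\Representations \compL \encoder = F\encoder \compL t_\States$ rewrites $\pullback{\encoder}\closure_\Representations(h)(s_{1:n})$ as $\lift_\Representations(h)(F\encoder\, t_\States s_1, \ldots)$. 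It therefore suffices to prove the \emph{naturality of the lifting} with respect to pullback:
\[
\lift_\States(\pullback{\encoder}h) = \pullback{(F\encoder)}\lift_\Representations(h).
\]
I will prove this by induction on the structure of $F$, following the inductive definitions in \cref{ex:unary-bundle-lifting,ex:binary-bundle-lifting}. The observation component is untouched, since $F\encoder$ acts as $\id_\Observations$ there. The exponent component with aggregator $\aggregator_{a}$ commutes with postcomposition actionwise. For the unary probability component, $\barycenter \pushforward{(h\compL\encoder)}p = \barycenter \pushforward{h}(\pushforward{\encoder}p)$ by functoriality of pushforward of distributions.

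The main obstacle is the binary probability component \cref{eq:join-coupling}. There I must show
\[
\join_{\pi \in \Coupling_\States(\mu,\mu')} \barycenter \pushforward{(h\compL\encoder^2)}\pi = \join_{\rho \in \Coupling_\Representations(\pushforward{\encoder}\mu,\pushforward{\encoder}\mu')} \barycenter \pushforward{h}\rho .
\]
The inequality $\preceq$ holds because every $\pi$ pushes forward to a coupling $\rho = \pushforward{(\encoder\times\encoder)}\pi$ with the same barycenter value; this is presumably the content of the lemma. For $\succeq$, I will lift each $\rho$ to a coupling $\pi$ with $\pushforward{(\encoder\times\encoder)}\pi = \rho$ via a gluing/disintegration construction. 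Concretely, set
\[
\pi(s,s') = \rho(\encoder s, \encoder s') \, \frac{\mu(s)}{\pushforward{\encoder}\mu(\encoder s)} \, \frac{\mu'(s')}{\pushforward{\encoder}\mu'(\encoder s')},
\]
which is well defined because distributions have finite support. Checking that its marginals are $\mu$ and $\mu'$ is a routine fiberwise summation. With both inequalities in hand the joins coincide, which completes the induction and hence the theorem.
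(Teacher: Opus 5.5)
Your argument is correct, but the displayed implication runs right to left ($\Leftarrow$). The theorem asserts only reflection: $\pullback{\encoder}h_\Representations \preceq \closure_\States(\pullback{\encoder}h_\Representations)$ forces $h_\Representations \preceq \closure_\Representations(h_\Representations)$.

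For that direction you need only two ingredients:
\begin{itemize}
\item the inequality $\closure_\States \compL \pullback{\encoder} \preceq \pullback{\encoder} \compL \closure_\Representations$, which you already cite from \cref{lem:pullback-pushforward-closure};
\item cancelling $\pullback{\encoder}$ using surjectivity.
\end{itemize}
This is exactly the paper's proof: oplaxity from \cref{lem:pullback-pullback,lem:pullback-lifting}, then order-reflection from \cref{lem:pullback-order-reflecting}. What you call the ``main obstacle'' therefore serves only the forward direction, which the theorem does not claim.

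That extra part is nevertheless sound and buys something. Your gluing construction needs the convention $\pi(s,s') = 0$ whenever $\mu(s)\mu'(s') = 0$. With it, $\pi \mapsto \pushforward{(\encoder\times\encoder)}\pi$ maps $\Coupling_\States(\mu,\mu')$ onto $\Coupling_\Representations(\pushforward{\encoder}\mu,\pushforward{\encoder}\mu')$ for every $\encoder$. This is stronger than \cref{lem:coupling-pushforward}, which only pulls couplings back along maps with a retraction. It also makes the concrete liftings of \cref{ex:unary-bundle-lifting,ex:binary-bundle-lifting} strict at every map. Consequently, pullback along any homomorphism, surjective or not, also preserves post-fixed points, as in the first bullet of \cref{thm:pullback-closure}.

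The trade-off is generality. Your equality depends on the specific inductive form of those liftings. The paper's reflection argument uses only the oplax naturality that every bundle lifting in the sense of \cref{def:bundle-lifting} satisfies by definition, including liftings where strictness fails.
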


This theorem states that if an encoder $\encoder$ is \emph{surjective} and preserves the dynamics, then any behavioral structure $h_\Representations$ on the abstract system can be safely verified on the concrete system by pulling it back along the encoder.
The surjectivity condition, i.e., every abstract state corresponds to at least one concrete state, ensures that the abstract system has no extra states that could generate unverifiable behaviors.

\begin{restatable}[Safe construction]{theorem}{SafeConstruction}
\label{thm:safe-construction}
If $\encoder: (\States, t_\States) \to (\Representations, t_\Representations)$ is a homomorphism, then the pushforward $\pushforward{\encoder}$ preserves post-fixed points: for any bundle $h_\States: \States^n \to \Values$,
\begin{equation}
(h_\States \preceq \closure_\States(h_\States))
\limp
(\pushforward{\encoder}h_\States \preceq \closure_\Representations(\pushforward{\encoder}h_\States)).
\end{equation}
\end{restatable}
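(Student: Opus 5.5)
The plan is to derive the statement directly from \cref{lem:pullback-pushforward-closure}, combining its second inequality with monotonicity of $\pushforward{\encoder}$. Suppose $h_\States \preceq \closure_\States(h_\States)$. I want to show $\pushforward{\encoder}h_\States \preceq \closure_\Representations(\pushforward{\encoder}h_\States)$. I will obtain this from the chain
\begin{equation}
\pushforward{\encoder}h_\States \preceq \pushforward{\encoder}\closure_\States(h_\States) \preceq \closure_\Representations(\pushforward{\encoder}h_\States).
\end{equation}
The second inequality is exactly the second statement of \cref{lem:pullback-pushforward-closure}, evaluated at the bundle $h_\States$, since $\encoder$ is a homomorphism. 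Bundles are compared pointwise in the preorder $\preceq$ on $\Values$.

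The first inequality needs the pushforward to be monotone. That is, if $h \preceq h'$ pointwise on $\States^n$, then $\pushforward{\encoder}h \preceq \pushforward{\encoder}h'$ pointwise on $\Representations^n$. This holds because $\pushforward{\encoder}h(z_{1:n})$ is the join of $h(s_{1:n})$ over the product of fibers $s_i \in \encoder\inv(z_i)$. Each term $h(s_{1:n})$ lies below $h'(s_{1:n})$, which in turn lies below the join of $h'$ over the same index set. So $\pushforward{\encoder}h'(z_{1:n})$ is an upper bound of all terms of the first join, and the least-upper-bound property gives the inequality. The empty-fiber case causes no trouble: both sides are the empty join (the bottom element), and bottom is below itself. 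This uses only the paper's standing assumption that the relevant joins exist.

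Chaining the two inequalities by transitivity of $\preceq$ finishes the proof. Unlike \cref{thm:safe-verification}, no surjectivity is needed. Empty fibers give bottom values, which are trivially dominated, so extra abstract states never break the post-fixed-point inequality.

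I do not expect a real obstacle here; all the work lies in \cref{lem:pullback-pushforward-closure}. The one point to be careful about is that the lemma's inequality between operators should be read pointwise in the argument bundle. I also need the join to be taken with respect to the same $\preceq$ used in the post-fixed-point condition, and not the reversed order; for $\quant$ this means the order $\geq$, so the join is $\inf$. If the paper's convention for the join differed from this, I would recheck the monotonicity step.
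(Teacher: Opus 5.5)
Your proof is correct and follows the paper's own argument. The paper likewise combines the laxity inequality $\pushforward{\encoder} \compL \closure_\States \preceq \closure_\Representations \compL \pushforward{\encoder}$ (the second part of \cref{lem:pullback-pushforward-closure}, established in the appendix as \cref{thm:pushforward-closure}) with monotonicity of $\pushforward{\encoder}$ and transitivity. The only minor difference is that you check monotonicity directly from the fiberwise join formula, while the paper derives it from the pushforward--pullback adjunction in \cref{lem:pushforward-order-preserving}.
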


This theorem ensures that pushing forward any concrete behavioral structure along any homomorphism produces an abstract behavioral structure.


\paragraph{Relation of behavioral structures over different values}

Next, we present a theorem that relates logical bundles ($\Values = \truth$) and quantitative bundles ($\Values = \quant$), which allows us to systematically construct quantitative behavioral structures from logical ones:

\begin{restatable}[Relation of logical and quantitative bundles]{theorem}{LogicMetric}
\label{thm:logic-metric}
The zero predicate $z: \quant \to \truth \defeq \anon{x}{(x = 0)}$ maps a quantitative bundle $h_X: X^n \to \quant$ to a logical bundle $z \compL h_X: X^n \to \truth$.
If a quantitative lifting $\lift^\quant_X$ and a logical lifting $\lift^\truth_X$ are constructed with operators in \cref{tab:operator} such that the zero predicate $z$ is an algebra homomorphism between the operators,%
\footnote{%
This means that
(i) $z \compL {\combinator_\quant} = {\combinator_\truth} \compL (z \times z)$,
(ii) $z \compL {\aggregator_\quant} = {\aggregator_\truth} \compL z^A$, and
(iii) $z \compL {\barycenter_\quant} = {\barycenter_\truth} \compL \Probability{z}$.
}
then
\begin{equation}
z \compL \lift^\quant_X(h_X) = \lift^\truth_X(z \compL h_X).
\end{equation}
\end{restatable}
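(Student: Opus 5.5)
The plan is to argue by structural induction on the functor $F$, following the inductive definition of the lifting $\lift_X$ (\cref{ex:unary-bundle-lifting,ex:binary-bundle-lifting}; rigorously \cref{sec:lifting}). The lifting is assembled from four kinds of layers: the product layer using $\combinator$, the exponential layer using $\aggregator$, the distribution layer using $\barycenter$ after a pushforward, and, in the binary case, a join over couplings. So it suffices to show that postcomposition with $z$ commutes with each layer. That is, if $z \compL h^\quant = h^\truth$ on the components, then $z \compL h^\quant_{\mathrm{layer}} = h^\truth_{\mathrm{layer}}$. Chaining these equalities along the construction gives $z \compL \lift^\quant_X(h_X) = \lift^\truth_X(z \compL h_X)$. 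The base cases are the bundle $h_X$ itself, where the claim is the definition of $z \compL h_X$, and the fixed observation bundle. For the latter I will read the statement as taking $h^\truth_\Observations \defeq z \compL h^\quant_\Observations$.

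\textbf{Product and exponential layers.} For the product layer I compute
\[
z\big(h_1^\quant(u) \combinator_\quant h_2^\quant(v)\big) = z(h_1^\quant(u)) \combinator_\truth z(h_2^\quant(v)),
\]
which is immediate from hypothesis (i). For the exponential layer I apply hypothesis (ii) to the family $a \mapsto h^\quant_{\Probability X}(p(a), p'(a))$. Here $z^\Actions$ applied to this family is exactly the family lifted from $z \compL h^\quant$, by the inductive hypothesis.

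\textbf{Distribution layer.} For the barycenter step I use hypothesis (iii) together with functoriality of $\Probability$:
\[
z \compL \barycenter_\quant \compL \Probability(h^\quant) = \barycenter_\truth \compL \Probability z \compL \Probability(h^\quant) = \barycenter_\truth \compL \Probability(z \compL h^\quant).
\]
This handles the unary lifting for $F_\Markov$ directly. In the binary case the same computation applies to $\pushforward{h_X}\pi$ for each coupling $\pi$.

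\textbf{Join over couplings (main obstacle).} The remaining step is the join in \cref{eq:join-coupling}. There I must show
\[
z\Big(\inf_{\pi \in \Coupling_X(\mu,\mu')} \E_\pi h\Big) = \exists \pi \in \Coupling_X(\mu,\mu').\; z(\E_\pi h).
\]
The direction ``$\Leftarrow$'' is trivial since $h \geq 0$. For ``$\Rightarrow$'' I need the infimum to be attained. I will use finite support. Since $\mu$ and $\mu'$ have finite supports, $\Coupling_X(\mu,\mu')$ is a compact polytope in a finite-dimensional space (the transportation polytope on $\mathrm{supp}\,\mu \times \mathrm{supp}\,\mu'$). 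Moreover $\pi \mapsto \E_\pi h$ is linear, hence continuous, when $h$ is finite on the support, so the minimum is attained at a vertex. If $h$ takes the value $\infty$, couplings charging such pairs have infinite cost. The infimum is then $0$ only if it is already approached by couplings supported on the finite-valued pairs; these form a closed face of the polytope, so attainment still holds there.

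More abstractly, this is the statement that $z$ preserves the relevant joins. If one prefers to state it as a fourth compatibility condition, it is exactly where the argument depends on the specific choice $\join = \inf$ versus $\exists$ and on $\Probability$ being finitely supported. With all layers commuting with $z$, the induction closes.
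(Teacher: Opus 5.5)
Your proposal is correct and follows essentially the paper's route: the paper treats the theorem as an instance of commutative lifting, where hypotheses (i)--(iii) make the combinator, aggregator, and barycenter layers commute with postcomposition by $z$, so the assembled liftings commute too. The paper's sketch never addresses the join over couplings in the Wasserstein layer, which (i)--(iii) do not cover, so your attainment argument on the finite transportation polytope supplies a step the paper leaves implicit. That step relies on $\pi \mapsto \E_\pi h$ being linear (or at least lower semicontinuous), not only on finite support: a barycenter satisfying (iii) alone need not attain its infimum over couplings.
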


This theorem states that if the operators used in the liftings are homomorphic via the zero predicate, then the value of the quantitative bundle is zero if and only if the logical bundle evaluates to true.
The homomorphism condition encodes the relationships between operators such as \say{the sum/max of two non-negative numbers is zero if and only if both numbers are zero} (combinator), or \say{the expectation of a non-negative random variable is zero if and only if the variable is almost surely zero} (barycenter).
In this way, \cref{thm:logic-metric} is not only a relation between existing notions, but a construction principle: once a logical behavioral structure is specified as a bundle and the operators satisfy the homomorphism condition, the corresponding quantitative structure follows automatically.
This generalizes the connection between bisimulation relations and bisimulation metrics \citep{ferns2004metrics, ferns2011bisimulation, castro2020scalable} to general behavioral structures.

\section{State Abstraction in Reinforcement Learning}
\label{sec:abstraction}
This section instantiates the framework on four familiar RL themes: next-observation prediction, model-irrelevance abstraction, bisimulation equivalences, and policy-dependent transition and value functions.


\subsection{Next-observation prediction}

First, we show that the \emph{next-observation prediction} task used in \citet{ota2020can, subramanian2022approximate} follows from the homomorphism condition of the $F_\Moore$ type in \cref{eq:moore-homomorphism}:

\begin{restatable}{proposition}{NextObservationPrediction}
If $f: X \to Y$ is a homomorphism from $\angles{\transition_X, \observation_X}$ to $\angles{\transition_Y, \observation_Y}$, then it is also a homomorphism from $(\Probability{\observation_X})^\Actions \compL \transition_X: X \to (\Probability{\Observations})^\Actions$
to $(\Probability{\observation_Y})^\Actions \compL \transition_Y: Y \to (\Probability{\Observations})^\Actions$, i.e.,
$
\forall x \in X.\;
\forall a \in \Actions.\;
\pushforward{\observation_Y}\transition_Y(f(x), a)
=_{\Probability{\Observations}}
\pushforward{\observation_X}\transition_X(x, a)
$.
\end{restatable}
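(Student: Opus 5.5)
The plan is to compute $\pushforward{\observation_Y}\transition_Y(f(x), a)$ directly, rewriting it in two steps using the two homomorphism equations \cref{eq:moore-homomorphism-transition} and \cref{eq:moore-homomorphism-observation}, together with functoriality of $\Probability$.

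Fix $x \in X$ and $a \in \Actions$. By \cref{eq:moore-homomorphism-transition}, $\transition_Y(f(x), a) = \pushforward{f}\transition_X(x, a)$. Applying $\Probability{\observation_Y}$ to both sides gives
\[
\pushforward{\observation_Y}\transition_Y(f(x), a) = \Probability{\observation_Y}\big(\Probability{f}(\transition_X(x, a))\big).
\]
Because $\Probability$ is a functor and preserves composition, the right-hand side equals $\Probability{(\observation_Y \compL f)}(\transition_X(x, a))$.

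Next, \cref{eq:moore-homomorphism-observation} says $\observation_Y \compL f = \observation_X$ as functions $X \to \Observations$. The right-hand side therefore equals $\Probability{\observation_X}(\transition_X(x, a)) = \pushforward{\observation_X}\transition_X(x, a)$, which is the claim. Since $x$ and $a$ were arbitrary, we obtain the equality $(\Probability{\observation_Y})^\Actions \compL \transition_Y \compL f = (\Probability{\observation_X})^\Actions \compL \transition_X$.

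This is exactly the homomorphism condition of \cref{def:coalgebra-homomorphism}, but read for the constant-in-$X$ functor $(\Probability{\Observations})^\Actions$. The condition on the right-hand side of the square involves only the identity map of $(\Probability{\Observations})^\Actions$, since this functor does not depend on $X$.

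For readers who prefer the fiber formula, one can expand both sides pointwise at $o \in \Observations$ as sums over fibers. Then $\sum_{y' \in \observation_Y^{-1}(o)} \sum_{x' \in f^{-1}(y')} \transition_X(x' | x, a) = \sum_{x' \in (\observation_Y \compL f)^{-1}(o)} \transition_X(x' | x, a)$, because the fibers $f^{-1}(y')$ partition $(\observation_Y \compL f)^{-1}(o)$. Finite support makes the regrouping of the sums legitimate.

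There is no real obstacle here. The only point needing care is the functoriality step $\Probability{g} \compL \Probability{f} = \Probability{(g \compL f)}$, which is the pushforward-of-pushforward identity verified by the fiber-partition computation above.
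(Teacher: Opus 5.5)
Your proof is correct and is essentially the paper's argument. The paper splits the Moore homomorphism square into its transition and observation components, then stacks the transition square on top of the image of the observation square under $(\Probability{(-)})^\Actions$; the bottom edge of that stacked diagram becomes the identity on $(\Probability{\Observations})^\Actions$. Your pointwise computation, using functoriality of $\Probability$ and $\observation_Y \circ f = \observation_X$, is exactly that diagram chase written out elementwise.
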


Since an $F_\Moore$-homomorphism preserves both the transition component in \cref{eq:moore-homomorphism-transition} and the observation component in \cref{eq:moore-homomorphism-observation}, it preserves next-observation distributions (see also \citet{ni2024bridging}).
The converse need not hold, so next-observation prediction alone does not guarantee transition and observation preservation.


\subsection{Model-irrelevance abstraction}

Next, we incorporate the \emph{model-irrelevance abstraction} \citep{li2006towards, li2009unifying} into our coalgebra framework and show that it is equivalent to the coalgebra homomorphism condition:

\begin{restatable}{proposition}{ModelIrrelevanceAbstraction}
\label{prop:model-irrelevance-abstraction}
A model-irrelevance abstraction in \citet{li2006towards} is exactly a coalgebra homomorphism in \cref{def:coalgebra-homomorphism}.
\end{restatable}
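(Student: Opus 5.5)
The approach is to recall the definition of model-irrelevance abstraction from \citet{li2006towards} and show that its two defining conditions coincide with \cref{eq:moore-homomorphism-transition,eq:moore-homomorphism-observation}. In that work, an abstraction $\encoder: \States \to \Representations$ of an MDP is model-irrelevant if, for all $s_1, s_2$ with $\encoder(s_1) = \encoder(s_2)$, all actions $a$, and all abstract states $z'$, two conditions hold. First, the rewards agree: $R(s_1,a) = R(s_2,a)$. Second, the aggregated transition probabilities agree: $\sum_{s' \in \encoder\inv(z')} \transition(s' | s_1, a) = \sum_{s' \in \encoder\inv(z')} \transition(s' | s_2, a)$. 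To match this with our setting, I place the reward into the observation via $\Observations = \States \times \R$, or simply take $\observation$ to be the reward. If the reward depends on the action, I use the variant of $F_\Moore$ with action-dependent outputs from \cref{ssec:algebra-coalgebra}; the argument is unchanged.

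\emph{Direction 1: homomorphism $\Rightarrow$ model-irrelevance.} Suppose $\encoder$ is a homomorphism into some abstract system $\angles{\transition_\Representations, \observation_\Representations}$, and let $\encoder(s_1) = \encoder(s_2) = z$. By \cref{eq:moore-homomorphism-transition} in its fiber form, both aggregated sums equal $\transition_\Representations(z' | z, a)$. By \cref{eq:moore-homomorphism-observation}, both observations equal $\observation_\Representations(z)$. So both model-irrelevance conditions hold.

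\emph{Direction 2: model-irrelevance $\Rightarrow$ homomorphism.} Here I must construct the abstract system required by \cref{def:abstraction}. For $z$ in the image of $\encoder$, pick any $s \in \encoder\inv(z)$ and set $\transition_\Representations(z, a) \defeq \pushforward{\encoder}\transition(s, a)$ and $\observation_\Representations(z) \defeq \observation(s)$. These are well defined, that is, independent of the choice of $s$, precisely by the two model-irrelevance conditions. With this definition, \cref{eq:moore-homomorphism} holds by construction.

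The main obstacle is bookkeeping rather than depth. Abstract states outside the image of $\encoder$ need arbitrary values (for example, a fixed distribution and observation), which requires $\Observations$ to be nonempty; alternatively, I can assume $\encoder$ is surjective, as \citet{li2006towards} implicitly do. I also need to check that the pushforward lands in finitely supported distributions, which is immediate. I will state explicitly that the equivalence is between the encoder-level properties: $\encoder$ is model-irrelevant if and only if $\encoder$ abstracts the system in the sense of \cref{def:abstraction}.
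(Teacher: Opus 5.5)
Your argument is correct and is essentially the paper's proof written out elementwise. The paper rewrites model-irrelevance as the kernel inclusion $\ker\encoder \limp \ker(F_\Moore\encoder \compL \angles{\transition_\States, \observation_\States})$ and then uses the fact that this inclusion holds exactly when $F_\Moore\encoder \compL \angles{\transition_\States, \observation_\States}$ factors through $\encoder$. Your Direction~1 is the easy half of that fact, and your choice of representatives in Direction~2 is its proof. You also treat abstract states outside the image of $\encoder$, which the paper leaves implicit. Your nonemptiness caveat is automatic whenever $\States \neq \emptyset$, since any value $F_\Moore\encoder(\angles{\transition_\States, \observation_\States}(s))$ can serve as the default.

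One setup choice has to go, however. With $\Observations = \States \times \R$ and $\observation(s) = (s, R(s))$, the condition $\observation_\Representations(\encoder(s)) = \observation(s)$ forces $\encoder$ to be injective. Your $\observation_\Representations$ is then not well defined on any fiber containing two distinct states, and the claimed equivalence fails. You need your second option, taking the observation to be the reward itself. That is the reading under which the paper identifies the observation condition with reward prediction.
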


Concretely, a model-irrelevance abstraction requires that for an encoder $\encoder: \States \to \Representations$, if two states $s_1, s_2 \in \States$ are mapped to the same representation, i.e.,
$\encoder(s_1) =_\Representations \encoder(s_2)$,
then they must have the same observation and transition distributions, i.e.,
$
\observation_\States(s_1)
=_\Observations
\observation_\States(s_2)
$
and for all actions $a \in \Actions$,
$
\pushforward{\encoder}\transition_\States(s_1, a)
=_{\Probability\Representations}
\pushforward{\encoder}\transition_\States(s_2, a)
$.

This is not written as a homomorphism, since it does not require a system on the representation space $\Representations$.
It is instead the \emph{kernel} view of a homomorphism, requiring $\encoder$ to satisfy
$
\ker{\encoder}
\limp
\ker(\pushforward{\encoder}\transition_\States)
\lcon
\ker{\observation_\States}
$,
where the consequent is exactly $\ker\parens*{F_\Moore\encoder \compL \angles{\transition_\States, \observation_\States}}$.%
\footnote{%
The kernel $\ker{f}: X \times X \to \truth$ of a map $f: X \to Y$ is the pullback $\pullback{f}{(=_Y)}$ of the equality $=_Y$ on the codomain.
}
This implication holds exactly when $F_\Moore\encoder \compL \angles{\transition_\States, \observation_\States}$ factors through $\encoder$, which is the homomorphism condition in \cref{eq:moore-homomorphism}.
Thus \cref{thm:safe-verification,thm:safe-construction} apply to model-irrelevance abstraction.


\subsection{From homomorphism to bisimulation}

In state abstraction, \emph{bisimulation relations} are of particular interest \citep{dean1997model, givan2003equivalence}.
In our framework, we can describe a bisimulation relation for a system $t_X$ of type $F_\Moore$ as a post-fixed point of a closure operator in \cref{def:closure-operator} derived from a lifting in \cref{ex:binary-bundle-lifting} where the combinator is conjunction $\lcon$, the aggregator is universal quantification $\forall$, and the barycenter operator is the almost-sure operator.
This particular lifting is forced by the homomorphism requirement:

\begin{restatable}{proposition}{HomomorphismBisimulation}
\label{prop:homomorphism-bisimulation}
For an equivalence $r: X \times X \to \truth$, let $q_r: X \to X / r$ be its quotient map.
For an $F$-coalgebra $(X, t_X)$, define a monotone operator $\closure_X$ on equivalences as follows:
\begin{equation}
\closure_X(r)
\defeq \ker(F{q_r} \compL t_X)
= \pullback{t_X}\ker(F{q_r}).
\end{equation}
Then, the quotient map $q_r$ is a homomorphism if and only if the equivalence relation $r$ is a post-fixed point of $\closure_X$.
\end{restatable}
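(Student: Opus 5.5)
The plan is to unfold both sides of the equivalence into statements about factorization through the quotient map, and then use the universal property of the quotient $q_r: X \to X/r$. Write $Q \defeq X/r$, and note that $\ker q_r = r$ because $r$ is an equivalence. The post-fixed-point condition $r \preceq \closure_X(r)$ in the order $\lF \to \lT$ says that $r \limp \ker(Fq_r \compL t_X)$ pointwise. In words: whenever $r(x_1, x_2)$ holds, we have $Fq_r(t_X(x_1)) = Fq_r(t_X(x_2))$. Equivalently, $\ker q_r \limp \ker(Fq_r \compL t_X)$. The whole proof reduces to this kernel inclusion, just as in the discussion after \cref{prop:model-irrelevance-abstraction}.

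For the direction ($\Leftarrow$), suppose $r$ is a post-fixed point. The kernel inclusion says that the map $g \defeq Fq_r \compL t_X: X \to FQ$ is constant on $r$-classes. By the universal property of the quotient (as a coequalizer in sets), $g$ factors uniquely as $g = t_Q \compL q_r$ for some $t_Q: Q \to FQ$. Concretely, $t_Q([x]) \defeq Fq_r(t_X(x))$, which is well defined exactly because of the kernel inclusion. The resulting equation $t_Q \compL q_r = Fq_r \compL t_X$ is precisely \cref{eq:coalgebra-homomorphism-morphism}. Hence $q_r$ is a homomorphism from $(X, t_X)$ to $(Q, t_Q)$.

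For the direction ($\Rightarrow$), suppose there is some coalgebra structure $t_Q$ on $Q$ with $t_Q \compL q_r = Fq_r \compL t_X$. Take $x_1, x_2$ with $r(x_1, x_2)$. Then $q_r(x_1) = q_r(x_2)$, and therefore
\[
Fq_r(t_X(x_1)) = t_Q(q_r(x_1)) = t_Q(q_r(x_2)) = Fq_r(t_X(x_2)).
\]
So $\closure_X(r)(x_1, x_2)$ holds, and $r \preceq \closure_X(r)$.

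Before this, I would record the two routine facts implicit in the statement.
\begin{itemize}
\item The identity $\ker(Fq_r \compL t_X) = \pullback{t_X}\ker(Fq_r)$ is immediate, since pullback is precomposition and $\ker$ is the pullback of equality.
\item Monotonicity of $\closure_X$: if $r \preceq r'$, then $q_{r'}$ factors as $k \compL q_r$ for the canonical map $k: X/r \to X/r'$. Functoriality gives $Fq_{r'} = Fk \compL Fq_r$, so $\ker(Fq_r) \preceq \ker(Fq_{r'})$.
\end{itemize}

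I expect no real obstacle. The only point needing care is the meaning of ``$q_r$ is a homomorphism'' when no coalgebra on $X/r$ is given in advance. I would read it as ``there exists $t_Q$ making $q_r$ a homomorphism'', in the spirit of \cref{def:abstraction}. Under that reading, the $\Leftarrow$ direction uses surjectivity of $q_r$ only implicitly, through the factorization property, and the structure $t_Q$ is moreover unique because $q_r$ is an epimorphism.
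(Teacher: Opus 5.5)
Your proposal is correct and follows essentially the same route as the paper's proof. Both reduce the post-fixed-point condition to the kernel inclusion $\ker q_r \limp \ker(Fq_r \compL t_X)$, then factor $Fq_r \compL t_X$ through the quotient for one direction and compute directly from $t_{X/r} \compL q_r = Fq_r \compL t_X$ for the other. Your extra checks of the pullback identity and of the monotonicity of $\closure_X$ are correct; the paper leaves these implicit.
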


In other words, the map from $r: X \times X \to \truth$ to $\ker(F{q_r}): FX \times FX \to \truth$ is exactly the lifting that characterizes whether the quotient map of an equivalence is a homomorphism.
When we instantiate this result to the $F_\Moore$ functor, we get the usual definition of bisimulation equivalences.
We defer the details to \cref{sec:proofs}.

Consequently, \cref{prop:model-irrelevance-abstraction,prop:homomorphism-bisimulation} together imply that model-irrelevance abstractions are quotient maps of bisimulation equivalences.
This explains why self-prediction can recover bisimulation-based abstractions without explicitly computing bisimulation metrics, whose standard formulations require optimization over couplings \citep{ferns2004metrics,castro2021mico, calo2024bisimulation}.


\subsection{Value function and value equivalence}

In \cref{ssec:transformation}, policy-dependent transition is modeled as a natural transformation from $F_\Moore$ in \cref{ex:moore-functor} to $F_\Markov$ in \cref{ex:hmm-functor}.
Together with \cref{prop:homomorphism-bisimulation}, this explains policy-dependent bisimulation equivalence \citep{castro2020scalable} as the bisimulation structure obtained after closing the loop with a policy.

In \cref{ssec:pullback}, we showed that the Bellman operator is the closure operator induced by the reward aggregation lifting in \cref{ex:reward-aggregation} and a policy-dependent transition system of type $F_\Markov$, and the \emph{value function} $v^\policy$ is a fixed point of this operator.
Its kernel, defined by
\begin{equation}
(s \sim_v s') \defeq (v^\policy(s) = v^\policy(s')),
\end{equation}
is the \emph{value equivalence} relation \citep{li2006towards, li2009unifying, castro2009equivalence, abel2016near, abel2020value, grimm2020value}.
Unlike bisimulation, value equivalence need not preserve the one-step decomposition of behavior: equal values may result from different immediate cumulants and different future distributions that cancel after aggregation.
Thus value equivalence is better viewed here as a relation derived from a unary behavioral structure, rather than as a behavioral relation specified directly by a one-step binary lifting.\looseness-1

\section{Conclusion}
\paragraph{Summary}

We introduce a compositional foundation for defining and analyzing behavioral structures in RL, bringing unary invariants, value functions, and bisimulation relations or metrics into a single transferable framework.
The system dynamics are modeled as coalgebras, a central and widely used notion of state abstraction is characterized as coalgebra homomorphisms, and behavioral structures are generated uniformly from system dynamics via a general lifting-and-pullback construction.
We prove sound transfer via pullback and pushforward, and derive quantitative metrics from logical behavioral semantics.


\paragraph{Compositionality}

In this paper, \say{compositional} means that global behavioral semantics are built by composing smaller components.
At the system level, components such as transition, observation, and policy are composed into coalgebraic behavioral systems via \emph{functor} constructions.
At the semantics level, behavioral structures are defined via \emph{lifting} using a wide choice of operators.
At the abstraction level, these structures are composed with abstraction maps by pullback and pushforward, giving transfer results rather than case-specific proofs.


\paragraph{Scope, limitations, and future work}

This work focuses on strict homomorphisms and exact structural preservation.
A natural next step is to study relaxed commutation with explicit slack via \emph{lax homomorphisms}.
From there, one can develop a quantitative theory of approximation that turns this slack into weaker transfer guarantees, practical diagnostics, or training objectives.
The bundle viewpoint also supports symbolic specifications, including safety and temporal logic constraints, although a fuller treatment lies beyond the scope of this paper.
Another direction is to study weaker notions of \emph{behavioral structure-preserving maps}, which may enable more flexible abstractions while retaining some behavioral guarantees.
Finally, we have focused on the theoretical foundations of behavioral structures, but an important next step is to develop practical algorithms for learning and leveraging these structures in RL.


\newpage
\addcontentsline{toc}{section}{Acknowledgements}
\section*{Acknowledgements}
We are grateful to Tianwei Ni for helpful discussions on representation learning in RL, and Doina Precup for valuable discussion and feedback.
We thank the anonymous reviewers for their constructive feedback and suggestions.

YZ was supported by JSPS KAKENHI Grant Number JP26K21299.
ZL was supported by the Canada CIFAR AI Chair Program and the R3AI program of the Canada First
Research Excellence Fund.
MB was supported by the Advanced Research + Invention Agency (ARIA) through project code MSAI-SE01-P011.

\addcontentsline{toc}{section}{Impact Statement}
\section*{Impact Statement}
This paper develops theoretical tools intended to advance reinforcement learning.
The results are primarily conceptual and methodological, and we do not anticipate immediate direct societal impacts.

\addcontentsline{toc}{section}{Bibliography}
\bibliography{references,references_algebra,references_reinforcement}
\bibliographystyle{icml2026}


\clearpage
\onecolumn

{
\hypersetup{linkcolor=black}
\tableofcontents

\clearpage
\section*{Symbols}
\begin{adjustbox}{max width=\linewidth}
\begin{tabular}{lll}
\toprule
\textbf{Symbol} & \textbf{Name} & \textbf{Meaning}
\\
\midrule
\multicolumn{3}{@{}l}{\textit{Spaces}}
\\
$X$, $Y$ & generic spaces & Arbitrary sets used in the general definitions, which can be instantiated to $\States$, $\Representations$, etc.
\\
$\States$ & state space & Concrete state space of the RL environment.
\\
$\Representations$ & representation space & Abstract state or learned representation space.
\\
$\Actions$ & action space & Set of actions available to the agent.
\\
$\Observations$ & observation space & Set of observations emitted by the environment.
\\
$\Values$ & value space & Codomain of bundles, typically $\truth$, $\quant$, $\R$, or a vector space $\Phi$.
\\
$\Probability{X}$ & distributions on $X$ & Set of finitely supported probability distributions on $X$.
\\
\midrule
\multicolumn{3}{@{}l}{\textit{Systems and dynamics}}
\\
$F$ & functor, system type & Endofunctor specifying the one-step behavior space $FX$.
\\
$F_\Moore: X \mapsto (\Probability{X})^\Actions \times \Observations$ & stochastic Moore type & Environment type $(\Probability{X})^\Actions \times \Observations$.
\\
$F_\Markov: X \mapsto \Probability{X} \times \Observations$ & hidden Markov type & Policy-conditioned type $\Probability{X} \times \Observations$.
\\
$t_X: X \to FX$ & system & $F$-coalgebra describing one-step behavior on $X$.
\\
$\transition_X: X \times \Actions \to \Probability{X}$ & transition map & Maps a state and action to a next-state distribution.
\\
$\observation_X: X \to \Observations$ & observation map & Maps a state to its emitted observation.
\\
\midrule
\multicolumn{3}{@{}l}{\textit{Abstraction and agents}}
\\
$\policy: \Observations \to \Actions$ & policy & Agent map from observations to actions.
\\
$\encoder: \States \to \Representations$ & encoder & State abstraction map.
\\
\midrule
\multicolumn{3}{@{}l}{\textit{Behavioral structures}}
\\
$h_X: X^n \to \Values$ & $n$-ary bundle & $n$-ary structure on $X$, such as a predicate, value, relation, or metric.
\\
$\preceq: \Values \times \Values \to \truth$ & preorder & Order used to compare bundles pointwise.
\\
$\lift_X: h_X \mapsto h_{FX}$ & lifting & Constructs a bundle on $FX$ from a bundle on $X$. 
\\
$\pullback{f}: h_Y \mapsto h_X$ & pullback & Contravariant transport (i.e., from $Y$ to $X$) of bundles by precomposition along $f: X \to Y$.
\\
$\pushforward{f}: h_X \mapsto h_Y$ & pushforward & Covariant transport (i.e., from $X$ to $Y$) of bundles by joining over fibers of $f: X \to Y$.
\\
$\closure_X = \pullback{t_X} \compL \lift_X$ & closure operator & Composition of a pullback and a lifting whose post-fixed points are behavioral structures.
\\
\midrule
\multicolumn{3}{@{}l}{\textit{Operators for defining liftings}}
\\
$\combinator: \Values \times \Values \to \Values$ & combinator & Operator combining multiple values, e.g., conjunction or addition.
\\
$\aggregator_X: \Values^X \to \Values$ & aggregator & Operator aggregating over input-indexed components, e.g., $\forall$ or $\sum$.
\\
$\barycenter: \Probability{\Values} \to \Values$ & barycenter & Operator aggregating over probability distributions, e.g., almost-sure truth or expectation.
\\
\bottomrule
\end{tabular}
\end{adjustbox}

\listoffigures
\listoftables
}

\appendix

\clearpage
\section{Preliminaries}
\label{sec:preliminaries}
In this section, we recall some basic definitions and key results in \emph{category theory} that we will use throughout the development of our theory.
Readers are referred to standard textbooks for more details \citep{maclane1978categories, awodey2010category}.


\subsection{Category, functor, and natural transformation}
\label{ssec:category-theory}

The central categorical concepts are categories, functors, and natural transformations:

\begin{definition}[Category]
A \emph{category} $\cC = (\Obj, \Hom, \compL, \id)$ consists of
\begin{itemize}
\item a collection $\Obj$ of \emph{objects},
\item a set $\Hom(A, B)$ of \emph{morphisms} between objects,
\item an \emph{identity morphism} $\id_A \in \Hom(A, A)$ for each object, and
\item a \emph{composition function} $\compL: \Hom(B, C) \times \Hom(A, B) \to \Hom(A, C)$ for each triplet of objects,
\end{itemize}
subject to
\emph{identity} $\id_B \compL f = f = f \compL \id_A$
and
\emph{associativity} $(h \compL g) \compL f = h \compL (g \compL f)$.
\end{definition}

\begin{definition}[Functor]
For categories $\cC$ and $\cD$, a \emph{functor} $F: \cC \to \cD$ consists of two mappings, which send
\begin{itemize}
\item $\cC$-objects to $\cD$-objects
$F: \Obj_\cC \to \Obj_\cD$
and
\item $\cC$-morphisms to $\cD$-morphisms
$F: \Hom_\cC(A, B) \to \Hom_\cD(FA, FB)$
\end{itemize}
such that $F$ preserves
\emph{identity} $F\id_{A} = \id_{FA}$
and
\emph{composition} $F(g \compL_\cC f) = Fg \compL_\cD Ff$.
\end{definition}

\begin{definition}[Natural transformation]
A \emph{natural transformation} $\alpha: F \nat G$ between two functors $F, G: \cC \to \cD$ is a collection of $\cD$-morphisms $\alpha_A: FA \to GA$ indexed by $\cC$-objects, satisfying the \emph{naturality} condition
\begin{subequations}
\begin{equation}
\begin{tikzcd}
FA
\arrow[r, "Ff"]
\arrow[d, "\alpha_A"']
&
FB
\arrow[d, "\alpha_B"]
\\
GA
\arrow[r, "Gf"']
&
GB
\end{tikzcd}
\end{equation}
\begin{equation}
\alpha_B \compL Ff = Gf \compL \alpha_A.
\end{equation}
\end{subequations}
\end{definition}

In this work, we will work with the following categorical ingredients:
\begin{itemize}
\item a category $\cC$ with finite products
\item an endofunctor $F: \cC \to \cC$
\item $\cC$-objects $\Values$ and $\Values'$
\item an arity $n \in \N$
\end{itemize}


\subsection{Algebra and coalgebra}
\label{ssec:algebra-coalgebra}

Next, we review the definitions of algebras and coalgebras for an endofunctor and present a key result that induces functors between coalgebra categories.

\begin{definition}[Algebra]
\label{def:algebra}
An \emph{$F$-algebra} $(A, a)$ is a $\cC$-object $A$ together with a $\cC$-morphism $a: FA \to A$.
\end{definition}

\begin{definition}[Coalgebra]
\label{def:coalgebra}
An \emph{$F$-coalgebra} $(A, a)$ is a $\cC$-object $A$ together with a $\cC$-morphism $a: A \to FA$.
\end{definition}

\begin{definition}[Algebra homomorphism]
Given two $F$-algebras $(A, a)$ and $(B, b)$, an \emph{$F$-algebra homomorphism} $f: (A, a) \to (B, b)$ is a $\cC$-morphism $f: A \to B$ such that
\begin{subequations}
\begin{equation}
\begin{tikzcd}
FA
\arrow[r, "Ff"]
\arrow[d, "a"']
&
FB
\arrow[d, "b"]
\\
A
\arrow[r, "f"']
&
B
\end{tikzcd}
\end{equation}
\begin{equation}
f \compL a = b \compL Ff.
\end{equation}
\end{subequations}
\end{definition}

The \emph{category $\cC^F$ of $F$-algebras} has $F$-algebras as objects and $F$-algebra homomorphisms as morphisms.

\begin{definition}[Coalgebra homomorphism]
Given two $F$-coalgebras $(A, a)$ and $(B, b)$, an \emph{$F$-coalgebra homomorphism} $f: (A, a) \to (B, b)$ is a $\cC$-morphism $f: A \to B$ such that
\begin{subequations}
\begin{equation}
\begin{tikzcd}
FA
\arrow[r, "Ff"]
&
FB
\\
A
\arrow[r, "f"']
\arrow[u, "a"]
&
B
\arrow[u, "b"']
\end{tikzcd}
\end{equation}
\begin{equation}
Ff \compL a = b \compL f.
\end{equation}
\end{subequations}
\end{definition}

The \emph{category $\cC_F$ of $F$-coalgebras} has $F$-coalgebras as objects and $F$-coalgebra homomorphisms as morphisms.

\begin{definition}[Coalgebra-algebra homomorphism]
Given an $F$-coalgebra $(A, a)$ and an $F$-algebra $(B, b)$, an \emph{$F$-coalgebra-algebra homomorphism} $f: (A, a) \to (B, b)$ is a $\cC$-morphism $f: A \to B$ such that
\begin{subequations}
\begin{equation}
\begin{tikzcd}
FA
\arrow[r, "Ff"]
&
FB
\arrow[d, "b"]
\\
A
\arrow[r, "f"']
\arrow[u, "a"]
&
B
\end{tikzcd}
\end{equation}
\begin{equation}
f = b \compL Ff \compL a.
\end{equation}
\end{subequations}
\end{definition}

Readers are referred to \citet{rutten2000universal, jacobs2016introduction} for a comprehensive treatment of coalgebras and their applications.

In the main text, we primarily focus on the stochastic Moore machine functor $(\Probability(-))^\Actions \times \Observations$ with stochastic transition and deterministic observation.
Other related functors modeling different types of systems include:
\begin{itemize}
\item Moore machine $(-)^\Actions \times \Observations$ with deterministic transition and observation,
\item Moore machine $(-)^\Actions \times \Observations + 1$ with deterministic transition, observation, and termination,
\item Moore machine $(\Probability{(-)})^\Actions \times \Probability{\Observations}$ with stochastic but independent transition and observation,
\item Moore machine $\Probability{((-)^\Actions \times \Observations)}$ with correlated transition and observation, and
\item Mealy machine $(\Probability{(- \times \Observations)})^\Actions$ with stochastic transition and action-dependent observation.
\end{itemize}

They capture different input-output interfaces and dependency structures.


\subsection{Hom-set and hom-functor}

The \emph{hom-set} $\Hom_\cC(A, B)$ of a category $\cC$ is the set of morphisms from $A$ to $B$.
In particular, we focus on the hom-set $\Hom_\cC(A, \Values)$ of morphisms from a $\cC$-object $A$ to a value object $\Values$.
For each $\cC$-object $A$, we define a pointwise preorder $\preceq$ on the hom-set $\Hom_\cC(A, \Values)$.

\begin{definition}[Pointwise preorder]
\label{def:pointwise-preorder}
Let $\preceq_\Values: \Values \times \Values \to \truth$ be an internal preorder on $\Values$.
Write $\lT_A: A \to \truth$ for the constantly-true predicate on any $\cC$-object $A$.
Define the pointwise preorder on $\Hom_\cC(A, \Values)$ by
\begin{equation}
\forall h, h': A \to \Values.\;
h \preceq h'
\defeq
({\preceq_\Values} \compL \angles{h, h'} = \lT_A).
\end{equation}
\end{definition}

This is equivalent to the usual generalized-element formulation when $\cC$ is well-pointed:
\begin{equation}
\forall h, h': A \to \Values.\;
h \preceq h'
\defeq
\forall a: 1 \to A.\;
h \compL a \preceq h' \compL a.
\end{equation}

\begin{definition}[Order-preserving and order-reflecting function]
A function $f: A \to B$ between preordered sets $(A, \preceq_A)$ and $(B, \preceq_B)$ is \emph{order-preserving} if
\begin{equation}
\forall a, a' \in A.\;
(a \preceq_A a') \limp (f(a) \preceq_B f(a')),
\end{equation}
and \emph{order-reflecting} if
\begin{equation}
\forall a, a' \in A.\;
(a \preceq_A a') \lcim (f(a) \preceq_B f(a')).
\end{equation}
\end{definition}

A key tool that we will use is the \emph{Yoneda lemma}, which relates natural transformations from hom-functors to elements of functor values.

\begin{definition}[Hom-functor]
The \emph{contravariant hom-functor} $\Hom_\cC(-, \Values): \cC\op \to \cSet$, abbreviated as $H_\Values$,
\begin{equation}
\begin{array}{cccc}
H_\Values \defeq \Hom_\cC(-, \Values): & \cC\op & \to & \cSet
\\
& B & \mapsto & \Hom_\cC(B, \Values)
\\
& f: A \to B & \mapsto & \Hom_\cC(f, \Values): \Hom_\cC(B, \Values) \to \Hom_\cC(A, \Values)
\end{array}
\end{equation}
maps each $\cC$-object $B$ to the hom-set of $\cC$-morphisms from $B$ to $\Values$, and each $\cC$-morphism $f: A \to B$ to the function $(-) \compL f$ that precomposes with $f$.
\end{definition}

We denote $H^n_\Values \defeq H_\Values \compL (-)^n = \Hom_\cC((-)^n, \Values): \cC\op \to \cSet$ as the composition of the contravariant hom-functor $H_\Values$ and the product functor $(-)^n: \cC \to \cC$.

\begin{lemma}[Yoneda lemma]
\label{thm:yoneda}
Given a contravariant functor $H: \cC\op \to \cSet$, there exists a bijection between natural transformations $H_\Values \nat H$ and elements of $H\Values$:
\begin{equation}
\Hom_{[\cC\op, \cSet]}(H_\Values, H) \iso H\Values.
\end{equation}
\end{lemma}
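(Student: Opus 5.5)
The plan is the standard evaluation-at-the-identity argument, with care taken over the contravariance of $H_\Values$ and $H$. I will define two maps in opposite directions and show that they are mutually inverse. In one direction, send a natural transformation $\alpha: H_\Values \nat H$ to the element $\Phi(\alpha) \defeq \alpha_\Values(\id_\Values) \in H\Values$. This makes sense because $\id_\Values \in \Hom_\cC(\Values, \Values) = H_\Values \Values$. In the other direction, send an element $u \in H\Values$ to the family $\Psi(u)$ whose component at a $\cC$-object $B$ is
\begin{equation}
\Psi(u)_B: \Hom_\cC(B, \Values) \to HB,
\qquad
g \mapsto H(g)(u).
\end{equation}
Here a morphism $g: B \to \Values$ is viewed as a morphism $\Values \to B$ in $\cC\op$, so $H(g): H\Values \to HB$.

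The first step is to check that $\Psi(u)$ is natural. Fix $f: A \to B$ in $\cC$ and $g \in \Hom_\cC(B, \Values)$. Going one way around the square, $H_\Values(f)(g) = g \compL f$, so $\Psi(u)_A(g \compL f) = H(g \compL f)(u)$. Contravariant functoriality of $H$ gives $H(g \compL f) = H(f) \compL H(g)$. Hence this equals $H(f)(\Psi(u)_B(g))$, which is the other way around the square.

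The second step is to verify $\Phi \compL \Psi = \id$. We have $\Phi(\Psi(u)) = H(\id_\Values)(u) = u$, since $H$ preserves identities.

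The third step is to verify $\Psi \compL \Phi = \id$, and this is where the naturality of $\alpha$ is used essentially. For $g: B \to \Values$, apply the naturality square of $\alpha$ to the morphism $g$ itself, evaluated at $\id_\Values \in H_\Values\Values$. This gives
\begin{equation}
\alpha_B(\id_\Values \compL g) = H(g)(\alpha_\Values(\id_\Values)).
\end{equation}
The left-hand side is $\alpha_B(g)$ and the right-hand side is $\Psi(\Phi(\alpha))_B(g)$. Therefore $\Psi(\Phi(\alpha)) = \alpha$ componentwise, so the two families agree as natural transformations.

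I expect no conceptual obstacle. The only delicate point is the variance bookkeeping in the third step. One must orient the naturality square for a contravariant functor correctly: a $\cC$-morphism $g: B \to \Values$ induces maps from the $\Values$-component to the $B$-component. One must also note that evaluating at $\id_\Values$ is what turns the square into the identity $\alpha_B(g) = H(g)(\alpha_\Values(\id_\Values))$. Optionally, I would also remark that the bijection is natural in $H$ and in $\Values$. The statement only asks for a bijection, so I would not prove this, and I would leave aside the size issue that $\Hom_{[\cC\op, \cSet]}(H_\Values, H)$ is a set, since the bijection exhibits it as one.
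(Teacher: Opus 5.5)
Your proposal is correct: it is the standard evaluation-at-$\id_\Values$ argument, and the variance bookkeeping is right both in the naturality check for $\Psi(u)$ and in recovering $\alpha_B(g) = H(g)(\alpha_\Values(\id_\Values))$. The paper does not prove this lemma; it only recalls it from standard category theory textbooks, where essentially this same argument appears.
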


\begin{corollary}
\label{cor:yoneda-hom-functor}
Given two $\cC$-objects $\Values$ and $\Values'$, there exists a bijection between natural transformations $H_\Values \nat H_{\Values'}$ and $\cC$-morphisms $\Values \to \Values'$:
\begin{equation}
\Hom_{[\cC\op, \cSet]}(H_\Values, H_{\Values'}) \iso \Hom_\cC(\Values, \Values').
\end{equation}
\end{corollary}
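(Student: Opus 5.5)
The plan is to apply the Yoneda lemma (\cref{thm:yoneda}) with the particular contravariant functor $H \defeq H_{\Values'} = \Hom_\cC(-, \Values')$. The lemma then gives $\Hom_{[\cC\op, \cSet]}(H_\Values, H_{\Values'}) \iso H_{\Values'}\Values$. By definition of the hom-functor, $H_{\Values'}\Values = \Hom_\cC(\Values, \Values')$, which is exactly the claimed bijection. Formally, this one substitution is the whole proof.

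To make the corollary usable later, for instance when transporting bundles along a map $\Values \to \Values'$ such as the zero predicate $z$ in \cref{thm:logic-metric}, I will also write out both directions of the bijection and check them directly.

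\textbf{Forward direction.} Send a morphism $g: \Values \to \Values'$ to the family of postcomposition maps
\begin{equation*}
(g \compL -)_B: \Hom_\cC(B, \Values) \to \Hom_\cC(B, \Values'), \qquad h \mapsto g \compL h .
\end{equation*}
Naturality requires, for every $f: A \to B$ and every $h: B \to \Values$, that $(g \compL h) \compL f = g \compL (h \compL f)$. This is just associativity of composition, since $H_\Values f$ and $H_{\Values'} f$ both act by precomposition with $f$.

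\textbf{Backward direction.} Send a natural transformation $\alpha: H_\Values \nat H_{\Values'}$ to $\alpha_\Values(\id_\Values) \in \Hom_\cC(\Values, \Values')$.

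\textbf{Mutual inverses.} One composite is immediate: $g \mapsto g \compL \id_\Values = g$. For the other, apply naturality of $\alpha$ to an arbitrary $h: B \to \Values$, regarded as the morphism along which we precompose, and evaluate at $\id_\Values$. This gives
\begin{equation*}
\alpha_B(\id_\Values \compL h) = \alpha_\Values(\id_\Values) \compL h ,
\end{equation*}
so $\alpha_B = \alpha_\Values(\id_\Values) \compL -$. Hence $\alpha$ is recovered from $\alpha_\Values(\id_\Values)$, and the two assignments are inverse to each other.

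No step is a genuine obstacle. The only point needing care is variance: because $H_\Values$ is contravariant, the naturality square must be read with $f: A \to B$ inducing maps $\Hom(B, -) \to \Hom(A, -)$. Once the square is oriented that way, the identity $\alpha_B(h) = \alpha_\Values(\id_\Values) \compL h$ follows by evaluating at $\id_\Values$.
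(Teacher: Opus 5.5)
Your proposal is correct and takes the same approach as the paper: the corollary follows by instantiating the Yoneda lemma at $H = H_{\Values'}$, since $H_{\Values'}\Values = \Hom_\cC(\Values, \Values')$. Your explicit inverse pair, $g \mapsto g \compL -$ and $\alpha \mapsto \alpha_\Values(\id_\Values)$, with the naturality check oriented contravariantly, is a correct unpacking of that bijection and adds nothing the paper's argument depends on.
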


\begin{corollary}
\label{cor:yoneda-F-algebra}
Given an endofunctor $F: \cC \to \cC$, there exists a bijection between natural transformations $H_\Values \nat H_\Values F$ and $\cC$-morphisms $F\Values \to \Values$:
\begin{equation}
\Hom_{[\cC\op, \cSet]}(H_\Values, H_\Values F) \iso \Hom_\cC(F\Values, \Values).
\end{equation}
\end{corollary}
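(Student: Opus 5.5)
The plan is to obtain this as a direct instance of \cref{thm:yoneda}, taking $H$ to be the composite $H_\Values F \defeq \Hom_\cC(F(-), \Values): \cC\op \to \cSet$. I will check that this really is a contravariant functor, then read off $H_\Values F(\Values) = \Hom_\cC(F\Values, \Values)$, and finally make the bijection explicit so that it can be used later.

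\emph{Functoriality.} On objects the composite sends $B \mapsto \Hom_\cC(FB, \Values)$. On a morphism $f: A \to B$ it sends $f$ to precomposition with $Ff$, that is,
\begin{equation}
(-) \compL Ff: \Hom_\cC(FB, \Values) \to \Hom_\cC(FA, \Values).
\end{equation}
Preservation of identities and the reversal of composition follow from $F$ preserving identities and composition, together with associativity of composition in $\cC$. So $H_\Values F$ is a legitimate functor $\cC\op \to \cSet$, and \cref{thm:yoneda} applies to it.

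\emph{Explicit bijection.} Given $a: F\Values \to \Values$, define $\theta^a_B(h) \defeq a \compL Fh$ for $h: B \to \Values$. Naturality in $B$ means that for $f: A \to B$,
\begin{equation}
a \compL F(h \compL f) = (a \compL Fh) \compL Ff,
\end{equation}
which holds by functoriality of $F$. In the other direction, send a natural transformation $\theta: H_\Values \nat H_\Values F$ to $\theta_\Values(\id_\Values) \in \Hom_\cC(F\Values, \Values)$.

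\emph{Round trips.} Starting from $a$, we get $\theta^a_\Values(\id_\Values) = a \compL F\id_\Values = a$. Starting from $\theta$, fix $h: B \to \Values$. Apply naturality of $\theta$ to the morphism $h$ itself, evaluated at $\id_\Values \in H_\Values(\Values)$. Since $H_\Values(h)(\id_\Values) = h$ and $H_\Values F(h)$ is precomposition with $Fh$, this gives
\begin{equation}
\theta_B(h) = \theta_\Values(\id_\Values) \compL Fh.
\end{equation}
Hence $\theta = \theta^{\theta_\Values(\id_\Values)}$, and the two maps are mutually inverse.

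\emph{Main obstacle.} There is no real difficulty; the only care needed is bookkeeping of variance. One must confirm that precomposing with $F$ keeps the functor contravariant, and that naturality is applied along $h: B \to \Values$ in the correct direction.

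This explicit form, $h \mapsto a \compL Fh$, is exactly the shape of a lifting built from an algebra structure on $\Values$ (as in \cref{ex:unary-bundle-lifting}). That is the reason to record the bijection concretely rather than only cite Yoneda.
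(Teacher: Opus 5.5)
Your proposal is correct and takes the same route as the paper, which obtains the corollary by instantiating the Yoneda lemma (\cref{thm:yoneda}) at $H = H_\Values F$ and reading off $H_\Values F(\Values) = \Hom_\cC(F\Values, \Values)$. The explicit bijection $h \mapsto a \compL Fh$ and the round-trip checks just unpack that instance, and they match the paper's later remark that a unary bundle $h_A$ is lifted to $f_\Values \compL Fh_A$.
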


Note that a $\cC$-morphism $F\Values \to \Values$ is exactly an \emph{$F$-algebra} on $\Values$.

\clearpage
\section{Lifting}
\label{sec:lifting}
Next, we introduce the general notion of functor lifting, which is widely used in coalgebraic modal logic \citep[Chapter~5]{kurz2001coalgebras}; \citet{pattinson2003coalgebraic, hasuo2007generic, baldan2014behavioral}.
Special instances of functor lifting, such as relation lifting and metric lifting, have been used in the study of bisimulation \citep{kurz2016relation}.

The most general definition of functor lifting is as follows:

\begin{definition}[Functor lifting]
A \emph{lifting} of an endofunctor $F: \cC \to \cC$ along a forgetful functor $U: \cD \to \cC$ is a functor $\cD(F): \cD \to \cD$ such that
\begin{subequations}
\begin{equation}
\begin{tikzcd}
\cD
\arrow[r, "\cD(F)"]
\arrow[d, "U"']
&
\cD
\arrow[d, "U"]
\\
\cC
\arrow[r, "F"']
&
\cC
\end{tikzcd}
\end{equation}
\begin{equation}
U \cD(F) = FU.
\end{equation}
\end{subequations}
\end{definition}

In other words, a lifted functor is the same functor but with a systematically chosen structure.


\subsection{Bundle lifting}

Next, we introduce an instance of functor lifting, which we refer to as \emph{bundle lifting}.
We follow the following procedure:


\paragraph{1.~Define a category of bundles}

A \emph{bundle} over a $\cC$-object $\Values$ is simply a $\cC$-object $A$ equipped with a $\cC$-morphism $h_A: A \to \Values$.
A $n$-ary bundle $A^n \to \Values$ is a generalization where the domain is the $n$-fold product $A^n$.

\begin{definition}[Bundle]
A \emph{$n$-ary bundle} over $\Values$ is a pair $(A, h_A: A^n \to \Values)$ of a $\cC$-object $A$ and a $\cC$-morphism $h_A: A^n \to \Values$ from the product $A^n$ to $\Values$.
A \emph{lax bundle morphism} $f: (A, h_A) \to (B, h_B)$ between $n$-ary bundles is a $\cC$-morphism $f: A \to B$ such that
\begin{subequations}
\begin{equation}
\begin{tikzcd}[column sep=1em]
A^n
\arrow[rr, "f^n"{name=f}]
\arrow[rd, "h_A"']
&&
B^n
\arrow[ld, "h_B"]
\\
&
|[alias=V]|\Values
\arrow[phantom, from=f, to=V, "\preceq"]
\end{tikzcd}
\end{equation}
\begin{equation}
h_A \preceq h_B \compL f^n.
\end{equation}
\end{subequations}
The category $\cC^n_\Values$ of $n$-ary bundles over $\Values$ has $n$-ary bundles as objects and lax bundle morphisms as morphisms.
\end{definition}

When the preorder $\preceq$ is the equality, $\cC^n_\Values$ is the comma category $(-)^n \comma \Values$.
When $n = 1$, $\cC^1_\Values$ is the slice category $\cC \comma \Values$.


\paragraph{2.~Define a forgetful functor}

\begin{definition}
\label{def:forgetful-functor}
A forgetful functor $U: \cC^n_\Values \to \cC$ is given by
\begin{equation}
\begin{array}{cccc}
U : & \cC^n_\Values & \to & \cC
\\
& (A, h_A: A^n \to \Values) & \mapsto & A
\\
& f: (A, h_A) \to (B, h_B) & \mapsto & f: A \to B
\end{array}
\end{equation}
\end{definition}


\paragraph{3.~Define a functor lifting}

\begin{definition}[Bundle lifting]
\label{def:bundle-lifting}
The lifting $\cC^n_\Values(F): \cC^n_\Values \to \cC^n_\Values$ of an endofunctor $F: \cC \to \cC$ along $U: \cC^n_\Values \to \cC$ must have the form
\begin{equation}
\begin{array}{cccc}
\cC^n_\Values(F): & \cC^n_\Values & \to & \cC^n_\Values
\\
& (A, h_A: A^n \to \Values) & \mapsto & (FA, \lift_A(h_A): (FA)^n \to \Values)
\\
& f: (A, h_A) \to (B, h_B) & \mapsto & Ff: (FA, \lift_A(h_A)) \to (FB, \lift_B(h_B))
\end{array}
\end{equation}
where $\lift_A$ is a family of functions indexed by $\cC$-objects $A$, mapping each $\cC$-morphism $h_A: A^n \to \Values$ to a $\cC$-morphism $\lift_A(h_A): (FA)^n \to \Values$ such that
\begin{subequations}
\begin{equation}
\begin{tikzcd}[column sep=1em]
A^n
\arrow[rr, "f^n"{name=f}]
\arrow[rd, "h_A"']
&&
B^n
\arrow[ld, "h_B"]
\\
&
|[alias=V]|\Values
\\
(FA)^n
\arrow[rr, "(Ff)^n"'{name=Ff}]
\arrow[ru, "\lift_A(h_A)"]
&&
(FB)^n
\arrow[lu, "\lift_B(h_B)"']
\arrow[phantom, from=f, to=V, "\preceq"]
\arrow[phantom, from=Ff, to=V, "\preceq"]
\end{tikzcd}
\end{equation}
\begin{equation}
\label{eq:bundle-lifting-morphism}
\forall h_A: A^n \to \Values.\;
\forall h_B: B^n \to \Values.\;
\forall f: A \to B.\;
(h_A \preceq h_B \compL f^n)
\limp
(\lift_A(h_A) \preceq \lift_B(h_B) \compL (Ff)^n).
\end{equation}
\end{subequations}
\end{definition}

The following two properties of bundle liftings are immediate consequences of \cref{def:bundle-lifting}.


\paragraph{A lifting corresponds to a modality}

A \emph{modality} is an (oplax) natural transformation between contravariant hom-functors \citep{kurz2001coalgebras}.
We show that a bundle lifting corresponds to a modality.

\begin{proposition}
The family of functions $\lift_A: H^n_\Values A \to H^n_\Values F A$ of a lifting in \cref{def:bundle-lifting} forms an oplax natural transformation $\lift: H^n_\Values \nat H^n_\Values F$ between contravariant functors, called an $n$-ary modality:
\begin{subequations}
\begin{equation}
\begin{tikzcd}[column sep=5em]
H^n_\Values A
\arrow[d, "\lift_A"'{name=A}]
&
H^n_\Values B
\arrow[l, "H^n_\Values f"']
\arrow[d, "\lift_B"{name=B}]
\\
H^n_\Values F A
&
H^n_\Values F B
\arrow[l, "H^n_\Values F f"]
\arrow[phantom, from=A, to=B, "\preceq"]
\end{tikzcd}
\end{equation}
\begin{equation}
\label{eq:modality-morphism}
\lift_A \compL H^n_\Values f \preceq H^n_\Values F f \compL \lift_B.
\end{equation}
\end{subequations}
\end{proposition}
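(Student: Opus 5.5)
We need to show that for any $f: A \to B$ and $h_B: B^n \to \Values$, we have $\lift_A(h_B \compL f^n) \preceq \lift_B(h_B) \compL (Ff)^n$. This is the unfolded form of \cref{eq:modality-morphism}, with the preorder read pointwise on $H^n_\Values FA = \Hom_\cC((FA)^n, \Values)$ as in \cref{def:pointwise-preorder}.

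We first unfold both sides of \cref{eq:modality-morphism} on an arbitrary element $h_B \in H^n_\Values B$. Since $H^n_\Values f$ is precomposition with $f^n$, the left composite sends $h_B$ to $\lift_A(h_B \compL f^n)$. For the right composite, functoriality of $(-)^n$ gives $(Ff)^n = (F f)^n$, and $H^n_\Values F f$ is precomposition with $(Ff)^n$. So the right composite sends $h_B$ to $\lift_B(h_B) \compL (Ff)^n$. The oplax naturality condition is therefore exactly the pointwise inequality stated at the outset, ranging over all $h_B$.

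Next we instantiate the defining condition \cref{eq:bundle-lifting-morphism} of a bundle lifting, taking $h_A \defeq h_B \compL f^n$. Its hypothesis $h_A \preceq h_B \compL f^n$ then becomes $h_B \compL f^n \preceq h_B \compL f^n$. This holds by reflexivity of the pointwise preorder, which is inherited from reflexivity of $\preceq_\Values$: the composite ${\preceq_\Values} \compL \angles{g, g}$ factors through the diagonal and is constantly true. The conclusion of \cref{eq:bundle-lifting-morphism} is then $\lift_A(h_B \compL f^n) \preceq \lift_B(h_B) \compL (Ff)^n$, which is the required inequality. Since $h_B$ was arbitrary, this gives \cref{eq:modality-morphism} componentwise, and so $\lift$ is an oplax natural transformation $H^n_\Values \nat H^n_\Values F$.

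The main subtlety is bookkeeping rather than mathematics. One must check the direction of the 2-cell, so that "oplax" matches $\preceq$ as written, and that the order on hom-sets is the pointwise one, so that an inequality between functions $H^n_\Values B \to H^n_\Values FA$ means an inequality at each $h_B$. One should also confirm that contravariance of $H^n_\Values$ places $H^n_\Values f$ and $H^n_\Values Ff$ on the correct sides. No properties of the specific operators are needed: the statement follows from the lifting axiom together with reflexivity.
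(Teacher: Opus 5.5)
Your proposal is correct and follows the same route as the paper: instantiate the lifting condition \eqref{eq:bundle-lifting-morphism} with $h_A = h_B \compL f^n$, so that its hypothesis holds by reflexivity and its conclusion is exactly the oplax naturality inequality \eqref{eq:modality-morphism}. Your unfolding of the two composites and your reflexivity check just make explicit what the paper's one-line proof leaves implicit.
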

\begin{pf}
The functor lifting condition of $\cC^n_\Values(F)$ in \cref{eq:bundle-lifting-morphism} implies the oplax naturality condition of $\lift$ in \cref{eq:modality-morphism} by taking $h_A = h_B \compL f^n$.
\end{pf}

The following proposition shows that an order-preserving modality defines a functor lifting.

\begin{proposition}
If an $n$-ary modality $\lift: H^n_\Values \nat H^n_\Values F$ is order-preserving, i.e., $(h \preceq h') \limp (\lift_A(h) \preceq \lift_A(h'))$, then it defines a functor lifting $\cC^n_\Values(F): \cC^n_\Values \to \cC^n_\Values$ of $F$.
\end{proposition}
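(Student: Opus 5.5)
We need to show that an order-preserving oplax modality $\lift$ yields a functor lifting, i.e. that $\cC^n_\Values(F)$ as in \cref{def:bundle-lifting} is well defined. The data is already fixed by the modality: $(A, h_A) \mapsto (FA, \lift_A(h_A))$ on objects and $f \mapsto Ff$ on morphisms. Functoriality and the equation $U \cC^n_\Values(F) = F U$ follow at once from the functoriality of $F$, because the forgetful functor $U$ of \cref{def:forgetful-functor} does not change the underlying morphisms. So the only thing to check is that $Ff$ is again a lax bundle morphism whenever $f$ is. This is exactly the implication \cref{eq:bundle-lifting-morphism}.

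The plan is to split this implication into two inequalities and chain them. Let $f: (A, h_A) \to (B, h_B)$ be a lax bundle morphism, so $h_A \preceq h_B \compL f^n = H^n_\Values f (h_B)$.

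First, apply order preservation of $\lift_A$ to this inequality:
\begin{equation}
\lift_A(h_A) \preceq \lift_A(h_B \compL f^n).
\end{equation}

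Second, evaluate the oplax naturality condition \cref{eq:modality-morphism} at the element $h_B$. The functor $H^n_\Values F f$ acts by precomposition with $(Ff)^n$, so this gives
\begin{equation}
\lift_A(h_B \compL f^n) \preceq \lift_B(h_B) \compL (Ff)^n.
\end{equation}

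Finally, use transitivity of the pointwise preorder (\cref{def:pointwise-preorder}), which is inherited from the preorder on $\Values$. This yields $\lift_A(h_A) \preceq \lift_B(h_B) \compL (Ff)^n$, which says precisely that $Ff: (FA, \lift_A(h_A)) \to (FB, \lift_B(h_B))$ is a lax bundle morphism.

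The main point needing care is the direction of the oplax inequality. We must make sure that \cref{eq:modality-morphism}, read as an inequality between functions on hom-sets, unfolds to the second inequality above in that direction when applied pointwise to $h_B$. The direction is consistent with the preceding proposition, where taking $h_A = h_B \compL f^n$ in \cref{eq:bundle-lifting-morphism} recovers this same inequality. Everything else is bookkeeping.
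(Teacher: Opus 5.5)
Your proposal is correct and follows the same argument as the paper: apply order preservation to get $\lift_A(h_A) \preceq \lift_A(h_B \compL f^n)$, apply oplax naturality at $h_B$ to get $\lift_A(h_B \compL f^n) \preceq \lift_B(h_B) \compL (Ff)^n$, and conclude by transitivity. Your added remarks that functoriality and $U \cC^n_\Values(F) = FU$ follow from functoriality of $F$, and your check of the direction of the oplax inequality, are accurate.
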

\begin{pf}
If $h_A \preceq h_B \compL f^n$, then by order preservation, $\lift_A(h_A) \preceq \lift_A(h_B \compL f^n)$.
By oplax naturality, $\lift_A(h_B \compL f^n) \preceq \lift_B(h_B) \compL (Ff)^n$.
By transitivity, $\lift_A(h_A) \preceq \lift_B(h_B) \compL (Ff)^n$.
\end{pf}


\paragraph{A unary modality is an $F$-algebra}

According to \cref{cor:yoneda-F-algebra}, when $n = 1$, there exists a bijection between strictly natural modalities $\lift: H_\Values \nat H_\Values F$ and $F$-algebras $f_\Values: F\Values \to \Values$ such that a bundle $A \xto{h_A} \Values$ is lifted to $FA \xto{Fh_A} F\Values \xto{f_\Values} \Values$.

When $n > 1$, however, not every modality comes from a $\cC$-morphism $f_\Values: (F\Values)^n \to \Values$, as shown in the following counterexample:
\begin{example}
For $n = 2$ and $F = \id_\cC$, we have the following counterexample:
$\lift_A: H^2_\Values A \to H^2_\Values A: r_A \mapsto r_A \compL \angles{p_2, p_1}$, where $p_1, p_2: A \times A \to A$ are the projections.
$\lift$ is natural in $A$ but cannot be represented as $f_\Values \compL (-)$ for any $f_\Values: \Values \to \Values$.
\end{example}


\subsection{Unary bundle lifting}

Next, we investigate bundle liftings for $n = 1$.
We consider the category $\cP \defeq \cSet^1_\truth = \cSet \comma \truth$ of predicates and the category $\cQ \defeq \cSet^1_\quant = \cSet \comma \quant$ of quantities.
We generalize the lifting in \citet[Definition~6.1.1]{jacobs2016introduction} by instantiating \cref{thm:yoneda} for polynomial functors and using generic operators.
Whenever these constructions are used as functor liftings on lax bundle categories, we assume that their operators and component modalities are order-preserving.


\paragraph{Constant functor}

The constant functor $\Delta X: \cC \to \cC$ maps every $\cC$-object to a fixed $\cC$-object $X$ and every $\cC$-morphism to the identity morphism $\id_X$.

\begin{proposition}
Given two $\cC$-objects $\Values$ and $X$, the following bijection holds:
\begin{equation}
\Hom_{[\cC\op, \cSet]}\parens*{H_\Values, H_\Values \Delta X} 
\iso
\Hom_\cC(X, \Values).
\end{equation}
\end{proposition}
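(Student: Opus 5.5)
The plan is to reduce the statement to the Yoneda lemma, since $H_\Values \Delta X$ is itself a representable functor. By definition of the constant functor, $\Delta X$ sends every object $B$ to $X$ and every morphism to $\id_X$. Hence $H_\Values \Delta X$ sends $B \mapsto \Hom_\cC(X, \Values)$ and every $f$ to $\Hom_\cC(\id_X, \Values) = \id$. So $H_\Values \Delta X$ is the constant presheaf $\cC\op \to \cSet$ with value the set $S \defeq \Hom_\cC(X, \Values)$.

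Next I would apply \cref{thm:yoneda} with $H \defeq H_\Values \Delta X$. This gives
\begin{equation}
\Hom_{[\cC\op, \cSet]}(H_\Values, H_\Values \Delta X) \iso (H_\Values \Delta X)(\Values) = \Hom_\cC(X, \Values),
\end{equation}
which is exactly the claim.

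To make the bijection explicit, as the subsequent text presumably will when interpreting unary liftings for constant functors, I would unpack it. A natural transformation $\lift$ is sent to $\lift_\Values(\id_\Values) \in \Hom_\cC(X, \Values)$. Conversely, $c: X \to \Values$ is sent to the family $\lift_A(h_A) \defeq c$ for every $h_A: A \to \Values$. Naturality is immediate: the square for $f: A \to B$ requires $\lift_A(h_B \compL f) = \lift_B(h_B)$, and both sides equal $c$. For the round trip starting from a natural $\lift$, I would take $h_A: A \to \Values$ and apply naturality along $h_A$ itself. This gives $\lift_A(h_A) = \lift_A(\id_\Values \compL h_A) = \id \compL \lift_\Values(\id_\Values)$, so $\lift$ is the constant family determined by $\lift_\Values(\id_\Values)$. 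The other round trip is trivial.

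In the RL reading, this says that a lifting for the constant component (e.g.\ the observation factor $\Observations$) is just a fixed bundle $h_\Observations: \Observations \to \Values$, as used in \cref{ex:unary-bundle-lifting}. There is no real obstacle. The only point needing care is identifying the functorial action of $H_\Values \Delta X$ on morphisms as the identity, so that naturality degenerates to invariance along every $h_A$.
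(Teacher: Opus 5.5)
Your proof is correct and takes the same route as the paper. The paper obtains this bijection by instantiating the Yoneda lemma, equivalently \cref{cor:yoneda-F-algebra} with $F = \Delta X$ so that $F\Values = X$. Your explicit bijection $\lift \mapsto \lift_\Values(\id_\Values)$, with inverse given by constant families, matches the paper's remark that a lifting of the constant functor just picks a fixed morphism $X \to \Values$.
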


A lifting of the constant functor is just picking a fixed morphism $\lift^{\Delta X}_A(h_A): X \to \Values$.


\paragraph{Identity functor}

The identity functor $\id_\cC: \cC \to \cC$ maps every $\cC$-object and every $\cC$-morphism to itself.

\begin{proposition}
The following bijection holds:
\begin{equation}
\Hom_{[\cC\op, \cSet]}\parens*{H_\Values, H_\Values}
\iso
\Hom_\cC(\Values, \Values).
\end{equation}
\end{proposition}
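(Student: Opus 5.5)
The statement to prove is the bijection $\Hom_{[\cC\op, \cSet]}(H_\Values, H_\Values) \iso \Hom_\cC(\Values, \Values)$ for the identity functor. The plan is to read it off as the special case of \cref{cor:yoneda-hom-functor} with $\Values' = \Values$, or equivalently of \cref{cor:yoneda-F-algebra} with $F = \id_\cC$, since then $H_\Values F = H_\Values$ and an $F$-algebra on $\Values$ is just a morphism $\Values \to \Values$. We cite the corollary and then make the bijection explicit so that the lifting it describes is visible, as was done for the constant functor.

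The two directions are as follows.

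\textbf{From a natural transformation to a morphism.} Given $\lift: H_\Values \nat H_\Values$, we take its Yoneda element
\begin{equation}
g \defeq \lift_\Values(\id_\Values): \Values \to \Values.
\end{equation}

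\textbf{From a morphism to a natural transformation.} Given $g: \Values \to \Values$, we define $\lift^g_A(h_A) \defeq g \compL h_A$ for $h_A: A \to \Values$. Naturality is just associativity of composition:
\begin{equation}
\lift^g_A(h_B \compL f) = g \compL h_B \compL f = \lift^g_B(h_B) \compL f.
\end{equation}

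\textbf{Mutual inverses.} In one direction, $\lift^g_\Values(\id_\Values) = g$ is immediate. In the other, naturality of $\lift$ applied to the morphism $h_A: A \to \Values$ and the element $\id_\Values \in H_\Values \Values$ gives
\begin{equation}
\lift_A(\id_\Values \compL h_A) = \lift_\Values(\id_\Values) \compL h_A,
\end{equation}
that is, $\lift_A(h_A) = g \compL h_A$. So every natural $\lift$ equals $\lift^g$.

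No step is a real obstacle. The only care needed is that the bijection concerns strictly natural transformations, not the oplax modalities of \cref{eq:modality-morphism}, and that the variance is tracked correctly: $H_\Values f$ acts by precomposition, which is what makes the naturality square reduce to associativity. As a remark, the resulting lifting of the identity functor is postcomposition with a fixed unary operator $g: \Values \to \Values$, such as scaling by $\gamma$. It is order-preserving whenever $g$ is monotone, which is the assumption needed to use it as a functor lifting on lax bundles.
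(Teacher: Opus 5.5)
Your proposal is correct and takes the same approach as the paper. The paper gives no separate proof and treats this as the $F = \id_\cC$ instance (equivalently $\Values' = \Values$) of the Yoneda lemma via \cref{cor:yoneda-hom-functor,cor:yoneda-F-algebra}. Your explicit construction, with $g = \lift_\Values(\id_\Values)$ and $\lift^g_A(h_A) = g \compL h_A$ as mutually inverse maps, is the standard Yoneda argument specialized to this case.
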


A lifting of the identity functor is obtained by postcomposing with an endomorphism $\lift^{\id_\cC}_A(h_A): A \xto{h_A} \Values \to \Values$.


\paragraph{Exponentiation functor}

The functor $F^X: \cC \to \cC$ is the composition of an endofunctor $F: \cC \to \cC$ and the exponential functor $(-)^X: \cC \to \cC$, which maps every $\cC$-object $A$ to the exponential object $A^X$ and every $\cC$-morphism $f: A \to B$ to the exponential morphism $f^X: A^X \to B^X = f \compL (-)$.

\begin{proposition}
If $\cC$ has exponentials, given an $\cC$-object $X$, the following bijection holds:
\begin{equation}
\Hom_{[\cC\op, \cSet]}\parens*{H_\Values, H_\Values F^X}
\iso
\Hom_\cC((F\Values)^X, \Values).
\end{equation}
\end{proposition}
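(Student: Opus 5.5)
The plan is to recognize the statement as a direct instance of \cref{cor:yoneda-F-algebra}, applied to the composite endofunctor $G \defeq F^X = (-)^X \compL F$ rather than to $F$ itself. The hypothesis that $\cC$ has exponentials is exactly what makes $(-)^X: \cC \to \cC$ a well-defined endofunctor, acting on morphisms by $g \mapsto g^X = g \compL (-)$. Since $F$ is an endofunctor, the composite $G$ is then an endofunctor with $GA = (FA)^X$ and $Gf = (Ff)^X$.

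First, I will check that $H_\Values G = \Hom_\cC((F(-))^X, \Values)$ is a contravariant functor $\cC\op \to \cSet$. This holds because it is the composite of $G\op: \cC\op \to \cC\op$ with $H_\Values: \cC\op \to \cSet$. On a morphism $f: A \to B$ it acts by precomposition with $(Ff)^X$.

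Second, I will invoke \cref{thm:yoneda} with $H \defeq H_\Values G$, as in \cref{cor:yoneda-F-algebra}. This gives
\begin{equation}
\Hom_{[\cC\op, \cSet]}(H_\Values, H_\Values G) \iso H_\Values G \Values = \Hom_\cC((F\Values)^X, \Values),
\end{equation}
which is the claimed bijection.

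For concreteness, and to connect with the lifting interpretation used in the surrounding text, I will also write the bijection explicitly:
\begin{itemize}
\item A morphism $\theta: (F\Values)^X \to \Values$ is sent to the family $\lift_A(h_A) \defeq \theta \compL (Fh_A)^X$, defined for $h_A: A \to \Values$.
\item Conversely, a natural transformation $\lift$ is sent to $\lift_\Values(\id_\Values)$.
\end{itemize}

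Three routine verifications remain:
\begin{itemize}
\item \emph{Naturality.} We need $\lift_A(h_B \compL f) = \lift_B(h_B) \compL (Ff)^X$. This follows from functoriality of $F$ and of $(-)^X$, since $(F(h_B \compL f))^X = (Fh_B)^X \compL (Ff)^X$.
\item \emph{First round trip.} $\theta \mapsto \lift \mapsto \theta \compL (F\id_\Values)^X = \theta$, by identity preservation.
\item \emph{Second round trip.} Starting from $\lift$, naturality applied to $h_A = \id_\Values \compL h_A$ gives $\lift_A(h_A) = \lift_\Values(\id_\Values) \compL (Fh_A)^X$. This is the standard Yoneda argument.
\end{itemize}

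There is no real obstacle. The only point needing care is confirming that $(-)^X$ is functorial, which is precisely where the exponential hypothesis is used.
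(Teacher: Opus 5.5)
Your proposal is correct and follows the same route as the paper. The paper gives no separate proof: it obtains this bijection by instantiating the Yoneda lemma (\cref{thm:yoneda}) at the contravariant functor $H_\Values F^X$, i.e., \cref{cor:yoneda-F-algebra} for the composite endofunctor $F^X = (-)^X \circ F$, with the exponential hypothesis needed only to make $(-)^X$ an endofunctor; your explicit bijection and round-trip checks spell out the standard Yoneda argument that the paper leaves implicit.
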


Note that $\Hom_\cC((F\Values)^X, \Values)$ is richer than $\Hom_\cC(F\Values, \Values)$.
A construction of a lifting is as follows:

\begin{definition}
Let $\aggregator_X: \Values^X \to \Values$ be an \emph{aggregator}.
We can construct a lifting using $\aggregator_X$ as follows:
\begin{equation}
\begin{array}{ccccccc}
\Hom_\cC(F\Values, \Values) & \times & \Hom_\cC(\Values^X, \Values) & \to & \Hom_\cC((F\Values)^X, \Values)
\\
f_\Values && \aggregator_X & \mapsto & {\aggregator_X} \compL f_\Values^X
\end{array}
\end{equation}
\end{definition}


\paragraph{Product functor}

The functor $F_1 \times F_2: \cC \to \cC$ is the pointwise product of two endofunctors $F_1, F_2: \cC \to \cC$, mapping a $\cC$-object $A$ to $F_1A \times F_2A$ and a $\cC$-morphism $f: A \to B$ to $F_1f \times F_2f: F_1A \times F_2A \to F_1B \times F_2B$.

\begin{proposition}
Given two endofunctors $F_1, F_2: \cC \to \cC$, the following bijection holds:
\begin{equation}
\Hom_{[\cC\op, \cSet]}\parens*{H_\Values, H_\Values (F_1 \times F_2)}
\iso
\Hom_\cC(F_1\Values \times F_2\Values, \Values).
\end{equation}
\end{proposition}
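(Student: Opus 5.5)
The bijection is an instance of the Yoneda lemma (\cref{thm:yoneda}). Take the contravariant functor $H \defeq H_\Values(F_1 \times F_2) = \Hom_\cC(F_1(-) \times F_2(-), \Values): \cC\op \to \cSet$. It sends $B$ to $\Hom_\cC(F_1B \times F_2B, \Values)$ and $f: A \to B$ to precomposition with $F_1f \times F_2f$. Functoriality of $H$ follows from functoriality of $F_1$, $F_2$, and the product, together with functoriality of $H_\Values$. \cref{thm:yoneda} then gives
\begin{equation}
\Hom_{[\cC\op, \cSet]}\parens*{H_\Values, H_\Values (F_1 \times F_2)} \iso H\Values = \Hom_\cC(F_1\Values \times F_2\Values, \Values),
\end{equation}
which is exactly the claim. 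This is the same argument as in \cref{cor:yoneda-F-algebra}, with $F$ specialized to $F_1 \times F_2$.

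To make the bijection usable for constructing liftings, as in the exponentiation case, I will also write both directions explicitly.

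\textbf{From natural transformations to morphisms.} Given a natural transformation $\lift: H_\Values \nat H_\Values(F_1 \times F_2)$, evaluate it at the identity to get $\lift_\Values(\id_\Values): F_1\Values \times F_2\Values \to \Values$.

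\textbf{From morphisms to natural transformations.} Given $g: F_1\Values \times F_2\Values \to \Values$, define
\begin{equation}
\lift_A(h_A) \defeq g \compL (F_1h_A \times F_2h_A).
\end{equation}

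\textbf{Naturality.} For $f: A \to B$ and $h_B: B \to \Values$,
\begin{equation}
\lift_A(h_B \compL f) = g \compL (F_1h_B \times F_2h_B) \compL (F_1f \times F_2f),
\end{equation}
using $F_i(h_B \compL f) = F_ih_B \compL F_if$ and the fact that $\times$ preserves composition.

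\textbf{Mutual inverses.} Going from $g$ to $\lift$ and back gives $g \compL (F_1\id \times F_2\id) = g$. Going from $\lift$ to $g$ and back uses naturality at $h_A: A \to \Values$ applied to $\id_\Values$:
\begin{equation}
\lift_A(h_A) = \lift_A(\id_\Values \compL h_A) = \lift_\Values(\id_\Values) \compL (F_1h_A \times F_2h_A).
\end{equation}

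There is no real obstacle here. The only care needed is that $\cC$ has finite products, which the paper assumes, so that $F_1 \times F_2$ is a well-defined endofunctor.

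As a remark, I will note how a combinator feeds into this bijection. Given liftings $f_1: F_1\Values \to \Values$ and $f_2: F_2\Values \to \Values$ and a combinator $\combinator: \Values \times \Values \to \Values$, the morphism ${\combinator} \compL (f_1 \times f_2)$ lies in the right-hand hom-set. This recovers the product clause of \cref{ex:unary-bundle-lifting}.
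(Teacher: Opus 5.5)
Your proposal is correct and follows the paper's route. The paper obtains this bijection simply by instantiating the Yoneda lemma (\cref{thm:yoneda}) with $H = H_\Values(F_1 \times F_2)$, i.e., \cref{cor:yoneda-F-algebra} with $F = F_1 \times F_2$, and your explicit inverse pair $\lift \mapsto \lift_\Values(\id_\Values)$ and $g \mapsto g \compL (F_1 h_A \times F_2 h_A)$, with its naturality and round-trip checks, is a correct unpacking of that same instance.
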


Similarly to the exponentiation functor, a construction of a lifting is as follows:

\begin{definition}
Let $\combinator: \Values \times \Values \to \Values$ be a \emph{combinator}.
We can construct a lifting using $\combinator$ as follows:
\begin{equation}
\begin{array}{ccccccc}
\Hom_\cC(F_1\Values, \Values) & \times & \Hom_\cC(F_2\Values, \Values) & \times \Hom_\cC(\Values \times \Values, \Values) & \to & \Hom_\cC(F_1\Values \times F_2\Values, \Values)
\\
c_1 && c_2 & \combinator & \mapsto & \combinator \compL (c_1 \times c_2)
\end{array}
\end{equation}
\end{definition}


\paragraph{Coproduct functor}

The functor $F_1 + F_2: \cC \to \cC$ is the pointwise coproduct of two endofunctors $F_1, F_2: \cC \to \cC$, mapping a $\cC$-object $A$ to $F_1A + F_2A$ and a $\cC$-morphism $f: A \to B$ to $F_1f + F_2f: F_1A + F_2A \to F_1B + F_2B$.

\begin{proposition}
If $\cC$ has finite coproducts, given two endofunctors $F_1, F_2: \cC \to \cC$, the following bijection holds:
\begin{align}
\Hom_{[\cC\op, \cSet]}\parens*{H_\Values, H_\Values (F_1 + F_2)}
& \iso
\Hom_\cC(F_1\Values + F_2\Values, \Values)
\\
& \iso
\Hom_\cC(F_1\Values, \Values) \times \Hom_\cC(F_2\Values, \Values).
\end{align}
\end{proposition}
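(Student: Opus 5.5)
The statement to prove is the last proposition: for finite coproducts, natural transformations $H_\Values \nat H_\Values(F_1+F_2)$ correspond to $\cC$-morphisms $F_1\Values + F_2\Values \to \Values$, and these in turn correspond to pairs of morphisms $F_1\Values \to \Values$ and $F_2\Values \to \Values$. I will establish the two bijections separately and then compose them.

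For the first bijection, I apply the Yoneda lemma (\cref{thm:yoneda}) to the contravariant functor $H \defeq H_\Values(F_1+F_2) = \Hom_\cC((F_1+F_2)(-), \Values): \cC\op \to \cSet$. First I check that $H$ is indeed a functor $\cC\op \to \cSet$. It is the composite of the covariant endofunctor $F_1+F_2$ with the contravariant hom-functor $H_\Values$. On morphisms it sends $f: A \to B$ to precomposition with $F_1f + F_2f$. Functoriality follows because $F_1+F_2$ preserves identities and composition componentwise, which is the universal property of coproducts applied to $F_if$.

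Yoneda then gives $\Hom_{[\cC\op,\cSet]}(H_\Values, H) \iso H\Values = \Hom_\cC((F_1+F_2)\Values, \Values) = \Hom_\cC(F_1\Values + F_2\Values, \Values)$. Explicitly, a natural transformation $\lift$ is sent to $\lift_\Values(\id_\Values)$, and an algebra $c$ is sent to $h_A \mapsto c \compL (F_1h_A + F_2h_A)$. This is exactly the pattern of \cref{cor:yoneda-F-algebra} with $F = F_1 + F_2$, so I can simply cite that corollary.

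For the second bijection, I use the universal property of the coproduct $F_1\Values + F_2\Values$ with injections $\iota_1, \iota_2$. The map $c \mapsto (c \compL \iota_1, c \compL \iota_2)$ has inverse given by copairing $(c_1, c_2) \mapsto [c_1, c_2]$. Existence and uniqueness of the copairing are precisely the statement that these two maps are mutually inverse.

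The only mild subtlety is in the first step: confirming that $H_\Values(F_1+F_2)$ is contravariantly functorial, so that Yoneda applies verbatim. Since this is a direct composite of functors, I do not expect a real obstacle. I will also note the interpretation: a lifting of $F_1 + F_2$ amounts to independently choosing liftings of $F_1$ and $F_2$ and applying them casewise. This is why no combinator operator is needed here, in contrast with the product case.
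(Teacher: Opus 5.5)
Your proposal is correct and takes essentially the same route as the paper. The paper gives no separate proof: the first bijection is \cref{cor:yoneda-F-algebra} (i.e., Yoneda) instantiated at $F = F_1 + F_2$, and the second is the universal property of coproducts, which the paper cites when it remarks that a lifting of the coproduct functor is just given by liftings of its components.
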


Due to the universal property of coproducts, a lifting of the coproduct functor is just given by the liftings of its components.


\paragraph{Probability functor}

Lastly, we assume that a probability functor $\Probability$ is well-defined on the category $\cC$ \citep{fritz2020synthetic}.
The functor $\Probability{F}: \cC \to \cC$ is the composition of an endofunctor $F: \cC \to \cC$ and the probability functor $\Probability: \cC \to \cC$; when $\cC = \cSet$, the latter maps a set $A$ to the set $\Probability{A}$ of probability distributions over $A$ and a function $f: A \to B$ to the pushforward function $\Probability{f}: \Probability{A} \to \Probability{B}$.

\begin{proposition}
Given a probability functor $\Probability: \cC \to \cC$, the following bijection holds:
\begin{equation}
\Hom_{[\cC\op, \cSet]}\parens*{H_\Values, H_\Values \Probability{F}}
\iso
\Hom_\cC(\Probability{F\Values}, \Values).
\end{equation}
\end{proposition}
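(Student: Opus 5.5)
The statement is the last proposition: given a probability functor $\Probability$ on $\cC$, natural transformations $H_\Values \nat H_\Values \Probability{F}$ correspond bijectively to $\cC$-morphisms $\Probability{F\Values} \to \Values$. My plan is to reduce it directly to \cref{cor:yoneda-F-algebra}. The composite $\Probability{F} = \Probability \compL F$ is itself an endofunctor on $\cC$, because $\Probability: \cC \to \cC$ is assumed to be a functor. Hence $H_\Values \Probability{F} = \Hom_\cC(\Probability{F}(-), \Values)$ is a contravariant functor $\cC\op \to \cSet$.

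I would then apply the Yoneda lemma (\cref{thm:yoneda}) with $H \defeq H_\Values \Probability{F}$. This gives $\Hom_{[\cC\op,\cSet]}(H_\Values, H_\Values\Probability{F}) \iso H_\Values\Probability{F}\Values = \Hom_\cC(\Probability{F\Values}, \Values)$, which is exactly the claimed bijection. Equivalently, I instantiate \cref{cor:yoneda-F-algebra} with the endofunctor $\Probability{F}$.

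To make the correspondence explicit, in the style of the earlier unary liftings, I would describe both directions:
\begin{itemize}
\item From a natural transformation $\lift$, take $\lift_\Values(\id_\Values): \Probability{F\Values} \to \Values$.
\item From a morphism $b: \Probability{F\Values} \to \Values$, define $\lift_A(h_A) \defeq b \compL \Probability{F h_A}$.
\end{itemize}
Naturality of the second assignment follows from functoriality: $b \compL \Probability{F(h \compL f)} = (b \compL \Probability{F h}) \compL \Probability{F f}$. The two assignments are mutually inverse by the standard Yoneda argument, using $h_A = \lift_A$-naturality applied to $h_A = \id_\Values \compL h_A$.

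For the later construction it is worth noting, though not needed for the bijection, that any $b$ factors canonically when $F$'s own lifting is $f_\Values: F\Values \to \Values$. In that case one takes $b = {\barycenter} \compL \Probability{f_\Values}$ with a barycenter $\barycenter: \Probability{\Values} \to \Values$, matching \cref{tab:operator}.

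There is no real obstacle. The only point requiring care is that the Yoneda lemma is stated for arbitrary contravariant $H$, so I just need the composite $\Probability \compL F$ to be a genuine functor on $\cC$. That is exactly the standing assumption that $\Probability$ is well-defined as an endofunctor on $\cC$ (e.g., finitely supported distributions with pushforward when $\cC = \cSet$, where preservation of identities and composition is immediate).
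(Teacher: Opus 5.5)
Your proof is correct and follows the paper's implicit argument: the proposition is \cref{thm:yoneda} with $H = H_\Values \Probability{F}$, i.e.\ \cref{cor:yoneda-F-algebra} instantiated at the endofunctor $\Probability{F}$, and your explicit pair $\lift \mapsto \lift_\Values(\id_\Values)$, $b \mapsto (h_A \mapsto b \compL \Probability{F h_A})$ is the standard Yoneda correspondence. One caveat about your aside: not every $b: \Probability{F\Values} \to \Values$ has the form ${\barycenter} \compL \Probability{f_\Values}$ (for $F X = X \times X$ and $\Values = \{0,1\}$, the map sending exactly $\delta_{(0,0)}$ and $\delta_{(1,1)}$ to $1$ admits no such factorization), so that composite only constructs particular elements of the hom-set, which is harmless since the bijection never uses it.
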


We can use an extra operator to construct a lifting as follows:

\begin{definition}
Let $\barycenter: \Probability{\Values} \to \Values$ be a \emph{barycenter operator}.
We can construct a lifting using $\barycenter$ as follows:
\begin{equation}
\begin{array}{ccccccc}
\Hom_\cC(F\Values, \Values) & \times & \Hom_\cC(\Probability{\Values}, \Values) & \to & \Hom_\cC(\Probability{F\Values}, \Values)
\\
f_\Values && \barycenter & \mapsto & {\barycenter} \compL \Probability{f_\Values}
\end{array}
\end{equation}
\end{definition}


\subsection{Binary bundle lifting}

Next, we investigate bundle liftings for $n = 2$.
We consider the category $\cR \defeq \cSet^2_\truth$ of relations and the category $\cD \defeq \cSet^2_\quant$ of distances.
We generalize the relation lifting in \citet[Definition~3.1.1]{jacobs2016introduction}; \citet{kurz2016relation} by allowing generic operators.
Since the Yoneda lemma in \cref{thm:yoneda} does not directly apply to $n > 1$, we need to consider liftings as natural transformations $\lift^F: H^n_\Values \nat H^n_\Values F$.








\paragraph{Exponentiation functor}

$(F^X)^n \iso (F^n)^X$

\begin{definition}
Let $\aggregator_X: \Values^X \to \Values$ be an \emph{aggregator}.
By slight abuse of notation, we denote the natural transformation $\aggregator_X: H_\Values \nat H_\Values (-)^X$ with the same symbol.
Given $\lift^F: H^n_\Values \nat H^n_\Values F$, we can construct a lifting using $\aggregator_X$ as follows:
\begin{equation}
\begin{aligned}
\lift^{F^X}: H^n_\Values
&
\xnat{\lift^F} H^n_\Values F
\iso H_\Values F^n
\\
&
\xnat{\aggregator_X F^n} H_\Values (F^n)^X
\iso H_\Values (F^X)^n
\iso H^n_\Values F^X.
\end{aligned}
\end{equation}
In other words,
$
\lift^{F^X}_A(h_A):
((FA)^X)^n
\iso ((FA)^n)^X
\xto{\lift^F_A(h_A)^X} \Values^X
\xto{\aggregator_X} \Values
$.
\end{definition}


\paragraph{Product functor}

$(F_1 \times F_2)^n \iso F_1^n \times F_2^n$

\begin{definition}
Let $\combinator: \Values \times \Values \to \Values$ be a \emph{combinator}, and let $\Hom_\cC(-, \combinator): H_{\Values \times \Values} \nat H_\Values$ be the Yoneda embedding.
Given $\lift^{F_1}: H^n_\Values \nat H^n_\Values F_1$ and $\lift^{F_2}: H^n_\Values \nat H^n_\Values F_2$, we can construct a lifting using $\combinator$ as follows:
\begin{equation}
\begin{aligned}
\lift^{F_1 \times F_2}: H^n_\Values
&
\xnat{\angles*{\lift^{F_1}, \lift^{F_2}}} H^n_\Values F_1 \times H^n_\Values F_2
\iso H_{\Values \times \Values} (F_1^n \times F_2^n)
\\
&
\xnat{\Hom_\cC(-, \combinator) (F_1^n \times F_2^n)} H_\Values (F_1^n \times F_2^n)
\iso H_\Values (F_1 \times F_2)^n
\iso H^n_\Values (F_1 \times F_2).
\end{aligned}
\end{equation}
In other words,
$
\lift^{F_1 \times F_2}_A(h_A):
(F_1A \times F_2A)^n
\iso (F_1A)^n \times (F_2A)^n
\xto{\lift^{F_1}_A(h_A) \times \lift^{F_2}_A(h_A)} \Values \times \Values
\xto{\combinator} \Values
$.
\end{definition}





\subsection{Probability lifting}

For the probability functor, we present two constructions of liftings $\lift^{\Probability{F}}: H^n_\Values \nat H^n_\Values \Probability{F}$: independent coupling and Wasserstein lifting.


\paragraph{Independent coupling}

Note that $\Probability$ is a lax monoidal functor with $\nabla: \Probability^n \nat \Probability{(-)^n}$ \citep[Proposition~3.1]{fritz2020synthetic}.
A simple construction of a lifting is as follows:

\begin{definition}
Let $\barycenter: \Probability{\Values} \to \Values$ be a \emph{barycenter operator}.
By slight abuse of notation, we denote the natural transformation $\barycenter: H_\Values \nat H_\Values \Probability$ with the same symbol.
Given a lifting $\lift^F: H^n_\Values \nat H^n_\Values F$, we can construct a lifting using $\barycenter$ as follows:
\begin{equation}
\begin{aligned}
\lift^{\Probability{F}}: H^n_\Values
&
\xnat{\lift^F} H^n_\Values F
\iso H_\Values F^n
\\
&
\xnat{\barycenter F^n} H_\Values \Probability{F^n}
\\
&
\xnat{H_\Values \nabla F} H_\Values (\Probability{F})^n
\iso H^n_\Values \Probability{F}.
\end{aligned}
\end{equation}
In other words,
$
\lift^{\Probability{F}}_A(h_A):
(\Probability{FA})^n
\xto{\nabla_{FA}} \Probability{(FA)^n}
\xto{\Probability\lift^F_A(h_A)} \Probability{\Values}
\xto{\barycenter} \Values
$.
\end{definition}


\clearpage

\paragraph{Wasserstein lifting}

We first introduce the notion of coupling.

\begin{definition}[Coupling fiber]
Let $p_1, p_2: A \times A \to A$ be the projections, and 
let $\Delta_A: \Probability{(A \times A)} \to \Probability{A} \times \Probability{A} \defeq \angles{\Probability{p_1}, \Probability{p_2}}$ be the marginalization map.
A \emph{coupling} of two probability distributions $\mu, \nu \in \Probability{A}$ is a probability distribution $\pi_A \in \Probability{(A \times A)}$ such that $\Delta_A(\pi_A) = \angles{\mu, \nu}$.
Equivalently, the set $\Coupling_A(\mu, \nu)$ of all couplings of $\mu$ and $\nu$ is given by the following pullback:
\begin{equation}
\begin{tikzcd}
\Coupling_A(\mu, \nu)
\arrow[r]
\arrow[d, monomorphism]
\arrow[rd, "\lrcorner", phantom, very near start]
&
1
\arrow[d, monomorphism, "\angles{\mu, \nu}"]
\\
\Probability{(A \times A)}
\arrow[r, "\Delta_A"']
&
\Probability{A} \times \Probability{A}
\end{tikzcd}
\end{equation}
\end{definition}

\begin{lemma}
\label{lem:coupling-pushforward}
Given a map $f: A \to B$ and two probability distributions $\mu, \nu \in \Probability{A}$ over $A$, if $\pi_A \in \Probability(A \times A)$ is a coupling of $\mu$ and $\nu$, then the pushforward distribution $\pushforward{(f \times f)}\pi_A \in \Probability(B \times B)$ is a coupling of $\pushforward{f}\mu$ and $\pushforward{f}\nu$;
if $\pi_B \in \Probability(B \times B)$ is a coupling of $\pushforward{f}\mu$ and $\pushforward{f}\nu$ and $f$ has a retraction $r: B \to A$, then the pushforward distribution $\pushforward{(r \times r)}\pi_B \in \Probability(A \times A)$ is a coupling of $\mu$ and $\nu$.
\end{lemma}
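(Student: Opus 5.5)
The statement is \cref{lem:coupling-pushforward}, and both halves reduce to checking the two marginals. The only tool needed is functoriality of $\Probability$, namely $\Probability(g \compL h) = \Probability{g} \compL \Probability{h}$. In the notation of the coupling-fiber definition, $\pi$ is a coupling of $(\mu, \nu)$ exactly when $\Delta_A(\pi) = \angles{\mu, \nu}$, i.e.\ $\Probability{p_1}(\pi) = \mu$ and $\Probability{p_2}(\pi) = \nu$. So in each case I will compute $\Probability{p_i}$ of the pushed-forward distribution and identify it with the required marginal.

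\emph{First claim.} The key identity is $p_i \compL (f \times f) = f \compL p_i$, for $i = 1,2$, as maps $A \times A \to B$; this is immediate from the definition of the product morphism $f \times f$. Applying $\Probability$ and using functoriality gives
\[
\Probability{p_i}\bigl(\pushforward{(f \times f)}\pi_A\bigr) = \Probability{f}\bigl(\Probability{p_i}(\pi_A)\bigr).
\]
For $i=1$ this equals $\pushforward{f}\mu$, and for $i=2$ it equals $\pushforward{f}\nu$. Equivalently, $\Delta$ is natural: $\Delta_B \compL \Probability(f \times f) = (\Probability{f} \times \Probability{f}) \compL \Delta_A$. This finishes the first claim.

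\emph{Second claim.} The same computation with $r$ in place of $f$ gives
\[
\Probability{p_1}\bigl(\pushforward{(r \times r)}\pi_B\bigr) = \pushforward{r}\,\pushforward{f}\mu = \pushforward{(r \compL f)}\mu,
\]
and likewise for the second marginal with $\nu$. So the whole claim comes down to the retraction equation $r \compL f = \id_A$. With it, functoriality gives $\Probability(r \compL f) = \Probability{\id_A} = \id_{\Probability{A}}$, so the marginals are exactly $\mu$ and $\nu$.

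The one point needing care is the direction of the retraction. We need a map $r: B \to A$ with $r \compL f = \id_A$; such a left inverse exists only when $f$ is injective, or $A$ is empty. I will read ``$f$ has a retraction'' in this sense, since it is the sense that makes $r \compL f$ the identity on $A$. If instead $f \compL r = \id_B$ were meant, the argument would fail, and I would flag that. Beyond this, both halves are routine.
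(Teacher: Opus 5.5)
Your proposal is correct and follows essentially the paper's argument. The first claim is the naturality $\Delta_B \compL \Probability(f \times f) = (\Probability{f} \times \Probability{f}) \compL \Delta_A$, which you get from $p_i \compL (f \times f) = f \compL p_i$ and functoriality; the second uses $r \compL f = \id_A$, the same direction the paper takes. The paper only phrases the marginal check through the universal property of the pullback square defining $\Coupling_A(\mu, \nu)$, and it records an extra identity $\Probability(r \times r) \compL \Probability(f \times f) = \id$ that your direct computation shows is unnecessary.
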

\begin{pf}
Note that the pushforward commutes with marginalization:
\begin{equation}
(\Probability{f} \times \Probability{f}) \compL \Delta_A
=
\Delta_B \compL \Probability{(f \times f)}.
\end{equation}
Then, we have the following equality of compositions:
\begin{equation}
\Coupling_A(\mu, \nu) \mono \Probability{(A \times A)} \xto{\Probability{(f \times f)}} \Probability{(B \times B)} \xto{\Delta_B} \Probability{B} \times \Probability{B}
=
\Coupling_A(\mu, \nu) \to 1 \xto{\angles{\pushforward{f}\mu, \pushforward{f}\nu}} \Probability{B} \times \Probability{B}.
\end{equation}
By the universal property, there is a unique morphism $\Coupling_A(\mu, \nu) \to \Coupling_B(\pushforward{f}\mu, \pushforward{f}\nu)$, as shown in the following diagram:
\begin{equation}
\begin{tikzcd}[column sep=2em]
\Coupling_A(\mu, \nu)
\arrow[rr]
\arrow[dd, monomorphism]
\arrow[rd, unique morphism]
&&
1
\arrow[dd, monomorphism, "\angles{\mu, \nu}", pos=0.3]
\arrow[rd]
\\
&
\Coupling_B(\pushforward{f}\mu, \pushforward{f}\nu)
\arrow[rr]
\arrow[dd, monomorphism]
&&
1
\arrow[dd, monomorphism, "\angles{\pushforward{f}\mu, \pushforward{f}\nu}", pos=0.3]
\\
\Probability{(A \times A)}
\arrow[rr, "\Delta_A"', pos=0.3]
\arrow[rd, "\Probability(f \times f)"']
&&
\Probability{A} \times \Probability{A}
\arrow[rd, "\Probability{f} \times \Probability{f}"]
\\
&
\Probability{(B \times B)}
\arrow[rr, "\Delta_B"', pos=0.3]
&&
\Probability{B} \times \Probability{B}
\end{tikzcd}
\end{equation}
which means that
\begin{equation}
\set*{\pushforward{(f \times f)}\pi_A \given \pi_A \in \Coupling_A(\mu, \nu)}
\subseteq
\Coupling_B(\pushforward{f}\mu, \pushforward{f}\nu).
\end{equation}
If $r: B \to A$ is a retraction of $f: A \to B$, i.e., $r \compL f = \id_A$, then
\begin{equation}
  (\Probability{r} \times \Probability{r}) \compL  \angles{\pushforward{f}\mu, \pushforward{f}\nu}
= \angles{\Probability{r} \compL \pushforward{f}\mu, \Probability{r} \compL \pushforward{f}\nu}
= \angles{(r \compL f)_*\mu, (r \compL f)_*\nu}
= \angles{\mu, \nu}.
\end{equation}
\begin{equation}
  \Probability{(r \times r)} \compL \Probability{(f \times f)}
= \Probability{((r \times r) \compL (f \times f))}
= \Probability{(r \compL f) \times (r \compL f)}
= \Probability{(\id_A \times \id_A)}
= \Probability{\id_{A \times A}}
= \id_{\Probability(A \times A)}.
\end{equation}
\begin{equation}
  (\Probability{r} \times \Probability{r})
\compL (\Probability{f} \times \Probability{f})
= (\Probability{r} \compL \Probability{f}) \times (\Probability{r} \compL \Probability{f})
= \Probability{(r \compL f)} \times \Probability{(r \compL f)}
= \Probability{\id_A} \times \Probability{\id_A}
= \id_{\Probability{A}} \times \id_{\Probability{A}}
= \id_{\Probability{A} \times \Probability{A}}.
\end{equation}
Then, there is a unique morphism $\Coupling_B(\pushforward{f}\mu, \pushforward{f}\nu) \to \Coupling_A(\mu, \nu)$ such that
\begin{equation}
\set*{\pushforward{(r \times r)}\pi_B \given \pi_B \in \Coupling_B(\pushforward{f}\mu, \pushforward{f}\nu)}
\subseteq
\Coupling_A(\mu, \nu).
\end{equation}
\end{pf}

\begin{definition}
\label{def:coupling-join}
Let $\join$ be the join operator in the preordered set $(\Values, \preceq)$.
Given a $\cC$-object $A$, we define a function
\begin{equation}
\begin{array}{cccc}
\kappa_A:
& \Hom_\cC(\Probability(A \times A), \Values)
& \to
& \Hom_\cC(\Probability{A} \times \Probability{A}, \Values)
\\
& g
& \mapsto
& \anon{(\mu, \nu)}{\displaystyle \join_{\pi \in \Coupling_A(\mu, \nu)} g(\pi)}.
\end{array}
\end{equation}
\end{definition}

\begin{proposition}
\label{prop:coupling-oplax}
The family of functions $\kappa_A: \Hom_\cC(\Probability(A \times A), \Values) \to \Hom_\cC(\Probability{A} \times \Probability{A}, \Values)$ in \cref{def:coupling-join} forms an oplax natural transformation $\kappa: H_\Values \Probability(-)^2 \nat H^2_\Values \Probability$.
\end{proposition}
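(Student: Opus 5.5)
The plan is to unfold the oplax naturality condition for $\kappa$ and reduce it to the first half of \cref{lem:coupling-pushforward} together with the universal property of joins. First I fix the conventions, following \cref{eq:modality-morphism}. For a map $f: A \to B$, the contravariant functor $H_\Values \Probability(-)^2$ sends $f$ to precomposition with $\Probability{(f \times f)}$, i.e.\ $g \mapsto g \compL \Probability{(f \times f)}$. The functor $H^2_\Values \Probability$ sends $f$ to precomposition with $\Probability{f} \times \Probability{f}$. Oplax naturality of $\kappa$ then means that for every $g: \Probability{(B \times B)} \to \Values$,
\begin{equation}
\kappa_A\parens*{g \compL \Probability{(f \times f)}} \preceq \kappa_B(g) \compL (\Probability{f} \times \Probability{f}),
\end{equation}
in the pointwise preorder of \cref{def:pointwise-preorder}. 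Pointwise, for all $\mu, \nu \in \Probability{A}$, this reads
\begin{equation}
\join_{\pi \in \Coupling_A(\mu, \nu)} g\parens*{\pushforward{(f \times f)}\pi}
\preceq
\join_{\pi' \in \Coupling_B(\pushforward{f}\mu, \pushforward{f}\nu)} g(\pi').
\end{equation}
I assume, as the paper does for the pushforward, that the relevant joins exist in $(\Values, \preceq)$.

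The key step is the inclusion established in \cref{lem:coupling-pushforward}: for every $\pi \in \Coupling_A(\mu, \nu)$, the pushforward $\pushforward{(f \times f)}\pi$ lies in $\Coupling_B(\pushforward{f}\mu, \pushforward{f}\nu)$. Consequently each term $g(\pushforward{(f \times f)}\pi)$ on the left is one of the terms indexed on the right. Since a join is an upper bound of its terms, each such term is $\preceq$ the right-hand join.

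The right-hand side is therefore an upper bound for the whole family indexed by $\Coupling_A(\mu, \nu)$. Because the left-hand join is the least upper bound of that family, it is $\preceq$ the right-hand side. This gives the inequality pointwise, hence in the pointwise preorder. The argument covers the case $\Coupling_A(\mu, \nu) = \emptyset$ via the empty join, which is the bottom element and so trivially $\preceq$ anything. For finitely supported distributions, though, the product coupling always exists, so the index set is never empty.

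I do not expect a genuine obstacle, since the content is just ``the join over a subset is below the join over the superset.'' The only delicate point is bookkeeping the variance and the direction of $\preceq$: for $\quant$ the order is $\geq$, so the join is $\inf$, and the inequality says that the infimum over the larger coupling set on $B$ is at most the infimum over the image of the couplings on $A$. As a remark, I would also note that the second half of \cref{lem:coupling-pushforward} yields the reverse inclusion (up to the retraction) when $f$ has a retraction. In that case the inequality becomes an equality, so $\kappa$ is strictly natural along split monomorphisms. This is not needed for the oplax claim.
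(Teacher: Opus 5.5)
Your proposal is correct and follows essentially the same route as the paper: both reduce oplax naturality pointwise to comparing the join over pushed-forward couplings in $\Coupling_A(\mu,\nu)$ with the join over $\Coupling_B(\pushforward{f}\mu,\pushforward{f}\nu)$, and conclude using the first half of \cref{lem:coupling-pushforward} together with monotonicity of joins under inclusion of index families. Your side remark on strict naturality along split monomorphisms is also right, but it additionally needs the fact that couplings of $\pushforward{f}\mu$ and $\pushforward{f}\nu$ are supported on the image of $f$, where $f \compL r$ acts as the identity.
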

\begin{pf}
That $\kappa$ is an oplax natural transformation means that for every $\cC$-morphism $f: A \to B$, the following diagram commutes up to $\preceq$:
\begin{subequations}
\begin{equation}
\begin{tikzcd}[column sep=8em]
\Hom_\cC(\Probability(A \times A), \Values)
\arrow[d, "\kappa_A"'{name=A}]
&
\Hom_\cC(\Probability(B \times B), \Values)
\arrow[l, "{\Hom_\cC(\Probability(f \times f), \Values)}"']
\arrow[d, "\kappa_B"{name=B}]
\\
\Hom_\cC(\Probability{A} \times \Probability{A}, \Values)
&
\Hom_\cC(\Probability{B} \times \Probability{B}, \Values)
\arrow[l, "{\Hom_\cC(\Probability{f} \times \Probability{f}, \Values)}"]
\arrow[phantom, from=A, to=B, "\preceq"]
\end{tikzcd}
\end{equation}
\begin{equation}
\kappa_A \compL \Hom_\cC(\Probability(f \times f), \Values)
\preceq
\Hom_\cC(\Probability{f} \times \Probability{f}, \Values) \compL \kappa_B,
\end{equation}
\begin{equation}
\forall g: \Probability(B \times B) \to \Values.\;
\kappa_A(g \compL \Probability(f \times f))
\preceq
\kappa_B(g) \compL (\Probability{f} \times \Probability{f}),
\end{equation}
\begin{equation}
\forall g: \Probability(B \times B) \to \Values.\;
\forall \mu \in \Probability{A}.\;
\forall \nu \in \Probability{A}.\;
\join_{\pi_A \in \Coupling_A(\mu, \nu)} g(\pushforward{(f \times f)}\pi_A)
\preceq
\join_{\pi_B \in \Coupling_B(\pushforward{f}\mu, \pushforward{f}\nu)} g(\pi_B),
\end{equation}
\end{subequations}
which is a result of \cref{lem:coupling-pushforward} and the monotonicity of $\join$.
\end{pf}


We can further show that $\kappa_A$ is a left adjoint of the precomposition of marginalization.

\begin{proposition}
$\kappa_A$ is a left adjoint of the precomposition $\pullback{\Delta_A}: \Hom_\cC(\Probability{A} \times \Probability{A}, \Values) \to \Hom_\cC(\Probability(A \times A), \Values)$ of marginalization $\Delta_A: \Probability(A \times A) \to \Probability{A} \times \Probability{A}$:
\begin{equation}
\Hom_\cC(\Probability(A \times A), \Values)
\adjto{\kappa_A}{\pullback{\Delta_A}}
\Hom_\cC(\Probability{A} \times \Probability{A}, \Values),
\end{equation}
which satisfies the following adjunction property:
\begin{equation}
\forall g: \Probability(A \times A) \to \Values.\;
\forall h: \Probability{A} \times \Probability{A} \to \Values.\;
(\kappa_A(g) \preceq h)
\leqv
(g \preceq h \compL \Delta_A).
\end{equation}
\end{proposition}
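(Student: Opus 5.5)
We prove the adjunction pointwise, by unfolding $\kappa_A$ and using the defining property of the join in $(\Values, \preceq)$. Since $\join$ is the least upper bound, we have $\join_{i} v_i \preceq w$ if and only if $v_i \preceq w$ for every $i$. The argument has two steps.

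First, fix $g: \Probability(A \times A) \to \Values$ and $h: \Probability{A} \times \Probability{A} \to \Values$. By \cref{def:pointwise-preorder}, the left-hand side $\kappa_A(g) \preceq h$ means
\begin{equation*}
\forall (\mu, \nu) \in \Probability{A} \times \Probability{A}.\;
\join_{\pi \in \Coupling_A(\mu, \nu)} g(\pi) \preceq h(\mu, \nu).
\end{equation*}
By the universal property of the join, this is equivalent to
\begin{equation*}
\forall \mu, \nu.\; \forall \pi \in \Coupling_A(\mu, \nu).\; g(\pi) \preceq h(\mu, \nu).
\end{equation*}

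Second, we reindex this double quantifier. By the pullback definition of the coupling fiber, $\pi \in \Coupling_A(\mu, \nu)$ holds exactly when $\Delta_A(\pi) = \angles{\mu, \nu}$. The fibers $\Coupling_A(\mu, \nu)$ therefore partition $\Probability(A \times A)$, with each $\pi$ lying in exactly one of them, namely the fiber over $\Delta_A(\pi)$. So quantifying over pairs $(\mu, \nu)$ and then over $\pi$ in the fiber amounts to quantifying over all $\pi \in \Probability(A \times A)$, with $(\mu, \nu)$ replaced by $\Delta_A(\pi)$. The condition becomes
\begin{equation*}
\forall \pi \in \Probability(A \times A).\; g(\pi) \preceq h(\Delta_A(\pi)),
\end{equation*}
which is exactly $g \preceq h \compL \Delta_A = \pullback{\Delta_A}h$. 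Every step is an equivalence, so this establishes the adjunction property.

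The main point to handle with care is the join. We must assume that it exists for every coupling fiber, including possibly empty ones; this is the same standing assumption as for pushforward. For finitely supported distributions, fibers are empty only when $\mu$ and $\nu$ have different total mass, which cannot happen, so in practice each fiber contains at least the product coupling. In any case, the empty join is the bottom element, and its universal property still gives the equivalence vacuously, so the argument is uniform.

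We also rely on $\preceq$ being a preorder rather than a partial order; only the least-upper-bound characterization of $\join$ is used. Monotonicity of $\kappa_A$ and $\pullback{\Delta_A}$ follows formally from the adjunction, and is not needed separately.
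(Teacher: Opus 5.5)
Your proof is correct and uses the same ingredients as the paper's: the least-upper-bound property of $\join$ over the fiber $\Coupling_A(\mu,\nu)$, and the fact that every $\pi$ lies in the fiber over its own marginals $\Delta_A(\pi)$. The only difference is presentation. The paper proves the two implications separately, using the upper-bound property for one direction and leastness for the other, whereas you combine both into a single chain of equivalences.
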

\begin{pf}
Let $\mu, \nu \in \Probability{A}$ be probability distributions over $A$, and let $\pi \in \Probability{(A \times A)}$ be their coupling, i.e., $\Delta_A \compL \pi = \angles{\mu, \nu}$.

Assume $\kappa_A(g) \preceq h$.
Then, $\pi$ factors through the fiber $\Coupling_A(\mu, \nu)$, hence by definition of $\kappa_A(g)$ as a join over this fiber we have $g \compL \pi \preceq \kappa_A(g)\compL \angles{\mu, \nu}$.
Composing the assumption with $\angles{\mu, \nu}$ yields $\kappa_A(g) \compL \angles{\mu, \nu} \preceq h \compL \angles{\mu, \nu}$, so we have $g \compL \pi \preceq h \compL \angles{\mu, \nu} = h \compL \Delta_A \compL \pi$.
Since $\pi$ was arbitrary, this shows $g \preceq h \compL \Delta_A$.

Assume $g \preceq h \compL \Delta_A$.
Composing the assumption with $\pi$ yields $g \compL \pi \preceq h \compL \Delta_A \compL \pi = h \compL \angles{\mu, \nu}$.
Thus, $h \compL \angles{\mu, \nu}$ is an upper bound of the family $\set{g \compL \pi \given \pi \in \Coupling_A(\mu, \nu)}$, and since $\kappa_A(g) \compL \angles{\mu, \nu}$ is their join, we obtain $\kappa_A(g) \compL \angles{\mu, \nu} \preceq h \compL \angles{\mu, \nu}$.
As $\angles{\mu, \nu}$ was arbitrary, $\kappa_A(g) \preceq h$ follows.
\end{pf}

Finally, we can construct the Wasserstein lifting as follows:

\begin{definition}[Wasserstein lifting]
Let $\barycenter: \Probability{\Values} \to \Values$ be a \emph{barycenter operator}.
We apply $\kappa_{FA}$ in \cref{def:coupling-join} to the composition $g_{h_A}: \Probability(FA \times FA) \xto{\Probability\lift^F(h_A)} \Probability{\Values} \xto{\barycenter} \Values$.
Given a lifting $\lift^F: H^2_\Values \nat H^2_\Values F$, we can construct a lifting using $\barycenter$ and $\kappa$ as follows:
\begin{equation}
\begin{aligned}
\lift^{\Probability{F}}: H^2_\Values
&
\xnat{\lift^F} H^2_\Values F
\iso H_\Values F^2
\\
&
\xnat{\barycenter F^2} H_\Values \Probability{F^2}
\\
&
\xnat{\kappa F} H_\Values (\Probability{F})^2
\iso H^2_\Values \Probability{F}.
\end{aligned}
\end{equation}
In other words,
$
\lift^{\Probability{F}}_A(h_A): \Probability{FA} \times \Probability{FA} \to \Values
\defeq
\anon{(\mu, \nu)}{\join_{\pi \in \Coupling_{FA}(\mu, \nu)} \barycenter \pushforward{\lift^F(h_A)}\pi}
$.
\end{definition}

\cref{fig:probability-lifting} shows three different ways to lift a relation/distance $A \times A \to \Values$ to probability distributions $\Probability{A} \times \Probability{A} \to \Values$.


\begin{figure}
\centering
\begin{subfigure}{0.3\linewidth}
\includegraphics[width=\linewidth]{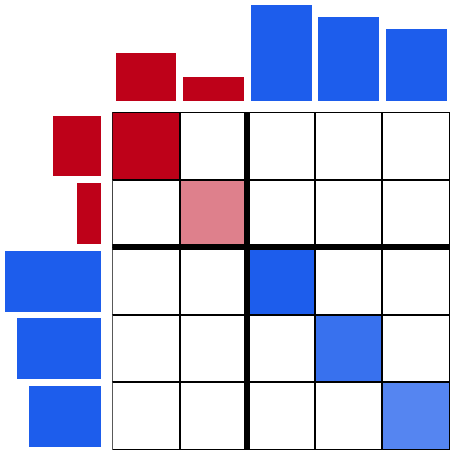}
\caption{\emph{Divergence}: allow different clusters, disallow different instances with different probabilities}
\end{subfigure}
\qquad
\begin{subfigure}{0.3\linewidth}
\includegraphics[width=\linewidth]{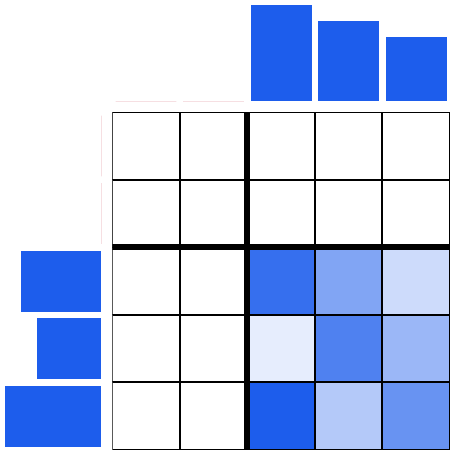}
\caption{\emph{Independent coupling}: allow different instances within a single cluster, disallow different clusters}
\end{subfigure}
\qquad
\begin{subfigure}{0.3\linewidth}
\includegraphics[width=\linewidth]{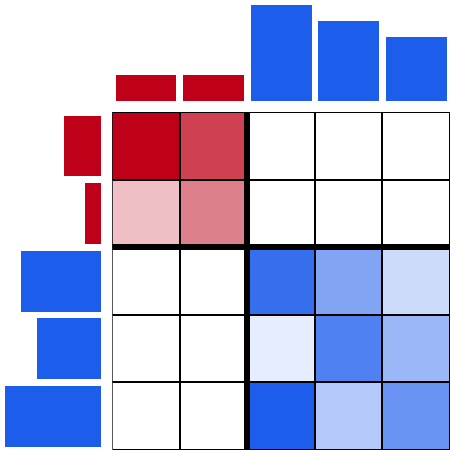}
\caption{\emph{Wasserstein lifting}: allow different clusters and instances, disallow different total probabilities within clusters}
\end{subfigure}
\caption[Coupling]{%
Three ways to lift a relation/distance $A \times A \to \Values$ to probability distributions $\Probability{A} \times \Probability{A} \to \Values$.
}
\label{fig:probability-lifting}
\end{figure}


\newpage

\subsection{Commutative lifting}

Next, we investigate the commutativity of liftings between different categories.

\begin{definition}[Commutative functor lifting]
\label{def:commutative-lifting}
Given two forgetful functors $U: \cD \to \cC$ and $U': \cD' \to \cC$ and a functor $Z: \cD \to \cD'$ such that $U = U' Z$, the liftings $\cD(F): \cD \to \cD$ and $\cD'(F): \cD' \to \cD'$ of the same endofunctor $F: \cC \to \cC$ \emph{commute} with $Z$ if
\begin{subequations}
\begin{equation}
\begin{tikzcd}
\cD'
\arrow[rrr, "\cD'(F)"]
\arrow[ddr, "U'"']
&&&
\cD'
\arrow[ddl, "U'"]
\\
&
\cD
\arrow[r, "\cD(F)"]
\arrow[d, "U"']
\arrow[lu, "Z"']
&
\cD
\arrow[d, "U"]
\arrow[ru, "Z"]
\\
&
\cC
\arrow[r, "F"']
&
\cC
\end{tikzcd}
\end{equation}
\begin{equation}
Z \cD(F) = \cD'(F) Z.
\end{equation}
\end{subequations}
\end{definition}

For our bundle categories, the commutativity condition can be expressed at three different levels: functor level, natural transformation level, and morphism level.


\paragraph{Functor level}

An order-preserving $\cC$-morphism $z: \Values \to \Values'$ induces a functor $\cC^n_z: \cC^n_\Values \to \cC^n_{\Values'}$ between the corresponding bundle categories:
\begin{equation}
\begin{array}{cccc}
\cC^n_z: & \cC^n_\Values & \to & \cC^n_{\Values'}
\\
& (A, h_A: A^n \to \Values) & \mapsto & (A, z \compL h_A: A^n \xto{h_A} \Values \xto{z} \Values')
\\
& f: (A, h_A) \to (B, h_B) & \mapsto & f: (A, z \compL h_A) \to (B, z \compL h_B)
\end{array}
\end{equation}

Given two forgetful functors $U: \cC^n_\Values \to \cC$ and $U': \cC^n_{\Values'} \to \cC$ defined in \cref{def:forgetful-functor}, we have $U = U' \cC^n_z$.
According to \cref{def:commutative-lifting}, two liftings $\cC^n_\Values(F): \cC^n_\Values \to \cC^n_\Values$ and $\cC^n_{\Values'}(F): \cC^n_{\Values'} \to \cC^n_{\Values'}$ of the same endofunctor $F: \cC \to \cC$ commute with $\cC^n_z: \cC^n_\Values \to \cC^n_{\Values'}$ if
\begin{equation}
\cC^n_z \cC^n_\Values(F) = \cC^n_{\Values'}(F) \cC^n_z.
\end{equation}


\paragraph{Natural transformation level}

According to \cref{cor:yoneda-hom-functor}, a $\cC$-morphism $z: \Values \to \Values'$ corresponds to a natural transformation $\zeta: H_\Values \nat H_{\Values'}$.
We denote $\zeta^n: H^n_\Values \nat H^n_{\Values'} \defeq \zeta (-)^n$ as the vertical composition (whiskering) of the natural transformation $\zeta$ and the product functor $(-)^n$.
Then, the commutativity condition can be expressed in terms of the corresponding modalities $\lift: H^n_\Values \nat H^n_\Values F$ and $\lift': H^n_{\Values'} \nat H^n_{\Values'} F$:
\begin{subequations}
\begin{equation}
\begin{tikzcd}[arrows={natural transformation}, column sep=3em]
H^n_\Values
\arrow[r, "\zeta^n"]
\arrow[d, "\lift"']
&
H^n_{\Values'}
\arrow[d, "\lift'"]
\\
H^n_\Values F
\arrow[r, "\zeta^n F"']
&
H^n_{\Values'} F
\end{tikzcd}
\end{equation}
\begin{equation}
\zeta^n F \compL \lift
=
\lift' \compL \zeta^n,
\end{equation}
\end{subequations}
which means that $\lift$ and $\lift'$ are $F$-modules in the presheaf category $[\cC\op, \cSet]$ and $\zeta^n$ is an $F$-module homomorphism from $\lift$ to $\lift'$.
\begin{subequations}
\begin{equation}
\begin{tikzcd}
H^n_\Values A
\arrow[r, "\zeta^n_A"]
\arrow[d, "
\lift_A"']
&
H^n_{\Values'} A
\arrow[d, "\lift'_A"]
\\
H^n_\Values FA
\arrow[r, "\zeta^n_{FA}"']
&
H^n_{\Values'} FA
\end{tikzcd}
\end{equation}
\begin{equation}
\zeta^n_{FA} \compL \lift_A
=
\lift'_A \compL \zeta^n_A.
\end{equation}
\end{subequations}


\paragraph{Morphism level}

At the level of individual morphisms, the commutativity condition states that for every $n$-ary bundle $h_A: A^n \to \Values$, the following diagram commutes:
\begin{subequations}
\begin{equation}
\begin{tikzcd}[column sep=1em]
&
A^n
\arrow[ld, "h_A"']
\arrow[rd, "z \compL h_A"]
\\
\Values
\arrow[rr, "z"']
&&
\Values'
\\
&
(FA)^n
\arrow[lu, "\lift_A(h_A)"]
\arrow[ru, "\lift'_A(z \compL h_A)"']
\end{tikzcd}
\end{equation}
\begin{equation}
\label{eq:commutative-lifting-morphism}
\forall h_A: A^n \to \Values.\;
z \compL \lift_A(h_A)
=
\lift'_A(z \compL h_A).
\end{equation}
\end{subequations}

For unary liftings induced by $F$-algebras, the commutativity condition at the morphism level is equivalent to $z: \Values \to \Values'$ being an $F$-algebra homomorphism between the corresponding $F$-algebras.

\begin{proposition}
When $n = 1$ and the liftings $\lift$ and $\lift'$ are induced by $F$-algebras, the commutativity condition at the morphism level is equivalent to $z: \Values \to \Values'$ being an $F$-algebra homomorphism between those algebras.
\end{proposition}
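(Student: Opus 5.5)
The proof is a direct unfolding of definitions, with Yoneda doing the real work. Let the $F$-algebras be $a: F\Values \to \Values$ and $a': F\Values' \to \Values'$. By \cref{cor:yoneda-F-algebra}, the induced unary liftings are
\[
\lift_A(h_A) = a \compL Fh_A, \qquad \lift'_A(h'_A) = a' \compL Fh'_A
\]
for all $h_A: A \to \Values$ and $h'_A: A \to \Values'$. Substituting these into \cref{eq:commutative-lifting-morphism} turns the morphism-level commutativity condition into
\[
\forall A.\; \forall h_A: A \to \Values.\quad z \compL a \compL Fh_A = a' \compL F(z \compL h_A) = a' \compL Fz \compL Fh_A,
\]
where the last step uses functoriality of $F$. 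We compare this universally quantified statement with the algebra homomorphism equation $z \compL a = a' \compL Fz$.

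\emph{($\Leftarrow$)} Suppose $z$ is an $F$-algebra homomorphism, i.e.\ $z \compL a = a' \compL Fz$. Precomposing both sides with $Fh_A$ gives the displayed condition for every $A$ and every $h_A$.

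\emph{($\Rightarrow$)} Suppose the condition holds for all $A$ and $h_A$. Instantiate it at the universal element: take $A = \Values$ and $h_A = \id_\Values$. Since $F\id_\Values = \id_{F\Values}$, the condition reduces exactly to $z \compL a = a' \compL Fz$.

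Alternatively, the whole argument can be phrased at the natural transformation level. With $\zeta$ the natural transformation corresponding to $z$ via \cref{cor:yoneda-hom-functor}, the condition $\zeta F \compL \lift = \lift' \compL \zeta$ is an equality of natural transformations $H_\Values \nat H_{\Values'}F$. By the Yoneda lemma (\cref{thm:yoneda}), such a transformation is determined by its component at $\Values$ evaluated on $\id_\Values$. On the left this gives $z \compL a$, and on the right $a' \compL Fz$.

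There is no real obstacle here. The only points needing care are stating that "induced by $F$-algebras" means precisely the strict Yoneda correspondence $\lift_A(h) = a \compL Fh$, so that strict naturality holds, and noting that the preorder plays no role, since the morphism-level condition is an equality.
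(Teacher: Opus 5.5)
Your proof is correct and matches the paper's argument: write both liftings as $a \compL Fh_A$ and $a' \compL Fh'_A$, use functoriality $F(z \compL h_A) = Fz \compL Fh_A$, and instantiate at $A = \Values$, $h_A = \id_\Values$ to recover $z \compL a = a' \compL Fz$. You also spell out the converse direction (precomposing with $Fh_A$), which the paper leaves implicit in its chain of equalities, and your Yoneda-level version is the same instantiation written at the level of natural transformations.
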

\begin{pf}
Let $h_A: A \to \Values$ be a unary bundle.
We have $\lift_A(h_A) = f_\Values \compL Fh_A$ and $\lift'_A(z \compL h_A) = f_{\Values'} \compL F(z \compL h_A)$ for some $F$-algebras $f_\Values: F\Values \to \Values$ and $f_{\Values'}: F\Values' \to \Values'$.
Then, the commutativity condition in \cref{eq:commutative-lifting-morphism} becomes
\begin{equation}
z \compL \lift_A(h_A)
=
\underline{
z \compL f_\Values \compL Fh_A
=
f_{\Values'} \compL Fz \compL Fh_A
}
=
f_{\Values'} \compL F(z \compL h_A)
=
\lift'_A(z \compL h_A).
\end{equation}
By choosing $A = \Values$ and $h_A = \id_\Values$, this reduces to the condition that $z: \Values \to \Values'$ is an $F$-algebra homomorphism $z: (\Values, f_\Values) \to (\Values', f_{\Values'})$:
\begin{subequations}
\begin{equation}
\begin{tikzcd}
F\Values
\arrow[r, "Fz"]
\arrow[d, "f_\Values"']
&
F\Values'
\arrow[d, "f_{\Values'}"]
\\
\Values
\arrow[r, "z"']
&
\Values'
\end{tikzcd}
\end{equation}
\begin{equation}
z \compL f_\Values = f_{\Values'} \compL Fz.
\end{equation}
\end{subequations}
\end{pf}

Such algebra homomorphism conditions have been used in \citet{zhang2024enriching} to characterize the relationship between logical definitions and quantitative metrics.

\clearpage
\section{Pullback}
\label{sec:pullback}
In this section, we detail the notion of pullback in the context of bundle lifting.
Note that this is pullback in the sense of inverse image (opposite of pushforward or direct image), not in the sense of limit (opposite of pushout).

\begin{definition}[Pullback morphism]
The \emph{pullback} of a $\cC$-morphism $g: B \to \Values$ along a $\cC$-morphism $f: A \to B$ is a $\cC$-morphism $\pullback{f}g \defeq g \compL f: A \to \Values$ defined by precomposition of $g$ with $f$.
\end{definition}

Then, we can define the pullback of bundles as follows.

\begin{definition}[Pullback bundle]
\label{def:pullback-bundle}
The pullback of an $n$-ary bundle $h_B: B^n \to \Values$ along a morphism $f: A \to B$ is the pullback morphism $\pullback{(f^n)}h_B \defeq h_B \compL f^n$ along the product morphism $f^n: A^n \to B^n$.
With slight abuse of notation, we may also write $\pullback{f}h_B$ for $\pullback{(f^n)}h_B$ when $n$ is clear from context.
\begin{equation}
\begin{tikzcd}[column sep=1em]
A^n
\arrow[rr, "f^n"]
\arrow[rd, "\pullback{f}h_B"']
&&
B^n
\arrow[ld, "h_B"]
\\
&
\Values
\end{tikzcd}
\end{equation}
\end{definition}

In other words, the pullback is the contravariant functor $H^n_\Values: \cC\op \to \cSet$ acting on morphisms:
\begin{equation}
\begin{array}{cccccc}
\pullback{f} \defeq H^n_\Values f: & H^n_\Values B & \to & H^n_\Values A
\\
& h_B & \mapsto & h_B \compL f^n
\end{array}
\end{equation}

Four special cases of pullback bundles are pullback predicates, quantities, relations, and distances.

\begin{definition}[Pullback predicate]
Given a function $f: A \to B$ and a predicate $p: B \to \truth$ on the codomain $B$, the \emph{pullback predicate} $\pullback{f}p: A \to \truth \defeq p \compL f$ is a predicate on the domain $A$:
\begin{equation}
\forall a \in A.\;
\pullback{f}p(a) \defeq p(f(a)).
\end{equation}
\end{definition}

\begin{definition}[Pullback quantity]
Given a function $f: A \to B$ and a quantity $q: B \to \quant$ on the codomain $B$, the \emph{pullback quantity} $\pullback{f}q: A \to \quant \defeq q \compL f$ is a quantity on the domain $A$:
\begin{equation}
\forall a \in A.\;
\pullback{f}q(a) \defeq q(f(a)).
\end{equation}
\end{definition}

\begin{definition}[Pullback relation]
Given a function $f: A \to B$ and a relation $r: B \times B \to \truth$ on the codomain $B$, the \emph{pullback relation} $\pullback{f}r: A \times A \to \truth \defeq r \compL (f \times f)$ is a relation on the domain $A$:
\begin{equation}
\forall a_1 \in A.\;
\forall a_2 \in A.\;
\pullback{f}r(a_1, a_2) \defeq r(f(a_1), f(a_2)).
\end{equation}
\end{definition}

\begin{definition}[Pullback distance]
Given a function $f: A \to B$ and a distance $d: B \times B \to \quant$ on the codomain $B$, the \emph{pullback distance} $\pullback{f}d: A \times A \to \quant \defeq d \compL (f \times f)$ is a distance on the domain $A$:
\begin{equation}
\forall a_1 \in A.\;
\forall a_2 \in A.\;
\pullback{f}d(a_1, a_2) \defeq d(f(a_1), f(a_2)).
\end{equation}
\end{definition}

In particular, the pullback of the equality relation $=_B: B \times B \to \truth$ is called the \emph{kernel} relation: $\ker{f} \defeq \pullback{f}(=_B)$.


\subsection{Pullback of order}

We can establish the order-preserving and order-reflecting properties of pullback bundles.

\begin{lemma}
\label{lem:pullback-order-preserving}
The pullback $\pullback{f}$ is order-preserving.
\end{lemma}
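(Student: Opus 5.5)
The claim is that for any $\cC$-morphism $f: A \to B$ and any two $n$-ary bundles $h_B, h'_B: B^n \to \Values$, if $h_B \preceq h'_B$ then $\pullback{f}h_B \preceq \pullback{f}h'_B$. Here the order is the pointwise preorder of \cref{def:pointwise-preorder}, and $\pullback{f}$ is precomposition with $f^n$ as in \cref{def:pullback-bundle}. Since pullback is just precomposition and the order is pointwise, the whole argument reduces to precomposing the defining equation of the order with $f^n$.

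Assume $h_B \preceq h'_B$. By \cref{def:pointwise-preorder} this means
\begin{equation}
{\preceq_\Values} \compL \angles{h_B, h'_B} = \lT_{B^n}.
\end{equation}
Precompose both sides with $f^n: A^n \to B^n$.

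On the left, the universal property of pairing gives
\begin{equation}
\angles{h_B, h'_B} \compL f^n = \angles{h_B \compL f^n, h'_B \compL f^n} = \angles{\pullback{f}h_B, \pullback{f}h'_B}.
\end{equation}

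On the right, $\lT_{B^n} \compL f^n = \lT_{A^n}$. This holds because $\lT_{B^n}$ factors through the terminal object, as $B^n \to 1 \xto{\lT} \truth$, and the composite $A^n \to B^n \to 1$ is the unique map $A^n \to 1$.

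Combining the two sides yields
\begin{equation}
{\preceq_\Values} \compL \angles{\pullback{f}h_B, \pullback{f}h'_B} = \lT_{A^n},
\end{equation}
which is exactly $\pullback{f}h_B \preceq \pullback{f}h'_B$.

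When $\cC = \cSet$, or more generally when $\cC$ is well-pointed, the same fact can be seen with generalized elements. For any $a_{1:n} \in A^n$ we have $\pullback{f}h_B(a_{1:n}) = h_B(f(a_1), \ldots, f(a_n))$. The hypothesis, instantiated at the point $f^n(a_{1:n}) \in B^n$, gives $h_B(f^n(a_{1:n})) \preceq h'_B(f^n(a_{1:n}))$, which is the required pointwise inequality.

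The only step requiring any care is identifying $\lT_{B^n} \compL f^n$ with $\lT_{A^n}$, which uses that the constantly-true predicate factors through the terminal object. Everything else is the naturality of pairing.
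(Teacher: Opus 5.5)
Your proof is correct and is essentially the paper's argument: precompose the defining equation ${\preceq_\Values} \compL \angles{h_B, h'_B} = \lT_{B^n}$ with $f^n$, then use naturality of pairing together with $\lT_{B^n} \compL f^n = \lT_{A^n}$. You also justify the last identity via the terminal object and give the elementwise version for well-pointed $\cC$; the paper uses the identity without comment and does not give the elementwise version, but both additions are correct.
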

\begin{pf}
Given two bundles $h_B, h_B': B^n \to \Values$, assume $h_B \preceq h_B'$.
By the definition of the pointwise preorder in \cref{def:pointwise-preorder}, $h_B \preceq h_B'$ means that ${\preceq_\Values} \compL \angles{h_B, h_B'} = \lT_{B^n}$.
Because $\angles{\pullback{f}h_B, \pullback{f}h_B'} = \angles{h_B \compL f^n, h_B' \compL f^n} = \angles{h_B, h_B'} \compL f^n$ and $\lT_{A^n} = \lT_{B^n} \compL f^n$, we have ${\preceq_\Values} \compL \angles{h_B, h_B'} \compL f^n = \lT_{B^n} \compL f^n$.
Hence, ${\preceq_\Values} \compL \angles{\pullback{f}h_B, \pullback{f}h_B'} = \lT_{A^n}$, which is exactly $\pullback{f}h_B \preceq \pullback{f}h_B'$.
\end{pf}

\begin{lemma}
\label{lem:pullback-order-reflecting}
If $f^n$ is an epimorphism, then the pullback $\pullback{f}$ is order-reflecting.
\end{lemma}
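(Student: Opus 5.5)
We need to show that if $f^n: A^n \to B^n$ is an epimorphism, then $\pullback{f}h_B \preceq \pullback{f}h_B'$ implies $h_B \preceq h_B'$. The plan is to work directly with the internal formulation of the pointwise preorder from \cref{def:pointwise-preorder}. That formulation states $h \preceq h'$ as an equation between two morphisms into $\truth$. Once it is in that form, right-cancellation of epimorphisms does the rest.

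First, unfold the hypothesis. $\pullback{f}h_B \preceq \pullback{f}h_B'$ means ${\preceq_\Values} \compL \angles{h_B \compL f^n, h_B' \compL f^n} = \lT_{A^n}$. Next, rewrite both sides exactly as in the proof of \cref{lem:pullback-order-preserving}. Pairing is natural under precomposition, so $\angles{h_B \compL f^n, h_B' \compL f^n} = \angles{h_B, h_B'} \compL f^n$. The constantly-true predicate is stable under precomposition, so $\lT_{A^n} = \lT_{B^n} \compL f^n$, because $\lT$ factors through the terminal object. The hypothesis therefore becomes
\begin{equation}
\parens*{{\preceq_\Values} \compL \angles{h_B, h_B'}} \compL f^n = \lT_{B^n} \compL f^n .
\end{equation}

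Since $f^n$ is an epimorphism, we cancel it on the right. This gives ${\preceq_\Values} \compL \angles{h_B, h_B'} = \lT_{B^n}$, which by \cref{def:pointwise-preorder} is exactly $h_B \preceq h_B'$.

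There is no serious obstacle. The only point needing care is to keep the comparison in the equational form of \cref{def:pointwise-preorder} rather than the elementwise form, so that the epimorphism hypothesis applies directly.

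As a sanity check in $\cSet$: if $f$ is surjective, then $f^n$ is surjective. Any tuple $b_{1:n}$ then has a preimage $a_{1:n}$, and $h_B(b_{1:n}) = \pullback{f}h_B(a_{1:n}) \preceq \pullback{f}h_B'(a_{1:n}) = h_B'(b_{1:n})$. This matches the surjectivity hypothesis in \cref{thm:safe-verification}.
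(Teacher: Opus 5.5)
Your proposal is correct and follows essentially the same argument as the paper. You unfold \cref{def:pointwise-preorder} and rewrite the hypothesis as ${\preceq_\Values} \compL \angles{h_B, h_B'} \compL f^n = \lT_{B^n} \compL f^n$, using that pairing and the constantly-true predicate commute with precomposition, and then you cancel the epimorphism $f^n$ on the right to get $h_B \preceq h_B'$.
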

\begin{pf}
Given two bundles $h_B, h_B': B^n \to \Values$, assume $\pullback{f}h_B \preceq \pullback{f}h_B'$.
By the definition of the pointwise preorder in \cref{def:pointwise-preorder}, $\pullback{f}h_B \preceq \pullback{f}h_B'$ means that ${\preceq_\Values} \compL \angles{h_B \compL f^n, h_B' \compL f^n} = \lT_{A^n}$.
Because $\angles{h_B \compL f^n, h_B' \compL f^n} = \angles{h_B, h_B'} \compL f^n$ and $\lT_{A^n} = \lT_{B^n} \compL f^n$, we have ${\preceq_\Values} \compL \angles{h_B, h_B'} \compL f^n = \lT_{B^n} \compL f^n$.
Because $f^n$ is an epimorphism, precomposition with $f^n$ is cancellable, hence ${\preceq_\Values} \compL \angles{h_B, h_B'} = \lT_{B^n}$, which is exactly $h_B\preceq h_B'$.
\end{pf}


\subsection{Pullback of pullback}

\begin{lemma}
\label{lem:pullback-pullback}
Given an $F$-coalgebra homomorphism $f: (A, t_A) \to (B, t_B)$, we have
\begin{subequations}
\begin{equation}
\begin{tikzcd}[column sep=5em]
H^n_\Values FA
\arrow[d, "\pullback{t_A}"']
&
H^n_\Values FB
\arrow[l, "\pullback{(Ff)}"']
\arrow[d, "\pullback{t_B}"]
\\
H^n_\Values A
&
H^n_\Values B
\arrow[l, "\pullback{f}"]
\end{tikzcd}
\end{equation}
\begin{equation}
\pullback{t_A} \compL \pullback{(Ff)}
=
\pullback{f} \compL \pullback{t_B}.
\end{equation}
\begin{equation}
\forall h_{FB}: (FB)^n \to \Values.\;
\pullback{t_A}\pullback{(Ff)}h_{FB}
=
\pullback{f}\pullback{t_B}h_{FB}.
\end{equation}
\end{subequations}
\end{lemma}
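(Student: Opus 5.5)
The plan is to reduce the claim to functoriality of the contravariant hom-functor $H^n_\Values$ together with the homomorphism equation $t_B \compL f = Ff \compL t_A$ from \cref{def:coalgebra-homomorphism}. By \cref{def:pullback-bundle}, pullback along a morphism $g$ is precomposition with $g^n$, that is, $\pullback{g} = H^n_\Values g$. Since $H^n_\Values$ is contravariant, it turns composites into reversed composites: $\pullback{(g \compL k)} = \pullback{k} \compL \pullback{g}$. Concretely, this is the identity $h \compL (g \compL k)^n = (h \compL g^n) \compL k^n$. It follows from $(g \compL k)^n = g^n \compL k^n$, which is functoriality of the product functor $(-)^n$, and from associativity of composition.

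With these rules the argument is a short chain of equalities. Fix $h_{FB}: (FB)^n \to \Values$. First, $\pullback{t_A}\pullback{(Ff)}h_{FB} = h_{FB} \compL (Ff)^n \compL t_A^n = h_{FB} \compL (Ff \compL t_A)^n$. Next, apply the homomorphism equation to rewrite $Ff \compL t_A$ as $t_B \compL f$. Finally, expand again: $h_{FB} \compL (t_B \compL f)^n = (h_{FB} \compL t_B^n) \compL f^n = \pullback{f}\pullback{t_B}h_{FB}$. Because $h_{FB}$ was arbitrary, this gives the equality of maps $\pullback{t_A} \compL \pullback{(Ff)} = \pullback{f} \compL \pullback{t_B}$. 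Equivalently, applying $H^n_\Values$ to the commuting square \cref{eq:coalgebra-homomorphism-diagram} yields the commuting square in the statement.

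I do not expect a genuine obstacle. The only point needing care is the step $(g \compL k)^n = g^n \compL k^n$ in a general category $\cC$ with finite products. It follows from the universal property of the product, since both sides have the same projections $g \compL k \compL p_i$. Otherwise the argument uses only the homomorphism equation and associativity. In particular, no property of the lifting, the preorder, or the value object $\Values$ is needed, because the statement is a strict equality and not an inequality.
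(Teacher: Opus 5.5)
Your proof is correct and follows the paper's own argument, which simply says the lemma is a direct consequence of the definition of pullback and the homomorphism equation $t_B \compL f = Ff \compL t_A$. Your chain $h_{FB} \compL (Ff)^n \compL t_A^n = h_{FB} \compL (Ff \compL t_A)^n = h_{FB} \compL (t_B \compL f)^n = h_{FB} \compL t_B^n \compL f^n$ is that argument written out, with the functoriality of $(-)^n$ justified via the universal property of the product.
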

\begin{pf}
This is a direct result of the definition of pullback and the coalgebra homomorphism.
\end{pf}


\subsection{Pullback of lifting}

\begin{lemma}
\label{lem:pullback-lifting}
Given an $F$-coalgebra homomorphism $f: (A, t_A) \to (B, t_B)$ and a lifting $\lift$, we have
\begin{subequations}
\begin{equation}
\begin{tikzcd}[column sep=5em]
H^n_\Values A
\arrow[d, "\lift_A"'{name=A}]
&
H^n_\Values B
\arrow[l, "\pullback{f}"']
\arrow[d, "\lift_B"{name=B}]
\\
H^n_\Values FA
&
H^n_\Values FB
\arrow[l, "\pullback{(Ff)}"]
\arrow[phantom, from=A, to=B, "\preceq"]
\end{tikzcd}
\end{equation}
\begin{equation}
\lift_A \compL \pullback{f}
\preceq
\pullback{(Ff)} \compL \lift_B.
\end{equation}
\begin{equation}
\forall h_B: B^n \to \Values.\;
\lift_A(\pullback{f}h_B)
\preceq
\pullback{(Ff)}\lift_B(h_B).
\end{equation}
\end{subequations}
If the lifting $\lift$ is strict at $f$, then the above preorder is an equality.
\end{lemma}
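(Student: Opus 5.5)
The inequality will follow directly from the oplax naturality of the modality $\lift$, i.e.\ \cref{eq:modality-morphism}, which was derived from the functor-lifting condition \cref{eq:bundle-lifting-morphism}. The coalgebra structures $t_A$ and $t_B$ play no role in this lemma. The homomorphism hypothesis is included only so that the lemma can later be pasted next to \cref{lem:pullback-pullback} to prove \cref{lem:pullback-pushforward-closure}. Concretely, I will unfold $\pullback{f}$ as $H^n_\Values f$ and $\pullback{(Ff)}$ as $H^n_\Values Ff$, so that the claimed inequality $\lift_A \compL \pullback{f} \preceq \pullback{(Ff)} \compL \lift_B$ is literally the square in \cref{eq:modality-morphism}.

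For a self-contained argument, I will fix an arbitrary $h_B: B^n \to \Values$ and instantiate \cref{eq:bundle-lifting-morphism} with $h_A \defeq h_B \compL f^n = \pullback{f}h_B$. The premise $h_A \preceq h_B \compL f^n$ then holds by reflexivity of the pointwise preorder (\cref{def:pointwise-preorder}), which is inherited from reflexivity of $\preceq_\Values$. The conclusion gives $\lift_A(\pullback{f}h_B) \preceq \lift_B(h_B) \compL (Ff)^n = \pullback{(Ff)}\lift_B(h_B)$, using \cref{def:pullback-bundle} for the last identification. Since $h_B$ was arbitrary, the inequality between the composite maps $H^n_\Values B \to H^n_\Values FA$ follows pointwise.

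For the strictness clause, I will read ``$\lift$ is strict at $f$'' as the statement that the oplax naturality square for $f$ commutes on the nose: $\lift_A(h_B \compL f^n) = \lift_B(h_B) \compL (Ff)^n$ for all $h_B$. Under that reading, the equality is immediate.

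The main obstacle is not the computation but pinning down what ``strict'' means and checking that the concrete liftings actually satisfy it when equality is needed. For the unary liftings obtained from $F$-algebras via Yoneda (\cref{cor:yoneda-F-algebra}), strict naturality is automatic, since $f_\Values \compL F(h_B \compL f) = f_\Values \compL Fh_B \compL Ff$ by functoriality. The same holds for binary liftings built from combinators, aggregators and independent couplings. For the Wasserstein lifting, however, \cref{prop:coupling-oplax} gives only an inequality. Equality would additionally require $f$ to have a retraction, so that the second half of \cref{lem:coupling-pushforward} yields the reverse inclusion of coupling sets. I would therefore remark that strictness holds componentwise for these constructions and, for the Wasserstein part, under that extra hypothesis on $f$.
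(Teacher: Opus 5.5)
Your proof is correct and matches the paper's argument: instantiate the lifting condition \cref{eq:bundle-lifting-morphism} with $h_A = \pullback{f}h_B$, note that its premise holds by reflexivity, and read the equality clause directly off what strictness at $f$ means; you are also right that the homomorphism hypothesis is never used. One correction to your side remark: for finitely supported distributions the Wasserstein component does not need a retraction, because any coupling $\pi_B$ of $\pushforward{f}\mu$ and $\pushforward{f}\nu$ lifts to a coupling $\pi_A(a,a') = \pi_B(f(a),f(a'))\,\mu(a)\,\nu(a') / \bigl(\pushforward{f}\mu(f(a))\,\pushforward{f}\nu(f(a'))\bigr)$ of $\mu$ and $\nu$, with $0/0 = 0$, so $\kappa$ is in fact strictly natural for every $f$.
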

\begin{pf}
This is a direct result of the definition of lifting in \cref{def:bundle-lifting}, where $h_A = \pullback{f}h_B$.
\end{pf}


\subsection{Pullback of closure}

The composition of a pullback and a lifting is of central importance in our development.

\begin{definition}[Closure operator]
Given a lifting $\lift: H^n_\Values \nat H^n_\Values F$ and an $F$-coalgebra $t_A: A \to FA$, we define a \emph{closure operator} $\closure_A: H^n_\Values A \to H^n_\Values A$ by
\begin{equation}
\begin{array}{cccccc}
\closure_A \defeq \pullback{t_A} \compL \lift_A: & H^n_\Values A & \xto{\lift_A} & H^n_\Values FA & \xto{\pullback{t_A}} &  H^n_\Values A
\\
& h_A & \mapsto & \lift_A(h_A) & \mapsto & \pullback{t_A}\lift_A(h_A)
\end{array}
\end{equation}
\end{definition}

A post-fixed point $h_A: A^n \to \Values$ of the closure operator $\closure_A: H^n_\Values A \to H^n_\Values A$ satisfies
\begin{subequations}
\begin{equation}
\begin{tikzcd}[column sep=1em]
A^n
\arrow[rr, "t_A^n"{name=t}]
\arrow[rd, "h_A"']
&&
(FA)^n
\arrow[ld, "\lift_A(h_A)"]
\\
&
|[alias=V]|\Values
\arrow[phantom, from=t, to=V, "\preceq"]
\end{tikzcd}
\end{equation}
\begin{equation}
h_A \preceq \closure_A(h_A).
\end{equation}
\end{subequations}

For $n = 1$, such post-fixed points correspond to lax homomorphisms from the $F$-coalgebra to the $F$-algebra defined by the lifting.

\begin{proposition}
If $h_A: A \to \Values$ is a post-fixed point of a unary closure operator $\closure_A: H_\Values A \to H_\Values A$, then it is a lax homomorphism from the $F$-coalgebra $(A, t_A)$ to the $F$-algebra $(\Values, f_\Values)$ defined by the lifting $\lift$:
\begin{subequations}
\begin{equation}
\begin{tikzcd}
A
\arrow[r, "t_A"]
\arrow[d, "h_A"'{name=A}]
&
FA
\arrow[d, "Fh_A"{name=FA}]
\\
\Values
&
F\Values
\arrow[l, "f_\Values"]
\arrow[phantom, from=A, to=FA, "\preceq"]
\end{tikzcd}
\end{equation}
\begin{equation}
h_A \preceq f_\Values \compL Fh_A \compL t_A.
\end{equation}
\end{subequations}
\end{proposition}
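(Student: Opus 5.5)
The statement is the last proposition: a post-fixed point $h_A$ of the unary closure operator is a lax homomorphism from $(A,t_A)$ to the $F$-algebra $(\Values, f_\Values)$ determined by the lifting. The plan is simply to unfold both sides and see that they are literally the same inequality.

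First, identify $f_\Values$. By \cref{cor:yoneda-F-algebra}, a unary modality $\lift: H_\Values \nat H_\Values F$ that is (strictly) natural corresponds to the $F$-algebra
\begin{equation}
f_\Values \defeq \lift_\Values(\id_\Values): F\Values \to \Values .
\end{equation}
Under this correspondence, naturality gives, for every $h_A: A \to \Values$,
\begin{equation}
\lift_A(h_A) = \lift_A(\pullback{h_A}\id_\Values) = \pullback{(Fh_A)}\lift_\Values(\id_\Values) = f_\Values \compL Fh_A .
\end{equation}
This is exactly the statement in the paragraph ``A unary modality is an $F$-algebra'' preceding the unary lifting constructions, and I will take it as the meaning of ``the $F$-algebra defined by the lifting''.

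Second, unfold the closure operator. By definition, $\closure_A(h_A) = \pullback{t_A}\lift_A(h_A) = \lift_A(h_A) \compL t_A$. Substituting the identification from the first step gives
\begin{equation}
\closure_A(h_A) = f_\Values \compL Fh_A \compL t_A .
\end{equation}
The hypothesis $h_A \preceq \closure_A(h_A)$ is therefore verbatim $h_A \preceq f_\Values \compL Fh_A \compL t_A$, which is the lax square in the statement, with $\preceq$ the pointwise preorder of \cref{def:pointwise-preorder}.

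The only delicate point is whether $\lift$ is strictly natural. If it is merely oplax, one still has
\begin{equation}
\lift_A(h_A) = \lift_A(\pullback{h_A}\id_\Values) \preceq \pullback{(Fh_A)}\lift_\Values(\id_\Values) = f_\Values \compL Fh_A
\end{equation}
by \cref{eq:modality-morphism} applied to $f = h_A$. Precomposing with $t_A$ preserves order by \cref{lem:pullback-order-preserving}, so
\begin{equation}
h_A \preceq \closure_A(h_A) \preceq f_\Values \compL Fh_A \compL t_A ,
\end{equation}
and transitivity gives the claim anyway. The argument is therefore robust in either reading, and the proof amounts to this short chain.
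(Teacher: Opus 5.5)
Your proof is correct and follows the argument the paper intends. The paper gives no separate proof; it relies on the Yoneda correspondence (the paragraph ``A unary modality is an $F$-algebra'') to write $\lift_A(h_A) = f_\Values \compL Fh_A$, so that $\closure_A(h_A) = f_\Values \compL Fh_A \compL t_A$ and the post-fixed-point condition is literally the lax square. Your extra remark is a valid slight strengthening: the conclusion still holds for a merely oplax modality, using \cref{eq:modality-morphism} with $f = h_A$ and $h_B = \id_\Values$.
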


To relate the closure operators on different coalgebras, we introduce the notion of laxity and oplaxity along pullback.

\begin{definition}[Lax and oplax along pullback]
\label{def:lax-oplax-pullback}
Given $F$-coalgebras $(A, t_A)$ and $(B, t_B)$, a morphism $f: A \to B$, a lifting $\lift$ of $n$-ary bundles, and the closure operators $\closure_A$ and $\closure_B$ induced by the lifting and the coalgebras, we say that the closure operators are \emph{oplax} (resp.~\emph{lax}) \emph{along pullback} $\pullback{f}$ if the following diagram commutes up to $\preceq$ (resp.~$\succeq$):
\begin{subequations}
\begin{equation}
\begin{tikzcd}[column sep=5em]
H^n_\Values A
\arrow[d, "\lift_A"']
&
H^n_\Values B
\arrow[l, "\pullback{f}"']
\arrow[d, "\lift_B"]
\\
|[alias=A]|H^n_\Values FA
\arrow[d, "\pullback{t_A}"']
&
|[alias=B]|H^n_\Values FB
\arrow[d, "\pullback{t_B}"]
\\
H^n_\Values A
&
H^n_\Values B
\arrow[l, "\pullback{f}"]
\arrow[phantom, from=A, to=B, "\preceq"]
\end{tikzcd}
\end{equation}
\begin{equation}
\closure_A \compL \pullback{f}
\preceq
\pullback{f} \compL \closure_B.
\end{equation}
\begin{equation}
\forall h_B: B^n \to \Values.\;
\closure_A(\pullback{f}h_B)
\preceq
\pullback{f}\closure_B(h_B).
\end{equation}
\begin{equation}
\forall h_B: B^n \to \Values.\;
\forall a_1, \dots, a_n \in A.\;
\lift_A(\pullback{f}h_B)(t_A(a_1), \ldots, t_A(a_n))
\preceq
\lift_B(h_B)(t_B(f(a_1)), \ldots, t_B(f(a_n))).
\end{equation}
\end{subequations}
\end{definition}

Then, we can establish the following two results about the preservation and reflection of post-fixed points along pullback.

\begin{lemma}
\label{lem:pullback-lax-order-preserving}
If the closure operators are lax along an order-preserving pullback $\pullback{f}$, then $\pullback{f}$ preserves post-fixed points.
\end{lemma}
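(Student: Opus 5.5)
The plan is a two-step chain of inequalities. Lax along pullback, by \cref{def:lax-oplax-pullback} with $\succeq$ in place of $\preceq$, means that for every bundle $h_B: B^n \to \Values$,
\begin{equation}
\pullback{f}\closure_B(h_B) \preceq \closure_A(\pullback{f}h_B).
\end{equation}
The goal is to show that if $h_B$ is a post-fixed point of $\closure_B$, i.e., $h_B \preceq \closure_B(h_B)$, then $\pullback{f}h_B \preceq \closure_A(\pullback{f}h_B)$.

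First, I apply the order-preserving pullback to the assumption $h_B \preceq \closure_B(h_B)$. This gives $\pullback{f}h_B \preceq \pullback{f}\closure_B(h_B)$. Order preservation holds for every $f$ by \cref{lem:pullback-order-preserving}, so the hypothesis "order-preserving" is automatically met; it is stated only for uniformity with the reflecting variant.

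Second, I use the laxity inequality displayed above at $h_B$, which gives $\pullback{f}\closure_B(h_B) \preceq \closure_A(\pullback{f}h_B)$.

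Finally, transitivity of the pointwise preorder on $H^n_\Values A$ chains the two inequalities into $\pullback{f}h_B \preceq \closure_A(\pullback{f}h_B)$. Transitivity is inherited from the internal preorder $\preceq_\Values$ via \cref{def:pointwise-preorder}. In the well-pointed setting this is immediate pointwise. In general, I would check that the pointwise relation defined by ${\preceq_\Values} \compL \angles{h, h'} = \lT$ is transitive, using the internal transitivity of $\preceq_\Values$; I expect this to be the only mildly non-formal point.

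The argument has essentially no obstacle; the only care needed is the direction of laxity. I would double-check that "lax" indeed means $\pullback{f} \compL \closure_B \preceq \closure_A \compL \pullback{f}$, since that is the direction which moves the post-fixed-point inequality from $B$ to $A$. Fixed points are not claimed to be preserved, so no antisymmetry is needed.
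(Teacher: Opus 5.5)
Your proof is correct and is the same argument as the paper's: apply order preservation of $\pullback{f}$ to $h_B \preceq \closure_B(h_B)$, then use the laxity inequality $\pullback{f}\closure_B(h_B) \preceq \closure_A(\pullback{f}h_B)$, then conclude by transitivity. Your reading of the direction of laxity also agrees with \cref{def:lax-oplax-pullback}.
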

\begin{pf}
Assume $h_B$ is a post-fixed point of $\closure_B$, i.e., $h_B \preceq \closure_B(h_B)$.
By order preservation, $\pullback{f}h_B \preceq \pullback{f}\closure_B(h_B)$.
By laxity, $\pullback{f}\closure_B(h_B) \preceq \closure_A(\pullback{f}h_B)$.
By transitivity, $\pullback{f}h_B \preceq \closure_A(\pullback{f}h_B)$, which means that $\pullback{f}h_B$ is a post-fixed point of $\closure_A$.
\end{pf}

\begin{lemma}
\label{lem:pullback-oplax-order-reflecting}
If the closure operators are oplax along an order-reflecting pullback $\pullback{f}$, then $\pullback{f}$ reflects post-fixed points.
\end{lemma}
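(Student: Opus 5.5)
We argue by chaining the oplaxity hypothesis of \cref{def:lax-oplax-pullback} with the order-reflection property, mirroring the proof of \cref{lem:pullback-lax-order-preserving} with the inequalities reversed in direction of transport. Concretely, fix a bundle $h_B: B^n \to \Values$ and suppose $\pullback{f}h_B$ is a post-fixed point of $\closure_A$, i.e., $\pullback{f}h_B \preceq \closure_A(\pullback{f}h_B)$. The goal is to conclude $h_B \preceq \closure_B(h_B)$.

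The first step applies oplaxity along $\pullback{f}$, which gives $\closure_A(\pullback{f}h_B) \preceq \pullback{f}\closure_B(h_B)$. The second step uses transitivity of the pointwise preorder of \cref{def:pointwise-preorder} (inherited from the preorder $\preceq_\Values$). Combining it with the assumption yields $\pullback{f}h_B \preceq \pullback{f}\closure_B(h_B)$.

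The third step invokes order reflection of $\pullback{f}$, applied to the two bundles $h_B$ and $\closure_B(h_B)$ on $B^n$, which removes the pullback on both sides and gives $h_B \preceq \closure_B(h_B)$. This is exactly the statement that $h_B$ is a post-fixed point of $\closure_B$, so $\pullback{f}$ reflects post-fixed points.

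There is no real obstacle. The only point to check is that transitivity holds for the pointwise preorder. It does, because the pointwise order is defined componentwise from the internal preorder $\preceq_\Values$, which is transitive. Also, order reflection is being used exactly in the form stated in the hypothesis. In applications, this form is supplied by \cref{lem:pullback-order-reflecting} when $f^n$ is epi, e.g., when $f$ is surjective, as in \cref{thm:safe-verification}.
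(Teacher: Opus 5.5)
Your proposal is correct and follows the paper's proof step for step. You apply oplaxity to get $\closure_A(\pullback{f}h_B) \preceq \pullback{f}\closure_B(h_B)$, chain it with the assumed post-fixed-point inequality by transitivity, and then use order reflection of $\pullback{f}$ to conclude $h_B \preceq \closure_B(h_B)$.
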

\begin{pf}
Assume $\pullback{f}h_B$ is a post-fixed point of $\closure_A$, i.e., $\pullback{f}h_B \preceq \closure_A(\pullback{f}h_B)$.
By oplaxity, $\closure_A(\pullback{f}h_B) \preceq \pullback{f}\closure_B(h_B)$.
By transitivity, $\pullback{f}h_B \preceq \pullback{f}\closure_B(h_B)$.
By order reflection, $h_B \preceq \closure_B(h_B)$, which means that $h_B$ is a post-fixed point of $\closure_B$.
\end{pf}

Summarizing the above results, we obtain the following theorem.

\begin{theorem}
\label{thm:pullback-closure}
Given an $F$-coalgebra homomorphism $f: (A, t_A) \to (B, t_B)$, a lifting $\lift$, and the closure operators $\closure_A$ and $\closure_B$, we have $\closure_A \compL \pullback{f} \preceq \pullback{f} \compL \closure_B$, and
\begin{itemize}
\item if the lifting $\lift$ is strict at $f$, then $\closure_A \compL \pullback{f} = \pullback{f} \compL \closure_B$, and the pullback $\pullback{f}$ preserves post-fixed points;
\item if $f^n$ is an epimorphism, then the pullback $\pullback{f}$ reflects post-fixed points.
\end{itemize}
\end{theorem}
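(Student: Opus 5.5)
The plan is to derive the main inequality $\closure_A \compL \pullback{f} \preceq \pullback{f} \compL \closure_B$ by pasting two squares, and then obtain the two bullet points from the preservation and reflection lemmas already proved.

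For the main inequality, fix $h_B: B^n \to \Values$. Unfolding the definition of the closure operator gives $\closure_A(\pullback{f}h_B) = \pullback{t_A}\lift_A(\pullback{f}h_B)$.

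\begin{itemize}
\item By \cref{lem:pullback-lifting}, $\lift_A(\pullback{f}h_B) \preceq \pullback{(Ff)}\lift_B(h_B)$.
\item Since $\pullback{t_A}$ is order-preserving by \cref{lem:pullback-order-preserving}, applying it yields $\closure_A(\pullback{f}h_B) \preceq \pullback{t_A}\pullback{(Ff)}\lift_B(h_B)$.
\item By \cref{lem:pullback-pullback}, which uses exactly the homomorphism condition $Ff \compL t_A = t_B \compL f$, the right-hand side equals $\pullback{f}\pullback{t_B}\lift_B(h_B) = \pullback{f}\closure_B(h_B)$.
\end{itemize}

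This pasting of the lifting square with the coalgebra square gives the claimed inequality. In the terminology of \cref{def:lax-oplax-pullback}, it says the closure operators are oplax along $\pullback{f}$.

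For the first bullet, strictness of $\lift$ at $f$ turns the inequality from \cref{lem:pullback-lifting} into an equality. The same chain of steps then gives $\closure_A \compL \pullback{f} = \pullback{f} \compL \closure_B$. In particular the closure operators are also lax along $\pullback{f}$. Because $\pullback{f}$ is order-preserving by \cref{lem:pullback-order-preserving}, \cref{lem:pullback-lax-order-preserving} shows that $\pullback{f}$ preserves post-fixed points.

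For the second bullet, assume $f^n$ is an epimorphism. Then $\pullback{f}$ is order-reflecting by \cref{lem:pullback-order-reflecting}. Oplaxity holds unconditionally by the main inequality, so \cref{lem:pullback-oplax-order-reflecting} shows that $\pullback{f}$ reflects post-fixed points.

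The only step with any content is the monotonicity of $\pullback{t_A}$ used in the main inequality. It is needed to transport the lifting inequality, which lives on $(FA)^n$, down to $A^n$. This is exactly \cref{lem:pullback-order-preserving} applied to $t_A$, so I do not expect a genuine obstacle.

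One point needs care: the lifting inequality must be used with $h_A = \pullback{f}h_B$, as in the proof of \cref{lem:pullback-lifting}. Otherwise one would need order-preservation of $\lift_A$, which is not part of the definition of a lifting.
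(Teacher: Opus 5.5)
Your proposal is correct and follows the paper's proof. You get the oplax inequality by pasting \cref{lem:pullback-lifting} with \cref{lem:pullback-pullback}, preservation under strictness from \cref{lem:pullback-order-preserving,lem:pullback-lax-order-preserving}, and reflection from \cref{lem:pullback-order-reflecting,lem:pullback-oplax-order-reflecting}. Your explicit appeal to the monotonicity of $\pullback{t_A}$, which carries the lifting inequality from $(FA)^n$ down to $A^n$, is a step the paper leaves implicit, and it is justified exactly as you say.
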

\begin{pf}
The oplaxity of the closure along pullback is a direct result of \cref{lem:pullback-pullback} and \cref{lem:pullback-lifting}.
The post-fixed point preservation is due to \cref{lem:pullback-order-preserving,lem:pullback-lifting,lem:pullback-lax-order-preserving}.
The post-fixed point reflection is due to \cref{lem:pullback-order-reflecting,lem:pullback-oplax-order-reflecting}.
\end{pf}

\clearpage
\section{Pushforward}
\label{sec:pushforward}
We now introduce the notion of pushforward of bundles, which is the left adjoint of pullback.

\begin{definition}[Pushforward bundle]
\label{def:pushforward-bundle}
A \emph{pushforward} of $n$-ary $\Values$-bundles along a $\cC$-morphism $f: A \to B$ is a left adjoint $\pushforward{f}: H^n_\Values A \to H^n_\Values B$ of the pullback $\pullback{f}: H^n_\Values B \to H^n_\Values A$ in \cref{def:pullback-bundle},
\begin{equation}
H^n_\Values A
\adjto{\pushforward{f}}{\pullback{f}}
H^n_\Values B,
\end{equation}
which satisfies the following adjunction property:
\begin{equation}
\label{eq:adjunction-pushforward-pullback}
\forall h_A: A^n \to \Values.\;
\forall h_B: B^n \to \Values.\;
(\pushforward{f}h_A \preceq h_B)
\leqv
(h_A \preceq \pullback{f}h_B).
\end{equation}
\begin{equation}
\begin{tikzcd}[column sep=1em]
A^n
\arrow[rr, "f^n"{name=f}]
\arrow[rd, "h_A"']
&&
B^n
\arrow[ld, "\pushforward{f}h_A"]
\\
&
|[alias=V]|\Values
\arrow[phantom, from=f, to=V, "\preceq"]
\end{tikzcd}
\end{equation}
\end{definition}


In our pointwise setting, the pushforward bundle can be computed as joins over the fibers of $f$:

\begin{equation}
\begin{array}{cccc}
\pushforward{f}: & H^n_\Values A & \to & H^n_\Values B
\\
& h_A & \mapsto & \anon{b_{1:n}}{\displaystyle \join_{a_i \in f\inv(b_i)} h_A(a_{1:n})}
\end{array}
\end{equation}

A basic property of pushforward and pullback is given by the following unit and counit equations of the adjunction.

\begin{lemma}[Unit and counit]
\label{lem:unit-counit}
Under \cref{def:pushforward-bundle}, for every $f: A \to B$ we have:
\begin{subequations}
\label{eq:unit}
\begin{equation}
\id_{H^n_\Values A} \preceq \pullback{f} \compL \pushforward{f}.
\end{equation}
\begin{equation}
\forall h_A: A^n \to \Values.\;
h_A \preceq \pullback{f}\pushforward{f}h_A.
\end{equation}
\end{subequations}
\begin{subequations}
\label{eq:counit}
\begin{equation}
\pushforward{f} \compL \pullback{f} \preceq \id_{H^n_\Values B}.
\end{equation}
\begin{equation}
\forall h_B: B^n \to \Values.\;
\pushforward{f}\pullback{f}h_B \preceq h_B.
\end{equation}
\end{subequations}
\end{lemma}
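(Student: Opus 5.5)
The unit and counit inequalities both follow from the adjunction property in \cref{eq:adjunction-pushforward-pullback}, specialized so that one side becomes reflexivity of the pointwise preorder in \cref{def:pointwise-preorder}.

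For the unit \cref{eq:unit}, fix $h_A: A^n \to \Values$ and set $h_B \defeq \pushforward{f}h_A$ in \cref{eq:adjunction-pushforward-pullback}. The left-hand side $\pushforward{f}h_A \preceq \pushforward{f}h_A$ holds by reflexivity of $\preceq$, which is a preorder. The adjunction then gives $h_A \preceq \pullback{f}\pushforward{f}h_A$. Since $h_A$ was arbitrary, this is the pointwise statement $\id_{H^n_\Values A} \preceq \pullback{f} \compL \pushforward{f}$.

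For the counit \cref{eq:counit}, fix $h_B: B^n \to \Values$ and set $h_A \defeq \pullback{f}h_B$. Now the right-hand side $\pullback{f}h_B \preceq \pullback{f}h_B$ holds by reflexivity, so the reverse direction of the equivalence yields $\pushforward{f}\pullback{f}h_B \preceq h_B$.

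As a sanity check, the pointwise formula confirms both directly.
\begin{itemize}
\item For the unit, $a_i \in f\inv(f(a_i))$, so $h_A(a_{1:n})$ is one of the terms joined in $\pushforward{f}h_A(f(a_1), \ldots, f(a_n))$ and hence lies below it.
\item For the counit, every term $h_B(f(a_1), \ldots, f(a_n))$ with $a_i \in f\inv(b_i)$ equals $h_B(b_{1:n})$, so the join is bounded by $h_B(b_{1:n})$. This includes the empty-fiber case, where the empty join is the bottom element and is still $\preceq h_B(b_{1:n})$.
\end{itemize}

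There is no real obstacle. The one point of care is that the pointwise join formula is only a left adjoint when the required joins exist, which the paper assumes. Working from the abstract adjunction in \cref{def:pushforward-bundle} avoids this issue entirely.
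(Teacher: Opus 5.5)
Your proof is correct and matches the paper's: both inequalities come from instantiating the adjunction \cref{eq:adjunction-pushforward-pullback} with $h_B = \pushforward{f}h_A$ (unit) and with $h_A = \pullback{f}h_B$ (counit), where reflexivity of $\preceq$ discharges the trivial side. Your pointwise check via the fiber-join formula, including the empty-fiber case, is a correct extra that the paper does not spell out.
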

\begin{pf}
\cref{eq:unit} is the \emph{unit} of the adjunction, obtained by instantiating \cref{eq:adjunction-pushforward-pullback} with $h_B = \pushforward{f}h_A$;
\cref{eq:counit} is the \emph{counit} of the adjunction obtained by instantiating \cref{eq:adjunction-pushforward-pullback} with $h_A = \pullback{f}h_B$.
\end{pf}


\subsection{Pushforward of order}

Similarly to \cref{lem:pullback-order-preserving}, we can show that the pushforward is order-preserving.

\begin{lemma}
\label{lem:pushforward-order-preserving}
The pushforward $\pushforward{f}$ is order-preserving.
\end{lemma}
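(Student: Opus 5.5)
The plan is to argue purely from the adjunction in \cref{def:pushforward-bundle}, together with the unit in \cref{lem:unit-counit} and transitivity of the pointwise preorder $\preceq$. Concretely, I fix two bundles $h_A, h_A': A^n \to \Values$ with $h_A \preceq h_A'$. The goal is to show $\pushforward{f}h_A \preceq \pushforward{f}h_A'$.

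The argument has three steps.
\begin{enumerate}
\item By the unit, $h_A' \preceq \pullback{f}\pushforward{f}h_A'$.
\item By transitivity of the pointwise preorder (\cref{def:pointwise-preorder}, which is pointwise inherited from the preorder on $\Values$), we get $h_A \preceq \pullback{f}\pushforward{f}h_A'$.
\item Apply the adjunction property \cref{eq:adjunction-pushforward-pullback} from right to left, taking $h_B \defeq \pushforward{f}h_A'$. This yields exactly $\pushforward{f}h_A \preceq \pushforward{f}h_A'$.
\end{enumerate}
This is the standard fact that left adjoints between preorders are monotone.

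As a sanity check, I will also note the concrete pointwise computation. For each tuple $b_{1:n}$, the join $\join_{a_i \in f\inv(b_i)} h_A(a_{1:n})$ is bounded above by $\join_{a_i \in f\inv(b_i)} h_A'(a_{1:n})$. This holds because every member of the first family is below the corresponding member of the second, which in turn is below the second join. The argument also covers empty fibers, where both sides are the empty join.

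I expect no real obstacle. The only point needing care is that transitivity of the pointwise order on hom-sets holds in the internal formulation. This follows from $\preceq_\Values$ being an internal preorder, or directly from the pointwise description when $\cC = \cSet$.
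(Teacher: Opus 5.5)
Your proposal is correct and matches the paper's proof: the unit gives $h_A \preceq h_A' \preceq \pullback{f}\pushforward{f}h_A'$, and the adjunction applied with $h_B = \pushforward{f}h_A'$ yields $\pushforward{f}h_A \preceq \pushforward{f}h_A'$. Your pointwise join computation is an extra sanity check that the paper does not include.
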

\begin{pf}
Given two bundles $h_A, h_A': A^n \to \Values$, assume $h_A \preceq h_A'$.
By the unit, $h_A \preceq h_A' \preceq \pullback{f}\pushforward{f}h_A'$.
By the adjunction, this is equivalent to $\pushforward{f}h_A \preceq \pushforward{f}h_A'$, which means that $\pushforward{f}$ is order-preserving.
\end{pf}


\subsection{Pushforward of pullback}

\begin{lemma}
\label{lem:pushforward-pullback}
Given an $F$-coalgebra homomorphism $f: (A, t_A) \to (B, t_B)$, we have
\begin{subequations}
\begin{equation}
\begin{tikzcd}[column sep=5em]
H^n_\Values FA
\arrow[d, "\pullback{t_A}"'{name=A}]
\arrow[r, "\pushforward{(Ff)}"]
&
H^n_\Values FB
\arrow[d, "\pullback{t_B}"{name=B}]
\\
H^n_\Values A
\arrow[r, "\pushforward{f}"']
&
H^n_\Values B
\arrow[phantom, from=A, to=B, "\preceq"]
\end{tikzcd}
\end{equation}
\begin{equation}
\pushforward{f} \compL \pullback{t_A}
\preceq
\pullback{t_B} \compL \pushforward{(Ff)}.
\end{equation}
\begin{equation}
\forall h_{FA}: (FA)^n \to \Values.\;
\pushforward{f}\pullback{t_A}h_{FA}
\preceq
\pullback{t_B}\pushforward{(Ff)}h_{FA}.
\end{equation}
\end{subequations}
\end{lemma}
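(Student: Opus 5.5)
We prove the inequality $\pushforward{f}\pullback{t_A}h_{FA} \preceq \pullback{t_B}\pushforward{(Ff)}h_{FA}$ through the adjunction of \cref{def:pushforward-bundle} rather than by expanding joins over fibers. By \cref{eq:adjunction-pushforward-pullback} applied along $f$, the claim is equivalent to
\begin{equation*}
\pullback{t_A}h_{FA} \preceq \pullback{f}\pullback{t_B}\pushforward{(Ff)}h_{FA}.
\end{equation*}
So the plan is to bound the left-hand side from above by a quantity that the coalgebra homomorphism lets us rewrite as the right-hand side.

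First, apply the unit of the adjunction for $Ff$ (\cref{lem:unit-counit}, instantiated with the map $Ff: FA \to FB$). This gives $h_{FA} \preceq \pullback{(Ff)}\pushforward{(Ff)}h_{FA}$.

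Second, pull both sides back along $t_A$. By \cref{lem:pullback-order-preserving} this yields
\begin{equation*}
\pullback{t_A}h_{FA} \preceq \pullback{t_A}\pullback{(Ff)}\pushforward{(Ff)}h_{FA}.
\end{equation*}

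Third, invoke \cref{lem:pullback-pullback} with $h_{FB} = \pushforward{(Ff)}h_{FA}$. This rewrites $\pullback{t_A}\pullback{(Ff)}$ as $\pullback{f}\pullback{t_B}$, giving exactly the adjoint form displayed above. Transposing back across the adjunction for $f$ then concludes the proof.

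The only real content is the homomorphism identity $t_B \compL f = Ff \compL t_A$, which is already encapsulated in \cref{lem:pullback-pullback}. The main point requiring care is that the adjunction $\pushforward{(Ff)} \dashv \pullback{(Ff)}$ exists on $(FA)^n \to \Values$. This requires the relevant joins over fibers of $(Ff)^n$ to exist, and the statement implicitly assumes this just as the pushforward definition does. I would state that assumption explicitly.

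As a sanity check, one can also verify the claim pointwise. For $b_i\in B$ and $a_i\in f^{-1}(b_i)$, we have $Ff(t_A(a_i)) = t_B(b_i)$. Hence $h_{FA}(t_A(a_{1:n}))$ is one term of the join defining $\pushforward{(Ff)}h_{FA}(t_B(b_{1:n}))$. Taking the join over the $a_i$ then gives the inequality.
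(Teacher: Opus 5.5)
Your proposal is correct and follows essentially the same route as the paper's proof: take the unit of the $\pushforward{(Ff)} \dashv \pullback{(Ff)}$ adjunction, rewrite $\pullback{t_A}\compL\pullback{(Ff)}$ as $\pullback{f}\compL\pullback{t_B}$ via \cref{lem:pullback-pullback}, and transpose across the adjunction for $f$. You also cite order-preservation of $\pullback{t_A}$ (\cref{lem:pullback-order-preserving}) and the existence of the required joins explicitly, whereas the paper uses both tacitly; your pointwise fiber check is likewise correct.
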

\begin{pf}
By \cref{lem:unit-counit}, we have $\pullback{t_A} \preceq \pullback{t_A}\compL \pullback{(Ff)}\compL \pushforward{(Ff)}$.
By \cref{lem:pullback-pullback}, we have $\pullback{t_A} \compL \pullback{(Ff)} = \pullback{f} \compL \pullback{t_B}$.
Composing them yields $\pullback{t_A} \preceq \pullback{f} \compL \pullback{t_B} \compL \pushforward{(Ff)}$, which is equivalent to $\pushforward{f} \compL \pullback{t_A} \preceq \pullback{t_B} \compL \pushforward{(Ff)}$ by adjunction.
\end{pf}


\subsection{Pushforward of lifting}

\begin{lemma}
\label{lem:pushforward-lifting}
Given an $F$-coalgebra homomorphism $f: (A, t_A) \to (B, t_B)$ and a lifting $\lift$, we have
\begin{subequations}
\begin{equation}
\begin{tikzcd}[column sep=5em]
H^n_\Values A
\arrow[d, "\lift_A"'{name=A}]
\arrow[r, "\pushforward{f}"]
&
H^n_\Values B
\arrow[d, "\lift_B"{name=B}]
\\
H^n_\Values FA
\arrow[r, "\pushforward{(Ff)}"']
&
H^n_\Values FB
\arrow[phantom, from=A, to=B, "\preceq"]
\end{tikzcd}
\end{equation}
\begin{equation}
\pushforward{(Ff)} \compL \lift_A
\preceq
\lift_B \compL \pushforward{f}.
\end{equation}
\begin{equation}
\forall h_A: A^n \to \Values.\;
\pushforward{(Ff)}\lift_A(h_A)
\preceq
\lift_B(\pushforward{f}h_A).
\end{equation}
\end{subequations}
\end{lemma}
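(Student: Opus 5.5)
The plan is to use the adjunction $\pushforward{(Ff)} \dashv \pullback{(Ff)}$ from \cref{def:pushforward-bundle} to turn the claimed inequality into a statement about pullbacks, which the lifting condition in \cref{def:bundle-lifting} controls directly. By \cref{eq:adjunction-pushforward-pullback}, applied to the map $Ff: FA \to FB$, the target inequality
\begin{equation*}
\pushforward{(Ff)}\lift_A(h_A) \preceq \lift_B(\pushforward{f}h_A)
\end{equation*}
is equivalent to
\begin{equation*}
\lift_A(h_A) \preceq \pullback{(Ff)}\lift_B(\pushforward{f}h_A) = \lift_B(\pushforward{f}h_A) \compL (Ff)^n .
\end{equation*}
So the whole proof reduces to establishing this second inequality for every $h_A: A^n \to \Values$.

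The key step is the functor-lifting condition \cref{eq:bundle-lifting-morphism}. I instantiate it with $h_B \defeq \pushforward{f}h_A$ and the same morphism $f: A \to B$. Its hypothesis is $h_A \preceq h_B \compL f^n = \pullback{f}\pushforward{f}h_A$, and this is exactly the unit of the adjunction in \cref{lem:unit-counit}. The conclusion of \cref{eq:bundle-lifting-morphism} is then $\lift_A(h_A) \preceq \lift_B(\pushforward{f}h_A) \compL (Ff)^n$, which is what we need. Transposing back through the adjunction finishes the proof.

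The coalgebra homomorphism hypothesis is not actually used; the result holds for any $f$. The only point needing care is that the pushforward along $Ff$ must exist, i.e., the fiberwise joins over $FA$ must exist, so that the adjunction for $Ff$ is available. This is covered by the standing existence assumption on joins in the definition of pushforward. If one wants to avoid that assumption, there is a direct pointwise route. Evaluate at $u_{1:n} \in (FB)^n$: each term of the join defining $\pushforward{(Ff)}\lift_A(h_A)(u_{1:n})$ comes from some $w_{1:n}$ with $Ff(w_i) = u_i$. Each such term is $\lift_A(h_A)(w_{1:n}) \preceq \lift_B(\pushforward{f}h_A)(u_{1:n})$ by the inequality above. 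The join is therefore bounded by the right-hand side, and this step holds even for empty fibers via the empty join.
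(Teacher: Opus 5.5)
Your proposal is correct and is essentially the paper's argument: you transpose along the adjunction $\pushforward{(Ff)} \dashv \pullback{(Ff)}$ and feed the unit $h_A \preceq \pullback{f}\pushforward{f}h_A$ from \cref{lem:unit-counit} into the lifting condition. The only difference is minor. The paper splits that step into monotonicity of $\lift_A$ (used implicitly) plus \cref{lem:pullback-lifting}, whereas you apply \cref{eq:bundle-lifting-morphism} once with $h_B = \pushforward{f}h_A$, which absorbs the monotonicity step; you are also right that the homomorphism hypothesis plays no role in either proof.
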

\begin{pf}
By \cref{lem:unit-counit}, we have $\lift_A \preceq \lift_A\compL \pullback{f}\compL \pushforward{f}$.
By \cref{lem:pullback-lifting}, we have $\lift_A \compL \pullback{f} \preceq \pullback{(Ff)} \compL \lift_B$.
Composing them yields $\lift_A \preceq \pullback{(Ff)} \compL \lift_B \compL \pushforward{f}$, which is equivalent to $\pushforward{(Ff)} \compL \lift_A \preceq \lift_B \compL \pushforward{f}$ by adjunction.
\end{pf}


\subsection{Pushforward of closure}

As in \cref{def:lax-oplax-pullback}, we introduce the notion of laxity and oplaxity along pushforward.

\begin{definition}[Lax and oplax along pushforward]
\label{def:lax-oplax-pushforward}
Given $F$-coalgebras $(A, t_A)$ and $(B, t_B)$, a morphism $f: A \to B$, a lifting $\lift$ of $n$-ary bundles, and the closure operators $\closure_A$ and $\closure_B$ induced by the lifting and the coalgebras, we say that the closure operators are \emph{lax} (resp.~\emph{oplax}) \emph{along pushforward} $\pushforward{f}$ if the following diagram commutes up to $\preceq$ (resp.~$\succeq$):
\begin{subequations}
\begin{equation}
\begin{tikzcd}[column sep=5em]
H^n_\Values A
\arrow[d, "\lift_A"']
\arrow[r, "\pushforward{f}"]
&
H^n_\Values B
\arrow[d, "\lift_B"]
\\
|[alias=A]|H^n_\Values FA
\arrow[d, "\pullback{t_A}"']
&
|[alias=B]|H^n_\Values FB
\arrow[d, "\pullback{t_B}"]
\\
H^n_\Values A
\arrow[r, "\pushforward{f}"']
&
H^n_\Values B
\arrow[phantom, from=A, to=B, "\preceq"]
\end{tikzcd}
\end{equation}
\begin{equation}
\pushforward{f} \compL \closure_A
\preceq
\closure_B \compL \pushforward{f}.
\end{equation}
\begin{equation}
\forall h_A: A^n \to \Values.\;
\pushforward{f}\closure_A(h_A)
\preceq
\closure_B(\pushforward{f}h_A).
\end{equation}
\end{subequations}
\end{definition}

We can then establish the following result about the preservation of post-fixed points along pushforward.

\begin{lemma}
\label{lem:pushforward-lax-order-preserving}
If the closure operators are lax along an order-preserving pushforward $\pushforward{f}$, then $\pushforward{f}$ preserves post-fixed points.
\end{lemma}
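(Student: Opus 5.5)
The argument mirrors \cref{lem:pullback-lax-order-preserving}, with the pushforward $\pushforward{f}$ in place of the pullback and the direction of transport reversed. Start from a bundle $h_A: A^n \to \Values$ that is a post-fixed point of $\closure_A$, so $h_A \preceq \closure_A(h_A)$. The goal is to show that $\pushforward{f}h_A \preceq \closure_B(\pushforward{f}h_A)$.

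The plan uses two inequalities joined by transitivity of the pointwise preorder from \cref{def:pointwise-preorder}.

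First, apply the hypothesis that $\pushforward{f}$ is order-preserving to $h_A \preceq \closure_A(h_A)$. This gives $\pushforward{f}h_A \preceq \pushforward{f}\closure_A(h_A)$. Order preservation is in fact always available, by \cref{lem:pushforward-order-preserving}.

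Second, invoke laxity of the closure operators along pushforward, in the sense of \cref{def:lax-oplax-pushforward}, at the bundle $h_A$. This gives $\pushforward{f}\closure_A(h_A) \preceq \closure_B(\pushforward{f}h_A)$.

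Chaining the two inequalities by transitivity yields $\pushforward{f}h_A \preceq \closure_B(\pushforward{f}h_A)$. Hence $\pushforward{f}h_A$ is a post-fixed point of $\closure_B$.

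There is no substantive obstacle. The one point to check is orientation: in \cref{def:lax-oplax-pushforward}, ``lax'' is the $\preceq$ direction, $\pushforward{f}\compL\closure_A \preceq \closure_B \compL \pushforward{f}$. That is exactly the direction needed to push post-fixed points forward, unlike the pullback case, where oplaxity was paired with reflection. It is also worth confirming that transitivity of $\preceq$ lifts pointwise from $\Values$ to bundles. This holds because $\preceq_\Values$ is a preorder and the comparison is evaluated at each tuple $b_{1:n}$.
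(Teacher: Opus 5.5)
Your proof is correct and is exactly the paper's argument: order preservation of $\pushforward{f}$ applied to $h_A \preceq \closure_A(h_A)$, then laxity along pushforward at $h_A$, chained by transitivity to get $\pushforward{f}h_A \preceq \closure_B(\pushforward{f}h_A)$. Your orientation check of \cref{def:lax-oplax-pushforward} is also correct.
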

\begin{pf}
Assume $h_A$ is a post-fixed point of $\closure_A$, i.e., $h_A \preceq \closure_A(h_A)$.
By order preservation, $\pushforward{f}h_A \preceq \pushforward{f}\closure_A(h_A)$.
By laxity, $\pushforward{f}\closure_A(h_A) \preceq \closure_B(\pushforward{f}h_A)$.
By transitivity, $\pushforward{f}h_A \preceq \closure_B(\pushforward{f}h_A)$, which means that $\pushforward{f}h_A$ is a post-fixed point of $\closure_B$.
\end{pf}

The dual statement also holds and is omitted here.

Finally, we can establish the following theorem summarizing the results on pushforward of closure.

\begin{theorem}
\label{thm:pushforward-closure}
Given an $F$-coalgebra homomorphism $f: (A, t_A) \to (B, t_B)$, a lifting $\lift$, and the closure operators $\closure_A$ and $\closure_B$, we have $\pushforward{f}\compL \closure_A \preceq \closure_B \compL \pushforward{f}$, and the pushforward $\pushforward{f}$ preserves post-fixed points.
\end{theorem}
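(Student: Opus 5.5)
The plan is to derive the inequality $\pushforward{f}\compL \closure_A \preceq \closure_B \compL \pushforward{f}$ by stacking the two squares already established. Then the preservation of post-fixed points follows from \cref{lem:pushforward-lax-order-preserving}. The squares come from \cref{lem:pushforward-pullback} (the lower square, involving $\pullback{t_A}$ and $\pullback{t_B}$) and \cref{lem:pushforward-lifting} (the upper square, involving $\lift_A$ and $\lift_B$). Both hold for any $F$-coalgebra homomorphism $f$ and any lifting $\lift$.

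Concretely, fix $h_A: A^n \to \Values$ and unfold $\closure_A = \pullback{t_A}\compL\lift_A$.
\begin{itemize}
\item Applying \cref{lem:pushforward-pullback} to $h_{FA} = \lift_A(h_A)$ gives $\pushforward{f}\pullback{t_A}\lift_A(h_A) \preceq \pullback{t_B}\pushforward{(Ff)}\lift_A(h_A)$.
\item By \cref{lem:pushforward-lifting}, $\pushforward{(Ff)}\lift_A(h_A) \preceq \lift_B(\pushforward{f}h_A)$.
\item Since $\pullback{t_B}$ is order-preserving (\cref{lem:pullback-order-preserving}), this yields $\pullback{t_B}\pushforward{(Ff)}\lift_A(h_A) \preceq \pullback{t_B}\lift_B(\pushforward{f}h_A) = \closure_B(\pushforward{f}h_A)$.
\end{itemize}
Transitivity of the pointwise preorder then gives $\pushforward{f}\closure_A(h_A) \preceq \closure_B(\pushforward{f}h_A)$. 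This is exactly laxity along pushforward in the sense of \cref{def:lax-oplax-pushforward}.

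For the second claim, $\pushforward{f}$ is order-preserving by \cref{lem:pushforward-order-preserving}. Together with the laxity just shown, \cref{lem:pushforward-lax-order-preserving} immediately gives that $h_A \preceq \closure_A(h_A)$ implies $\pushforward{f}h_A \preceq \closure_B(\pushforward{f}h_A)$.

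There is no real obstacle here, only one point of care. The pasting of the two squares requires that the lower square be applied to the specific bundle $\lift_A(h_A)$, and that the upper inequality be pushed through $\pullback{t_B}$ rather than through $\pushforward{f}$. This is why the monotonicity of pullback is the needed ingredient. I would also note that the existence of the joins defining $\pushforward{f}$ and $\pushforward{(Ff)}$ is assumed, as stipulated in \cref{def:pushforward-bundle}, since both adjunction-based lemmas rely on it.
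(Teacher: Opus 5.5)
Your proposal is correct and follows the paper's proof. Laxity along pushforward comes from pasting \cref{lem:pushforward-pullback} with \cref{lem:pushforward-lifting}, and preservation of post-fixed points then follows from \cref{lem:pushforward-order-preserving,lem:pushforward-lax-order-preserving}. You also make explicit a step the paper leaves implicit: pasting the two squares requires pushing the upper inequality through the monotone pullback $\pullback{t_B}$ (\cref{lem:pullback-order-preserving}).
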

\begin{pf}
The laxity of the closure along pushforward is a direct result of \cref{lem:pushforward-pullback} and \cref{lem:pushforward-lifting}.
The post-fixed point preservation is due to \cref{lem:pushforward-order-preserving,lem:pushforward-lax-order-preserving}.
\end{pf}

\clearpage
\section{Proofs}
\label{sec:proofs}
This appendix restates the main transfer theorems and records short proof sketches that expose the argument structure without repeating the detailed category-theoretic proofs.
The supporting ingredients are the lifting results in \cref{sec:lifting}, the pullback closure theorem in \cref{sec:pullback}, and the pushforward closure theorem in \cref{sec:pushforward}.
We then include the detailed proofs of the \cref{sec:abstraction} propositions.


\SafeVerification*

\begin{pfsketch}
We first establish the oplaxity of closure along pullback.
The pullback-of-pullback identity in \cref{lem:pullback-pullback} and the pullback-of-lifting inequality in \cref{lem:pullback-lifting} give the abstract closure comparison; for the coupling-based probability component used in our binary liftings, the needed oplaxity premise is supplied by \cref{prop:coupling-oplax}.
Applied to the encoder $\encoder$, this yields
\begin{equation}
\closure_\States(\pullback{\encoder}h_\Representations)
\preceq
\pullback{\encoder}\closure_\Representations(h_\Representations).
\end{equation}
If $\pullback{\encoder}h_\Representations \preceq \closure_\States(\pullback{\encoder}h_\Representations)$, then by transitivity $\pullback{\encoder}h_\Representations \preceq \pullback{\encoder}\closure_\Representations(h_\Representations)$.
Next, surjectivity makes $\pullback{\encoder}$ order-reflecting by \cref{lem:pullback-order-reflecting}, and \cref{lem:pullback-oplax-order-reflecting} turns these two facts into reflection of post-fixed points.
This is summarized in \cref{thm:pullback-closure}, so we conclude $h_\Representations \preceq \closure_\Representations(h_\Representations)$.
\end{pfsketch}


\SafeConstruction*

\begin{pfsketch}
We first establish the laxity of closure along pushforward.
The adjunction in \cref{lem:unit-counit}, the pullback--pushforward interaction in \cref{lem:pushforward-pullback}, and the pushforward-of-lifting inequality in \cref{lem:pushforward-lifting} together yield
\begin{equation}
\pushforward{\encoder}\closure_\States(h_\States)
\preceq
\closure_\Representations(\pushforward{\encoder}h_\States).
\end{equation}
If $h_\States \preceq \closure_\States(h_\States)$, then order preservation of pushforward in \cref{lem:pushforward-order-preserving} gives
\begin{equation}
\pushforward{\encoder}h_\States
\preceq
\pushforward{\encoder}\closure_\States(h_\States).
\end{equation}
Combining the two displays and using transitivity yields
\(
\pushforward{\encoder}h_\States
\preceq
\closure_\Representations(\pushforward{\encoder}h_\States)
\),
and \cref{lem:pushforward-lax-order-preserving} identifies this as preservation of post-fixed points.
This is summarized in \cref{thm:pushforward-closure}.
\end{pfsketch}


\LogicMetric*

\begin{pfsketch}
This is an instance of commutative lifting in \cref{def:commutative-lifting}.
The assumptions say that the zero predicate $z$ is an algebra homomorphism for the three operator building blocks in \cref{tab:operator}, so the combinator, aggregator, and barycenter all commute with $z$.
By the morphism-level characterization in \cref{eq:commutative-lifting-morphism}, the induced quantitative and logical liftings therefore commute with postcomposition by $z$.
Applying that identity to a bundle $h_X$ gives
\begin{equation}
z \compL \lift^\quant_X(h_X)
=
\lift^\truth_X(z \compL h_X).
\end{equation}
Thus thresholding the quantitative lifting at zero agrees pointwise with lifting the corresponding logical bundle.
\end{pfsketch}


\NextObservationPrediction*

\begin{pf}
The diagram in \cref{eq:moore-homomorphism-diagram} is equivalent to
\begin{equation}
\begin{tikzcd}[column sep=6em]
X
\arrow[r, "f"]
\arrow[d, "\transition_X"']
&
Y
\arrow[d, "\transition_Y"]
\\
(\Probability{X})^\Actions
\arrow[r, "(\Probability{f})^\Actions"]
&
(\Probability{Y})^\Actions
\end{tikzcd}
\text{\quad and \quad}
\begin{tikzcd}[column sep=6em]
X
\arrow[r, "f"]
\arrow[d, "\observation_X"']
&
Y
\arrow[d, "\observation_Y"]
\\
\Observations
\arrow[r, "\id_\Observations"]
&
\Observations
\end{tikzcd}
\end{equation}
The desired result is immediate from these two diagrams:
\begin{equation}
\begin{tikzcd}[column sep=6em]
X
\arrow[r, "f"]
\arrow[d, "\transition_X"']
&
Y
\arrow[d, "\transition_Y"]
\\
(\Probability{X})^\Actions
\arrow[r, "(\Probability{f})^\Actions"]
\arrow[d, "(\Probability{\observation_X})^\Actions"']
&
(\Probability{Y})^\Actions
\arrow[d, "(\Probability{\observation_Y})^\Actions"]
\\
(\Probability{\Observations})^\Actions
\arrow[r, "\id_{(\Probability{\Observations})^\Actions}"]
&
(\Probability{\Observations})^\Actions
\end{tikzcd}
\end{equation}
\end{pf}


\ModelIrrelevanceAbstraction*

\begin{pf}
The model-irrelevance abstraction condition can be rewritten as:
$
\ker{\encoder}
\limp
\ker(\pushforward{\encoder}\transition_\States)
\lcon
\ker{\observation_\States}
$,
where the consequent is exactly $\ker\parens*{F_\Moore\encoder \compL \angles{\transition_\States, \observation_\States}}$.
Thus, $\ker{\encoder} \limp \ker\parens*{F_\Moore\encoder \compL \angles{\transition_\States, \observation_\States}}$ means that $F_\Moore\encoder \compL \angles{\transition_\States, \observation_\States}: \States \to F_\Moore\Representations$ factors through $\encoder: \States \to \Representations$; equivalently, there exists a morphism $\angles{\transition_\Representations, \observation_\Representations}: \Representations \to F_\Moore\Representations$ such that $F_\Moore\encoder \compL \angles{\transition_\States, \observation_\States}
=
\angles{\transition_\Representations, \observation_\Representations} \compL \encoder$.
This is exactly the coalgebra homomorphism condition for $F_\Moore$ in \cref{eq:moore-homomorphism}.
\end{pf}


\HomomorphismBisimulation*

\begin{pf}
If $r$ is a post-fixed point of $\closure_X$, then $r = \ker{q_r} \limp \ker(F{q_r} \compL t_X)$.
This means that $F{q_r} \compL t_X: X \to F(X / r)$ factors through $q_r: X \to X / r$, i.e., there exists a morphism $t_{X / r}: X / r \to F(X / r)$ such that $F{q_r} \compL t_X = t_{X / r} \compL q_r$.
This is exactly the coalgebra homomorphism condition in \cref{def:coalgebra-homomorphism}.
Conversely, if there exists such a coalgebra structure $t_{X / r}$, then $F{q_r} \compL t_X = t_{X / r} \compL q_r$ implies that $\ker{q_r} \limp \ker(F{q_r} \compL t_X)$, i.e., $r$ is a post-fixed point of $\closure_X$.
\end{pf}

\clearpage
\section{Discussion}
\label{sec:discussion}
This section clarifies how several nearby RL constructions sit relative to our framework.
Coalgebra homomorphisms compare systems of the same type and induce state abstraction.
By contrast, interface changes, closed-loop feedback, temporal abstraction, and agent memory often change the system type or wire systems together before any abstraction map is chosen.


\subsection{Functorial transformations of systems}

We first make explicit the categorical mechanism behind \cref{ssec:transformation}.
A natural transformation changes the system type uniformly on each carrier, and therefore induces a functor between the corresponding coalgebra categories \citep[Theorem~15.1]{rutten2000universal}:

\begin{lemma}
Given two endofunctors $F, G: \cC \to \cC$ on a category $\cC$, a natural transformation $\alpha: F \nat G$ induces a functor $\cC_\alpha: \cC_F \to \cC_G$ between the corresponding coalgebra categories:
\begin{equation}
\begin{array}{cccc}
\cC_\alpha: & \cC_F & \to & \cC_G
\\
& (A, a: A \to FA) & \mapsto & (A, \alpha_A \compL a: A \xto{a} FA \xto{\alpha_A} GA)
\\
& f: (A, a) \to (B, b) & \mapsto & f: (A, \alpha_A \compL a) \to (B, \alpha_B \compL b)
\end{array}
\end{equation}
\begin{equation}
\begin{tikzcd}
A
\arrow[r, "f"]
\arrow[d, "a"']
&
B
\arrow[d, "b"]
\\
FA
\arrow[r, "Ff"']
\arrow[d, "\alpha_A"']
&
FB
\arrow[d, "\alpha_B"]
\\
GA
\arrow[r, "Gf"']
&
GB
\end{tikzcd}
\end{equation}
\end{lemma}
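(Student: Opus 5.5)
We need to show that $\cC_\alpha$ is well defined on objects and on morphisms, and then that it is a functor. On objects nothing needs checking: for any $F$-coalgebra $(A, a)$, the composite $\alpha_A \compL a: A \to GA$ is a $\cC$-morphism into $GA$, so it is a $G$-coalgebra on the same carrier $A$.

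The substantive step is on morphisms. Let $f: (A, a) \to (B, b)$ be an $F$-coalgebra homomorphism, i.e.\ $Ff \compL a = b \compL f$. We must show that $f$ is a $G$-coalgebra homomorphism from $(A, \alpha_A \compL a)$ to $(B, \alpha_B \compL b)$, i.e.\ that
\begin{equation}
Gf \compL \alpha_A \compL a = \alpha_B \compL b \compL f .
\end{equation}
This comes from pasting the two squares in the displayed diagram. First use the naturality of $\alpha$ (\cref{def:natural-transformation}) to rewrite $Gf \compL \alpha_A$ as $\alpha_B \compL Ff$. This gives $Gf \compL \alpha_A \compL a = \alpha_B \compL Ff \compL a$. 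Then apply the $F$-homomorphism condition $Ff \compL a = b \compL f$ to get $\alpha_B \compL b \compL f$, as required. Associativity of composition justifies the regrouping.

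Functoriality then holds on the nose, because $\cC_\alpha$ sends each morphism to the same underlying $\cC$-morphism. Composition and identities in $\cC_F$ and $\cC_G$ are both inherited from $\cC$ (the forgetful functors are faithful). So $\cC_\alpha(g \compL f) = g \compL f = \cC_\alpha(g) \compL \cC_\alpha(f)$ and $\cC_\alpha(\id_A) = \id_A$.

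There is no real obstacle. The only point needing care is the direction of the naturality square, namely that we need $Gf \compL \alpha_A = \alpha_B \compL Ff$, and the order in which the two rewritings are applied. The rest is bookkeeping.
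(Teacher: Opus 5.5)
Your proof is correct and follows the paper's argument: the paper proves the lemma by pasting the $F$-coalgebra homomorphism square $Ff \compL a = b \compL f$ on top of the naturality square $\alpha_B \compL Ff = Gf \compL \alpha_A$, exactly as in the displayed diagram (deferring to Rutten's Theorem~15.1). Your check of functoriality, via $\cC_\alpha$ being the identity on underlying morphisms, spells out a step the paper leaves implicit.
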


However, not all functors between coalgebra categories are induced by natural transformations.
Being induced by a natural transformation is a strong condition: it forces the carrier to stay the same and the change of structure to be uniform.

The examples below use this distinction to separate changes of system type from state abstraction.
They transform the type of system while keeping the carrier fixed, before any state abstraction is added.
We proceed from uniform interface changes to transformations that involve closed-loop feedback or agent memory.


\subsection{Basic interface transformations}

A simple example is the \emph{action translation} functor, which changes the input interface of the system by translating actions from one set to another:

\begin{example}[Action translation]
An action translation $g: \Actions \to \Actions'$ transforms a system $\transition: \States \to \States^{\Actions'}$ into a system $\States^g \compL \transition: \States \to \States^\Actions$:
\begin{equation}
\begin{tikzpicture}
\node (Si) at (0, 3) {$\States$};
\node (Sj) at (2, 3) {$\States$};
\node (A') at (1, 2) {$\Actions'$};
\node (A) at (1, 0) {$\Actions$};

\node (transition) [morphism] at (1, |- Si) {$\transition$};
\node (translation) [morphism] at (1, 1) {$g$};

\draw [->] (Si) -- (transition);
\draw [->] (transition) -- (Sj);

\draw [->] (A) -- (translation);
\draw [->] (translation) -- (A');
\draw [->] (A') -- (transition);

\draw [dotted] (Si |- A') -- (A') -- (Sj |- A');
\end{tikzpicture}
\end{equation}

A functor between coalgebra categories is given by
\begin{equation}
\begin{array}{ccccc}
\cC_{(-)^{\Actions'}}
& \to &
\cC_{(-)^\Actions}
\\
(\States, \transition: \States \to \States^{\Actions'})
& \mapsto &
(\States, \States^g \compL \transition: \States \to \States^\Actions)
\\
f: (\States, \transition) \to (\States', \transition')
& \mapsto &
f: (\States, \States^g \compL \transition) \to (\States', {\States'}^g \compL \transition')
\end{array}
\end{equation}
where $\States^g: \States^{\Actions'} \to \States^\Actions \defeq (-) \compL g$ is the precomposition map.
\end{example}

The new system has the same carrier $\States$, but presents a different action interface.
When the translated system receives an action $a \in \Actions$, it runs the original system with action $g(a) \in \Actions'$.
Thus the map $g$ can identify several actions of the new interface, or leave some actions of the original interface unused.
In either case, no states are merged; only the way the system consumes actions has changed.
Thus action translation changes the input interface uniformly.
It is the simplest interface adapter that will reappear in the discussion of MDP homomorphisms.


\subsection{Closing the loop with a policy}

Another basic transformation closes an input interface by feeding back an action chosen from the current observation.

\begin{example}[Closed-loop system]
A policy $\policy: \Observations \to \Actions$ transforms an open-loop system $\angles{\transition, \observation}: \States \to \States^\Actions \times \Observations$ into a closed-loop system $p_\policy(\transition, \observation): \States \to \States \times \Observations$:
\begin{equation}
\begin{tikzpicture}
\node (Si) at (0, 4) {$\States$};
\node (Sj) at (3, 4) {$\States$};
\node (O) at (1, 2) {$\Observations$};
\node (A) at (2, 2) {$\Actions$};
\node (O') at (1, 0) {$\Observations$};

\node (copyS) [diagonal] at (O |- Si) {};
\node (copyO) [diagonal] at (O |-, 1) {};
\node (observation) [morphism] at (O |-, 3) {$\observation$};
\node (transition) [morphism] at (2, |- Si) {$\transition$};
\node (policy) [morphism] at (2, 1) {$\policy$};

\draw [->] (Si) -- (copyS);
\draw [->] (copyS) -- (transition);
\draw [->] (transition) -- (Sj);

\draw [->] (copyS) -- (observation);
\draw [->] (observation) -- (O);
\draw [->] (O) -- (copyO);
\draw [->] (copyO) -- (O');

\draw [->] (copyO) -- (policy);
\draw [->] (policy) -- (A);
\draw [->] (A) -- (transition);

\draw [dotted] (Si |- O) -- (O) -- (A) -- (Sj |- A);
\end{tikzpicture}
\end{equation}

A functor between coalgebra categories is given by
\begin{equation}
\begin{array}{ccccc}
\cC_{(-)^\Actions \times \Observations}
& \to &
\cC_{- \times \Observations}
\\
(\States, \angles{\transition, \observation}: \States \to \States^\Actions \times \Observations)
& \mapsto &
(\States, p_\policy(\transition, \observation): \States \to \States \times \Observations)
\\
f: (\States, \angles{\transition, \observation}) \to (\States', \angles{\transition', \observation'})
& \mapsto &
f: (\States, p_\policy(\transition, \observation)) \to (\States', p_\policy(\transition', \observation'))
\end{array}
\end{equation}
where 
\begin{align}
p_\policy(\transition, \observation)
\defeq{}&
\States
\xto{\angles{\id_\States, \observation}}
\States \times \Observations
\xto{\id_\States \times \angles{\policy, \id_\Observations}}
\States \times (\Actions \times \Observations)
\xto{\iso}
(\States \times \Actions) \times \Observations
\xto{\transition \times \id_\Observations} \States \times \Observations,
\\
={}&
\States
\xto{\angles{\transition, \observation}}
\States^\Actions \times \Observations
\xto{\id_{\States^\Actions} \times \angles{\policy, \id_\Observations}}
\States^\Actions \times (\Actions \times \Observations)
\xto{\iso}
(\States^\Actions \times \Actions) \times \Observations
\xto{\epsilon_\States \times \id_\Observations} \States \times \Observations,
\end{align}
and $\epsilon_\States: \States^\Actions \times \Actions \to \States$ is the evaluation map.
\end{example}

This is the construction used in \cref{sec:systems} to obtain closed-loop systems from open-loop systems and policies.
Together, action translation and closed-loop feedback give the two ingredients needed for the observation-dependent action translation below.


\subsection{State-action abstraction: MDP homomorphism}

One relevant notion in the RL literature is \emph{MDP homomorphism} \citep{ravindran2001symmetries, ravindran2002model}, which consists of a state map $\States \to \States'$ together with a state-dependent action map $\States \times \Actions \to \Actions'$.
This concept has been used for \emph{state-action abstraction} \citep{ravindran2003smdp, ravindran2004approximate, ravindran2004algebraic, taylor2008bounding, van2020plannable, van2020mdp, rezaei2022continuous, panangaden2024policy, bakirtzis2025categorical}.
This is not the same construction as the coalgebra homomorphism in \cref{def:coalgebra-homomorphism}.
A coalgebra homomorphism compares systems of the same type, so the input-output interface, including the action and observation spaces, remains fixed.
An MDP homomorphism combines state abstraction with an interface adapter.
We separate these two components in this paper: the abstraction map is handled by coalgebra homomorphisms, while the action map induces a natural transformation between systems of different types.

The following example records the interface adapter.
It combines the two basic transformations above: as in action translation, it changes the action interface; as in closed-loop feedback, the translation can depend on the current output observation.
When the observation is the full state, this specializes to the state-dependent action maps used in MDP homomorphisms.

\begin{example}[Observation-dependent action translation]
An observation-dependent action translation $g: \Observations \times \Actions \to \Actions'$ transforms a system $\angles{\transition, \observation}: \States \to \States^{\Actions'} \times \Observations$ into a system $p_g(\transition, \observation): \States \to \States^\Actions \times \Observations$:
\begin{equation}
\begin{tikzpicture}
\node (Si) at (0, 4) {$\States$};
\node (Sj) at (3, 4) {$\States$};
\node (O') at (1, 2) {$\Observations$};
\node (A') at (2, 2) {$\Actions'$};
\node (O) at (1, 0) {$\Observations$};
\node (A) at (2, 0) {$\Actions$};

\node (copyS) [diagonal] at (O |- Si) {};
\node (copyO) [diagonal] at (O |-, 1) {};
\node (observation) [morphism] at (O |-, 3) {$\observation$};
\node (transition) [morphism] at (2, |- Si) {$\transition$};
\node (translation) [morphism] at (2, 1) {$g$};

\draw [->] (Si) -- (copyS);
\draw [->] (copyS) -- (transition);
\draw [->] (transition) -- (Sj);

\draw [->] (copyS) -- (observation);
\draw [->] (observation) -- (O');
\draw [->] (O') -- (copyO);
\draw [->] (copyO) -- (O);

\draw [->] (copyO) -- (translation);
\draw [->] (A) -- (translation);
\draw [->] (translation) -- (A');
\draw [->] (A') -- (transition);

\draw [dotted] (Si |- O') -- (O') -- (A') -- (Sj |- A');
\end{tikzpicture}
\end{equation}

A functor between coalgebra categories is given by
\begin{equation}
\begin{array}{ccccc}
\cC_{(-)^{\Actions'} \times \Observations}
& \to &
\cC_{(-)^\Actions \times \Observations}
\\
(\States, \angles{\transition, \observation}: \States \to \States^{\Actions'} \times \Observations)
& \mapsto &
(\States, p_g(\transition, \observation): \States \to \States^\Actions \times \Observations)
\\
f: (\States, \angles{\transition, \observation}) \to (\States', \angles{\transition', \observation'})
& \mapsto &
f: (\States, p_g(\transition, \observation)) \to (\States', p_g(\transition', \observation'))
\end{array}
\end{equation}
where
\begin{align}
p_g(\transition, \observation)
\defeq{}&
\States
\xto{\angles{\transition, \observation}}
\States^{\Actions'} \times \Observations
\xto{\id_{\States^{\Actions'}} \times \angles{g, \id_\Observations}}
\States^{\Actions'} \times ({\Actions'}^\Actions \times \Observations)
\xto{\iso}
(\States^{\Actions'} \times {\Actions'}^\Actions) \times \Observations
\xto{{\compL} \times \id_\Observations}
\States^\Actions \times \Observations,
\end{align}
and ${\compL}: \States^{\Actions'} \times {\Actions'}^\Actions \to \States^\Actions$ is the composition map.
\end{example}

Thus the observation-dependent action translation functor can be read as the interface transformation part of MDP homomorphisms \citep{ravindran2001symmetries, ravindran2002model}.
Combined with a coalgebra homomorphism on the state carrier, it gives the two ingredients of state-action abstraction without conflating them.

Relatedly, since pullback is simply precomposition, given a policy $\policy_\Representations: \Representations \to \Actions$ on the representation space $\Representations$, we can pull it back along the encoder $\encoder: \States \to \Representations$ to obtain a policy $\pullback{\encoder}\policy_\Representations: \States \to \Actions \defeq \policy_\Representations \compL \encoder$ on the concrete state space $\States$.
This \emph{pullback policy} construction has been used in \citet{ravindran2001symmetries, ravindran2002model, rezaei2022continuous}.
Together with a \emph{section} $\States \times \Actions' \to \Actions$ (a right inverse of the action map $\States \times \Actions \to \Actions'$ in the second component), it was referred to by \citet[Definition~7]{panangaden2024policy} as a \emph{lifted policy} in the context of MDP homomorphism.
Note that this \say{lifting} is an RL term for concretizing abstract actions and is unrelated to the bundle lifting discussed in \cref{ssec:lifting}.


\subsection{State augmentation: forward statistic aggregation}

The previous examples modify how an environment exposes or consumes its interface.
We next turn to state augmentation: an observation-producing system is paired with a forward statistic state that is updated chronologically from the observations it emits.

\begin{example}[Forward statistic aggregation]
Let $\AgentStates$ be a set of auxiliary statistic states.
A statistic update map $\update: \Observations \times \AgentStates \to \AgentStates$ updates such a statistic state from a new observation and transforms a system $\angles{\transition, \observation}: \States \to \States \times \Observations$ into an augmented system $p(\transition, \observation, \update): \States \times \AgentStates \to \States \times \AgentStates$:
\begin{equation}
\begin{tikzpicture}
\node (Si) at (0, 3) {$\States$};
\node (Sj) at (3, 3) {$\States$};
\node (O) at (1, 1) {$\Observations$};
\node (Fi) at (0, 0) {$\AgentStates$};
\node (Fj) at (3, 0) {$\AgentStates$};

\node (copyS) [diagonal] at (O |- Si) {};
\node (observation) [morphism] at (O |-, 2) {$\observation$};
\node (transition) [morphism] at (2, |- Si) {$\transition$};
\node (updateF) [morphism] at (O |-, 0) {$\update$};

\draw [->] (Si) -- (copyS);
\draw [->] (copyS) -- (transition);
\draw [->] (transition) -- (Sj);

\draw [->] (copyS) -- (observation);
\draw [->] (observation) -- (O);
\draw [->] (O) -- (updateF);

\draw [->] (Fi) -- (updateF);
\draw [->] (updateF) -- (Fj);

\draw [dotted] (Si |- O) -- (O) -- (Sj |- O);
\end{tikzpicture}
\end{equation}

A bifunctor between coalgebra categories is given by
\begin{equation}
\setlength{\arraycolsep}{2pt}
\begin{array}{ccccc}
\cC_{- \times \Observations}
& \times &
\cC_{(-)^\Observations}
& \to &
\cC_{\id_\cC}
\\
(\States, \angles{\transition, \observation}: \States \to \States \times \Observations)
&&
(\AgentStates, \update: \AgentStates \to \AgentStates^\Observations)
& \mapsto &
(\States \times \AgentStates, p(\transition, \observation, \update): \States \times \AgentStates \to \States \times \AgentStates)
\\
f: (\States, \angles{\transition, \observation}) \to (\States', \angles{\transition', \observation'})
&&
f_{\update}: (\AgentStates, \update) \to (\AgentStates', \update')
& \mapsto &
f \times f_{\update}: (\States \times \AgentStates, p(\transition, \observation, \update)) \to (\States' \times \AgentStates', p(\transition', \observation', \update'))
\end{array}
\end{equation}
where
\begin{align}
p(\transition, \observation, \update)
\defeq{}&
\States \times \AgentStates
\xto{\angles{\transition, \observation} \times \id_\AgentStates}
(\States \times \Observations) \times \AgentStates
\xto{\iso}
\States \times (\Observations \times \AgentStates)
\xto{\id_\States \times \update}
\States \times \AgentStates,
\\
={}&
\States \times \AgentStates
\xto{\angles{\transition, \observation} \times \update} (\States \times \Observations) \times \AgentStates^\Observations
\xto{\iso}
\States \times (\AgentStates^\Observations \times \Observations)
\xto{\id_\States \times \epsilon_\Observations}
\States \times \AgentStates,
\end{align}
and $\epsilon_\Observations: \AgentStates^\Observations \times \Observations \to \AgentStates$ is the evaluation map.
\end{example}

This forward statistic aggregation functor combines two systems: one that produces observations and one that updates a statistic from them.
The observation may contain more information than this statistic update needs; any extraction of the update-relevant part is left implicit in $\update$.
Unlike the backward reward-aggregation closure in \cref{ex:reward-aggregation}, this is a forward, chronological construction: it augments the carrier from $\States$ to $\States \times \AgentStates$ and updates the second component along the generated observation stream.
Examples include belief-state filters in POMDPs \citep{kaelbling1998planning}, predictive state representations \citep{littman2001predictive, singh2004predictive}, temporal monitors and reward machines \citep{camacho2019ltl, icarte2018using, icarte2022reward}, and budget or safety-accounting statistics \citep{altman1999constrained, wachi2024survey}.
In all of these cases, the statistic records information about the observation history, but it does not feed actions back into the environment in this construction.


\subsection{Stateful agent}

Forward statistic aggregation already carries an auxiliary statistic state, but that state is passive with respect to the environment input: it receives observations and does not choose actions.
A stateful agent uses memory in both directions: observations update the memory, and the memory produces future actions, as shown below (cf.~\cref{fig:policy-dependent-transition}):
\begin{equation}
\begin{tikzpicture}[x=1.2cm, y=0.8cm]
\node (Si) at (0, 5) {$\States$};
\node (Sj) at (5, 5) {$\States$};
\node (O) at (2, 3) {$\Observations$};
\node (A) at (3, 3) {$\Actions$};
\node (Mi) at (0, 1) {$\AgentStates$};
\node (Mj) at (5, 1) {$\AgentStates$};

\node (copyS) [diagonal] at (O |- Si) {};
\node (transition) [morphism] at (A |- Si) {$\transition$};
\node (observation) [morphism] at (O |-, 4) {$\observation$};

\node (copyM) [diagonal] at (A |- Mi) {};
\node (update) [morphism] at (O |- Mi) {$\update$};
\node (policy) [morphism] at (A |-, 2) {$\policy$};

\draw [->] (Si) -- (copyS);
\draw [->] (copyS) -- (transition);
\draw [->] (transition) -- (Sj);

\draw [->] (Mi) -- (update);
\draw [->] (update) -- (copyM);
\draw [->] (copyM) -- (Mj);

\draw [->] (copyS) -- (observation);
\draw [->] (observation) -- (O);
\draw [->] (O) -- (update);

\draw [->] (copyM) -- (policy);
\draw [->] (policy) -- (A);
\draw [->] (A) -- (transition);

\draw [dotted] (1, |- O) -- (O) -- (A) -- (4, |- A) -- (4, 5.5) -- (1, 5.5) -- (1, |- O);
\draw [dotted] (4, |- A) -- (4, 0.5) -- (1, 0.5) -- (1, |- O);

\node [anchor=south] at (2.5, 5.5) {\small stateful environment};
\node [anchor=north] at (2.5, 0.5) {\small stateful agent};
\end{tikzpicture}
\end{equation}

The environment carries state $\States$, evolves according to the selected action, and emits observations in $\Observations$.
The agent carries memory $\AgentStates$, updates that memory from each observation, and uses the updated memory to choose future actions.
This makes the agent state an explicit part of the closed-loop system rather than an implicit implementation detail of the policy.

In classical finite-state controller architectures, this move corresponds to the need for non-reactive policies, reducing the problem to learning finite-state automata \citep{meuleau1999learning}; see also \citet{amato2010finite} for variations using Mealy rather than Moore automata, and \citet{koul2019learning} for implementations using RNNs.
Belief states in POMDPs \citep{kaelbling1998planning} provide a different, yet consistent view, where these memory states correspond to beliefs about environment states and are updated by Bayesian filtering.
Predictive state representations \citep{littman2001predictive} go further by making no explicit assumptions about the environment state space.
They generate memory states and transitions by compressing finite histories of action-observation pairs, a process that can be seen as approximating causal states and their transitions, giving $\epsilon$-transducers of bi-infinite stochastic channels \citep{rosas2025ai}.
This aligns with the critique of the \say{environment spotlight} in \citet{abel2024three}: if the formal model only exposes the environment state, then agent state and adaptation are easy to hide inside the policy.

Note that this differs from forward statistic aggregation, where the observation-producing system feeds a separate statistic update, giving a one-way construction that augments the carrier with auxiliary history.
In a stateful agent, the memory is not only a record of past observations: it also determines future actions, and those actions affect the future environment state and future observations.
Thus the relevant closed system contains both environment state and agent memory, and abstractions may need to preserve behavior that depends on their interaction.
This issue appears, for instance, in bisimulations for MDPs, which often require rewards \citep{givan2003equivalence, ravindran2003smdp, ravindran2004approximate, ravindran2004algebraic, taylor2008bounding} or value \citep{givan2003equivalence, li2006towards} to be preserved so that the same actions or policies are produced.
It also appears for POMDPs, which are either transformed into a belief MDP first and then coarse grained using standard MDP bisimulations \citep{castro2009equivalence}, or coarse grained directly with extensions of MDP bisimulations that preserve rewards \citep{rosas2025ai}.
These treatments, however, remain mostly feed-forward, in the sense that they simply package properties of the environment into an abstraction for an agent to use under certain constraints.
In this sense, the reciprocal feedback described above is not captured by the one-way functorial construction used for forward statistic aggregation.
A fully categorical treatment of this wiring would likely require a more general formalism, such as categorical systems theory with wiring diagrams, lenses, or optics where the outputs of each component can be connected to the inputs of the other, and a double categorical treatment that generalizes natural transformations to squares, or 2-cells; see for instance \citep{myers2023categorical, capucci2025notes, libkind2025towards}.
We leave that extension to future work.


\subsection{Temporal abstraction: hierarchical reinforcement learning and options}

The classical \emph{options} framework formalizes \emph{temporal abstraction} by adding temporally extended courses of action to a primitive MDP \citep{sutton1999between, precup2000temporal}.
An option is defined by a triple: the states where the option can be initiated, the intra-option policy over primitive actions, and the termination condition \citep[Section 2]{sutton1999between}.
An option can be selected only at states in its initiation set.
Once selected, it persists across primitive time steps.
The intra-option policy chooses primitive actions, the environment evolves, and after each new state or observation the termination condition, the agent decides whether control remains with the same option or returns to a policy over options.

In our framework, the active option can be treated as a part of an agent state.
Thus options fit the stateful-agent picture above by specializing the agent memory $\AgentStates$ to a set of option indices $\OptionIndices$, as shown below:
\begin{equation}
\begin{tikzpicture}[x=1.2cm, y=0.8cm]
\node (Si) at (0, 5) {$\States$};
\node (Sj) at (5, 5) {$\States$};
\node (O) at (2, 3) {$\Observations$};
\node (A) at (3, 3) {$\Actions$};
\node (Ii) at (0, 1) {$\OptionIndices$};
\node (Ij) at (5, 1) {$\OptionIndices$};

\node (copyS) [diagonal] at (O |- Si) {};
\node (transition) [morphism] at (A |- Si) {$\transition$};
\node (observation) [morphism] at (O |-, 4) {$\observation$};
\node (copyO) [diagonal] at (O |-, 2) {};

\node (copyI) [diagonal] at (A |- Ii) {};
\node (update) [morphism] at (O |- Ii) {$\update$};
\node (policy) [morphism] at (A |-, 2) {$\policy$};

\draw [->] (Si) -- (copyS);
\draw [->] (copyS) -- (transition);
\draw [->] (transition) -- (Sj);

\draw [->] (Ii) -- (update);
\draw [->] (update) -- (copyI);
\draw [->] (copyI) -- (Ij);

\draw [->] (copyS) -- (observation);
\draw [->] (observation) -- (O);
\draw [->] (O) -- (copyO);
\draw [->] (copyO) -- (update);

\draw [->] (copyI) -- (policy);
\draw [->] (copyO) -- (policy);
\draw [->] (policy) -- (A);
\draw [->] (A) -- (transition);

\draw [dotted] (1, |- O) -- (O) -- (A) -- (4, |- A) -- (4, 5.5) -- (1, 5.5) -- (1, |- O);
\draw [dotted] (4, |- A) -- (4, 0.5) -- (1, 0.5) -- (1, |- O);

\node [anchor=south] at (2.5, 5.5) {\small stateful environment};
\node [anchor=north] at (2.5, 0.5) {\small agent with options};
\end{tikzpicture}
\end{equation}

Following the index-based formulation \citep{klissarov2025discovering}, the policy becomes an intra-option policy $\policy: \States \times \OptionIndices \to \Probability{\Actions}$, while the update map $\update: \Observations \times \OptionIndices \to \OptionIndices$ summarizes how options terminate and are selected:
\begin{equation}
o \update i
\defeq
\begin{cases}
\text{a new option index}
& \text{if option $i$ terminates given observation $o$},
\\
\text{option $i$} & \text{otherwise}.
\end{cases}
\end{equation}
This option-selection rule may itself be learned, so it can be a part of the agent rather than part of the environment dynamics.
When observations are full states, this recovers the usual Markov option formulation.
With richer histories, it matches the semi-Markov generalization developed by \citet{precup2000temporal}.

The essential point is that the option index is internal control state that indexes a subpolicy.
Fixing $i$ gives the primitive policy $\policy(-, i)$ until the update switches to another index.
This is the same mechanism used by the thought states of \citet{hanna2026can}: the internal state selects which state-dependent policy is currently executed, while changes to that internal state do not directly change the environment state.
The macro-action or SMDP view is therefore a derived, temporally aggregated description of this closed-loop execution.
One runs the primitive system until the active option terminates, then records the resulting state, elapsed duration, and accumulated reward.
SubMDP and hierarchy-decomposition methods are closely related and pursue the same temporal abstraction direction by exposing regions, exits, or other environment structure \citep{hengst2002discovering, wen2020efficiency}.
We leave further investigation of the connections between state abstraction and temporal abstraction to future work.

\end{document}